\documentclass{informs4}
\RequirePackage{tgtermes}
\RequirePackage{newtxtext}
\RequirePackage{newtxmath}
\RequirePackage{bm}
\RequirePackage{endnotes}

\OneAndAHalfSpacedXII

\usepackage{mathtools}
\usepackage{algorithm}
\usepackage{algpseudocode}
\usepackage{booktabs}
\usepackage{tikz}
\usetikzlibrary{positioning,arrows.meta}
\usepackage{enumitem}

\usepackage{natbib}
 \bibpunct[, ]{(}{)}{,}{a}{}{,}%
 \def\bibfont{\small}%
\EquationsNumberedThrough
\TheoremsNumberedThrough
\ECRepeatTheorems

\MANUSCRIPTNO{}

\newcommand{\E}{\mathbb{E}}

\newcommand{\Prob}{\mathbb{P}}

\newcommand{\Attn}{\mathrm{Attn}}

\makeatletter
\long\def\theARTICLETOPLEFT{}%
\def\theARTICLETOPRIGHT{}%
\def\setfirstoddFOOT{}%
\def\setfirstevenFOOT{}%
\def\ps@headings{%
  \let\@oddfoot\@empty\let\@evenfoot\@empty
  \def\@evenhead{\thepage\hfil}%
  \def\@oddhead{\hfil\thepage}%
}%
\long\def\ARTICLEAUTHORS#1{\gdef\theARTICLEAUTHORS{%
  \begin{center}\HOOKa\vspace*{0pt}#1\vspace*{0pt}\end{center}}}%
\def\AUTHOR#1{%
  \begin{center}
  \AUTHORfont\HD{15}{0}#1\HD{0}{6}\relax
  \end{center}}
\def\AFF#1{%
  \begin{center}
  \AFFfont{#1}\relax
  \vskip1.6pt
  \end{center}}
\def\theARTICLETITLE{%
  \HOOKtop
  \begin{center}
  \vspace*{0pt}%
  \TITLEfont\HD{24}{0}\sffamily\bfseries\theTITLE\HD{0}{15}%
  \end{center}}
\makeatother

\begin{document}

\RUNAUTHOR{Emadi}
\RUNTITLE{The Critical Horizon}
\TITLE{The Critical Horizon: Inspection Design Principles for
  Multi-Stage Operations and Deep Reasoning}
\ARTICLEAUTHORS{%
\AUTHOR{Seyed Emadi}
\AFF{Kenan-Flagler Business School, University of North Carolina at Chapel Hill,
  \EMAIL{seyed\_emadi@kenan-flagler.unc.edu}}
}
\ABSTRACT{%
Manufacturing lines, service journeys, supply chains, and AI reasoning
chains share a common challenge: attributing a terminal outcome to the
intermediate stage that caused it.  We establish an information-theoretic
barrier to this \emph{credit assignment problem}: the signal connecting
early steps to final outcomes decays exponentially with depth, creating
a \emph{critical horizon} beyond which reliable learning from
endpoint data alone requires exponentially many samples.  We prove four results.  First, a Signal Decay
Bound: sample complexity for attributing outcomes to early stages grows
exponentially in the number of intervening steps.  Second, Width Limits:
parallel rollouts provide only logarithmic relief, with correlation
capping the effective number of independent samples.  Third, an
Objective Mismatch: additive reward aggregation optimizes the wrong
quantity when sequential validity requires all steps to be correct.
Fourth, Optimal Inspection Design: uniform checkpoint spacing is
minimax-optimal under homogeneous signal attenuation, while a greedy
algorithm yields optimal non-uniform schedules under heterogeneous
attenuation.  Together, these results provide a common analytical
foundation for inspection design in operations and supervision design
in AI.
}%

\KEYWORDS{credit assignment, critical horizon, deep reasoning, process supervision, signal decay, inspection design}

\maketitle


\section{Introduction}
\label{sec:intro}

When a semiconductor wafer fails final testing, which of the hundred preceding fabrication steps caused the defect? When a customer churns after a multi-touchpoint journey, which interaction was responsible? When a multi-tier supply chain delivers a defective product, which supplier introduced the flaw? These are instances of the \emph{credit assignment problem}: attributing terminal outcomes to intermediate stages in deep sequential processes.

Classical inspection theory \citep{lindsay1964optimal, raz1986economic, tapiero1987quality} addresses this challenge through intermediate checkpoints, optimizing their placement to balance inspection cost against defect-escape risk. But intermediate inspection is not always an option: a proof step that appears irregular may be valid in context; a component may only be testable once assembled; a customer's reaction to one touchpoint depends on what follows. In such cases, validity is a property of the whole sequence, and terminal inspection is not merely cheaper but necessary. Moreover, even when terminal inspection is the only option, the classical literature implicitly assumes that defect signals persist through the production chain: with enough terminal observations, any stage's quality can eventually be inferred. We show this assumption fails for deep processes, where signals decay exponentially with depth.

The challenge of attributing outcomes to early stages under terminal-only observation has gained particular urgency in AI, where large language models generate extended reasoning chains but only the final answer can be verified. Empirically, reinforcement learning (RL) value networks trained on outcome rewards ``collapse'': they converge to near-constant predictions for early steps, as if outcomes carry no information about the chain's beginning \citep{vineppo2024}. Process reward models that provide step-level supervision dramatically improve performance \citep{lightman2023lets}, but require expensive human annotation. When is such annotation necessary, and when does outcome supervision suffice?

This paper answers these questions by establishing information-theoretic foundations for credit assignment. Using Strong Data Processing Inequalities (SDPIs), we prove that signals connecting early steps to terminal outcomes decay exponentially with depth. The analysis reveals a sharp threshold: there exists a \emph{critical horizon} $H_{\mathrm{crit}}$ such that steps within this distance of the outcome can be evaluated from endpoint data, while steps beyond require a number of samples that grows exponentially in the gap $H-t$. This is not an algorithmic limitation but an information-theoretic barrier. The critical horizon scales logarithmically with sample size and signal strength but inversely with a per-stage contraction rate that captures how much each processing step blurs the distinction between good and bad earlier steps. In AI reasoning systems, sampling temperature controls this blurring: higher temperature produces more diverse continuations, making it harder to trace outcomes back to early steps. In manufacturing, the analogous driver is the overlap among downstream quality-grade distributions: variability-reducing steps (polishing, annealing, rework), pooling operations (lot mixing, kitting, assembly), and coarse metrology bins all cause products of different upstream grades to produce similar distributions of grades at the next stage, fading the signal about the original grade. The critical horizon formula thus determines whether intermediate feedback is a cost-saving measure or a \emph{feasibility requirement}.

A key insight underlying our analysis is that credit assignment targets \emph{abstract states}, not raw observations. In manufacturing, the observable state at production stage $t$ consists of detailed sensor readings (temperatures, pressures, vibration spectra), but credit assignment asks whether the stage produced output that was within specification, marginal, or defective. Multiple sensor patterns correspond to the same quality grade. In AI reasoning, the observable state is the token sequence generated after $t$ reasoning steps, but credit assignment asks whether the logical content is valid, not which specific words were used. The phrases ``Therefore $x=5$'' and ``Thus we conclude $x$ equals $5$'' use different tokens but express the same mathematical content. In both settings, the space of abstract states (quality grades, semantic content) is smaller than the space of raw observations (sensor readings, token sequences).

This many-to-one mapping from observations to abstract states is the mechanism behind the per-stage contraction introduced above: when multiple observable states collapse to the same abstract state, transitions from distinct starting points lead to overlapping distributions over next states, and statistical distinguishability shrinks. This occurs even when raw dynamics are deterministic (as with token generation in language models), because the induced dynamics on the abstract space contract. The implication for learning systems is immediate: when signal decay is severe, outcome distributions conditioned on different early states become nearly identical, and training data contains almost no information about which early states are better. Under the assumptions of our model, the critic collapse phenomenon observed in AI systems \citep{vineppo2024} is \emph{statistically optimal behavior} given the available information, and not a bug to be fixed but a symptom of a fundamental limit.

A natural response to signal decay is to generate more observations: multiple parallel trajectories from the same starting point, as in replicated testing or width-based methods like GRPO (Group Relative Policy Optimization; \citealp{deepseek2024}). We show this strategy provides only limited relief: correlation among rollouts caps the effective width at $1/\rho$, where $\rho$ is the pairwise correlation between rollout outcomes, and adding more rollouts yields diminishing returns.

For steps beyond the critical horizon, process supervision becomes necessary, raising a new question: how should step-level feedback be aggregated? Sequential validity requires \emph{all} steps to be correct, a multiplicative rather than additive criterion. The mismatch is stark: a 100-step process with 99\% per-step accuracy has an expected correctness score of 99, yet the probability that all 100 steps are correct is only $0.99^{100} \approx 0.37$. Systems that optimize additive objectives learn to produce chains that are ``mostly correct'' but nonetheless invalid. We analyze this objective mismatch and propose curriculum strategies that transition from learnable additive objectives to correct multiplicative ones as policy quality improves.

Once the need for process supervision is established, a natural question arises: where should intermediate checkpoints be placed? Each checkpoint resets the signal decay clock, so a step immediately before a checkpoint is easy to evaluate, while a step far from any checkpoint is hard. The hardest step to evaluate under any schedule is the one farthest from its next checkpoint, and the design goal is to make this worst case as mild as possible. We derive inspection schedules that are minimax-optimal in this sense: they minimize the worst-case sample complexity across all steps. When all stages attenuate signal at the same rate, uniform checkpoint spacing is minimax-optimal. When different stages attenuate at different rates, a greedy algorithm that places each successive checkpoint as far as the information budget allows yields optimal non-uniform schedules, automatically concentrating inspections where signal loss is fastest. In either case, achieving polynomial sample complexity requires inspection density that scales nearly linearly with process depth: deeper processes demand proportionally more intermediate verification.

We develop a detailed application to large language model reasoning (Section~\ref{sec:llm-application} in the Electronic Companion), where the state abstraction maps token sequences to semantic content and sampling temperature controls the contraction coefficient. We validate the theoretical predictions through numerical experiments on both synthetic Markov chains and LLM reasoning chains on GSM8K (a grade-school math benchmark); full experimental details appear in Appendix~\ref{app:numerical-details}.

\paragraph{Summary of contributions.}
We establish four main results that transform the intuition ``credit assignment in long processes is hard'' into precise, actionable principles.

\begin{itemize}
\item \textbf{Signal Decay Bound} (Section~\ref{sec:distinguish}): Information about any given step decays exponentially when propagating to the terminal outcome. The resulting sample complexity bounds are tight, establishing that exponential scaling is fundamental. This yields a closed-form critical horizon formula.

\item \textbf{Width Limits} (Section~\ref{sec:width}): Parallel rollouts reduce estimator variance, but correlation caps effective width at $1/\rho$, extending the critical horizon only logarithmically.

\item \textbf{Objective Mismatch} (Section~\ref{sec:bellman}): Additive reward aggregation optimizes the wrong quantity when sequential validity requires all steps to be correct. We propose curriculum strategies that interpolate between learnable and correct objectives.

\item \textbf{Optimal Inspection Design} (Section~\ref{sec:inspection-design}): Uniform checkpoint spacing is minimax-optimal under homogeneous contraction; a greedy algorithm yields optimal non-uniform schedules under heterogeneous contraction. Polynomial sample complexity requires inspection density scaling as $\Omega(H / \log H)$.
\end{itemize}

The framework yields concrete guidance for both AI and operations practitioners. For AI systems, it explains why process reward models improve performance (they provide signal beyond the critical horizon), why methods like VinePPO (value-based intermediate-step process supervision; \citealp{vineppo2024}) succeed (they reduce effective depth below $H_{\mathrm{crit}}$), and why width-based methods face diminishing returns (correlation caps effective width). For operations managers, it provides a direct diagnostic. Consider a 50-stage semiconductor process. With contraction rate $\eta = 0.85$, signal strength $\Delta^2 = 0.2$, and 10,000 test wafers, $H_{\mathrm{crit}} \approx 47$ (simplified; $\approx 48$ with $\epsilon=0.1$): at least one intermediate checkpoint is required. With $\eta = 0.90$, the critical horizon extends to 72 steps, and terminal testing alone suffices. (In practice, $\eta$ is auditable: estimate adjacent-stage grade transition matrices from production data and compute the overlap-based bounds in Section~\ref{subsec:contraction}.)

The unifying insight is that credit assignment difficulty is a structural barrier, not an algorithmic one. The critical horizon formula answers the diagnostic question (\emph{when} is intermediate feedback necessary?); the optimal inspection schedules answer the prescriptive question (\emph{where} to place checkpoints?); and the objective mismatch analysis answers the aggregation question (\emph{how} to combine step-level feedback?). Together, these results provide a principled foundation for supervision system design in both operations and AI.

\subsection{Literature Review}

Our results sit at the intersection of inspection design in operations research, outcome-versus-process supervision in deep sequential decision systems, and information-theoretic limits for Markovian signal propagation.
On the OR side, we build on classical checkpoint placement models (\citealp{lindsay1964optimal, raz1986economic, tapiero1987quality}), but focus on a different binding constraint: \emph{observability} under terminal-only evaluation in deep processes, rather than expected-cost tradeoffs under persistent defect signals.
On the AI side, recent work documents that outcome-only supervision can fail on long reasoning chains and that step-level supervision can help (\citealp{lightman2023lets, uesato2022solving, vineppo2024}); we provide a non-asymptotic, information-theoretic characterization of \emph{when} outcome supervision becomes insufficient.
In standard RL settings with bounded concentrability, outcome and process supervision are polynomially equivalent \citep{jia2025verify}; our setting falls outside this regime because concentrability grows exponentially in horizon. Independent concurrent work by \citet{chen2025exponential} establishes an exponential separation using function approximation complexity, complementing our information-theoretic approach.

More broadly, this work contributes to a growing OR-for-AI agenda that applies operations research methodology to AI system design (\citealp{snell2024scaling, besta2025survey, elmachtoub2022smart, kulkarni2025synergizing}), here showing that classical tools such as strong data processing inequalities and minimax inspection design yield precise answers to questions the AI community has addressed primarily through empirical heuristics.
Technically, the analysis leverages strong data processing inequalities (\citealp{polyanskiy2017strong, raginsky2016strong}) and Le~Cam-style hypothesis testing lower bounds \citep{lecam1973convergence} to establish tight sample-complexity limits and the closed-form critical horizon.
A detailed, stream-by-stream literature discussion (inspection design, RL theory, information theory, and OR-for-AI) is provided in the Electronic Companion (Appendix~\ref{app:litreview}).

\subsection{Paper Organization}

The paper develops a logical progression from information-theoretic limits to optimal inspection design. Section~\ref{sec:prelim} establishes the formal framework: the MDP model, state abstraction, supervision regimes, and the contraction property that drives signal decay. Section~\ref{sec:distinguish} proves the Signal Decay Bound, showing that information about step $t$ decays as $\eta^{H-t}$, and derives the critical horizon formula that determines which steps can be evaluated from terminal data.

Two extensions follow. Section~\ref{sec:width} analyzes width amplification, showing that correlation among parallel rollouts caps effective width and extends the critical horizon only logarithmically. Section~\ref{sec:bellman} addresses an orthogonal failure mode: standard additive objectives optimize the wrong quantity for sequential validity, and proposes curriculum strategies that interpolate between learnable and correct objectives.

Section~\ref{sec:inspection-design} synthesizes these results into optimal inspection schedules: uniform spacing under homogeneous contraction and a greedy information-distance algorithm under heterogeneous contraction. Section~\ref{sec:numerical} provides a concise summary of numerical validation; full experimental setups and all plots are deferred to the Electronic Companion (Appendix~\ref{app:numerical-details}). The Electronic Companion also develops a detailed application to language model reasoning (Section~\ref{sec:llm-application}).

\section{Problem Formulation}
\label{sec:prelim}

We formalize the credit assignment problem in sequential processes, beginning with the operational setting before developing the mathematical framework.

\subsection{Multi-Stage Processes with Terminal Inspection}

Consider a process where items pass through $H$ stages before final inspection. At each stage, the item's state changes and defects may be introduced. The final inspection reveals whether the complete item meets specifications but does not identify which stage caused a failure.

As discussed in Section~\ref{sec:intro}, terminal-only inspection arises either because intermediate inspection is \emph{costly} (semiconductor metrology, supplier auditing) or \emph{infeasible} (validity depends on context that emerges only downstream). The common structure across manufacturing, service operations, supply chains, and AI reasoning is that terminal outcomes are observable but the intermediate stage responsible for success or failure is not. Without localization, process improvement efforts may target the wrong stages.

Our framework applies when (i) the process evolves through discrete stages with Markovian dynamics on an abstract state space, (ii) terminal inspection reveals aggregate quality, and (iii) intermediate inspection, if feasible, is costly. In AI, \emph{outcome supervision} refers to training on final output labels, while \emph{process supervision} provides step-level feedback. We adopt this terminology throughout, noting that the underlying structure is domain-general.

\subsection{Mathematical Formalization}

We model the sequential process as an episodic finite-horizon Markov decision process (MDP) with terminal reward. Let $\mathcal{S}$ denote a state space, $\mathcal{A}$ an action space, and $H$ the horizon (the number of decision stages).

\begin{definition}[Finite-Horizon MDP with Terminal Reward]
\label{def:sdp}
A \emph{finite-horizon MDP with terminal reward} consists of:
\begin{itemize}
    \item An initial state $S_0 \in \mathcal{S}$, drawn from a distribution $\mu$ over $\mathcal{S}$ (or fixed at a given state $s_0$)
    \item Transition dynamics, either deterministic $s_{t+1} = f(s_t, a_t)$ or stochastic $s_{t+1} \sim P(\cdot \mid s_t, a_t)$
    \item A policy $\pi = (\pi_0, \ldots, \pi_{H-1})$ where each $\pi_t: \mathcal{S} \to \Delta(\mathcal{A})$ maps states to distributions over actions
    \item A terminal evaluation $R = g(S_H) \in \{0, 1\}$ indicating success or failure
    \item Step validity indicators $r_t = h_t(S_t, A_t, S_{t+1}) \in \{0, 1\}$, treated as exogenous labels available only under process or partial supervision; the functions $h_t$ are problem-dependent and need not be known to the learner
\end{itemize}
A \emph{trajectory} is the complete sequence $\tau = (S_0, A_0, S_1, \ldots, A_{H-1}, S_H)$. We use uppercase letters for random variables; realized samples are denoted $\tau^{(i)} = (s_0^{(i)}, a_0^{(i)}, \ldots, s_H^{(i)})$. In {\em sequential validity} tasks, terminal success requires every step to be correct: $R = \prod_{t=0}^{H-1} r_t$; this structural relationship motivates the multiplicative objective introduced in Section~\ref{sec:bellman}.
\end{definition}

To ground this abstraction: in manufacturing, the state $S_t$ is the workpiece condition at stage $t$, an action $A_t$ is the process setting (temperature, pressure, feed rate), and the transition $f$ maps current condition and setting to the next-stage condition. The terminal evaluation $R$ indicates whether the finished product passes final inspection. In AI reasoning, $S_t$ is the partial solution generated so far, $A_t$ is the next reasoning step, and $R$ indicates whether the final answer is correct.

\begin{remark}[Deterministic Dynamics with Stochastic Trajectories]
\label{rem:dynamics}
In many sequential processes, the transition function $f$ is deterministic while trajectories are stochastic because actions are sampled from a policy distribution. This distinction matters: information loss arises not from noisy dynamics but from the many-to-one structure of state abstraction (Section~\ref{sec:distinguish}).
In manufacturing, this occurs even under fixed recipes: units sharing the same quality grade may differ in latent characteristics (material lot, tool drift, defect mode), so the \emph{abstraction} from detailed workpiece state to grade is many-to-one, and the induced grade-to-grade kernels have overlapping rows and $\eta<1$.
\end{remark}

The value function captures the probability of eventual success from any intermediate state. Intuitively, it answers the question: ``If the process is currently at state $s$ after $t$ steps, what is the chance it will ultimately succeed?''

\begin{definition}[Value Function]
\label{def:value}
The \emph{value function} under policy $\pi$ from state $s$ at step $t$ is
\[{
    V^\pi_t(s) := \Prob_\pi(R = 1 \mid S_t = s) = \E_\pi[R \mid S_t = s],
}\]

representing the probability of terminal success when starting from state $s$ at step $t$ and following policy $\pi$.
\end{definition}

\subsection{Supervision Models}

The learner observes $n$ i.i.d.\ trajectories generated by a fixed behavior policy $\pi$. We define three supervision regimes, distinguished by what feedback accompanies each trajectory.

\begin{definition}[Outcome Supervision]
\label{def:outcome}
Under \emph{outcome supervision}, the learner observes $n$ independent trajectory-outcome pairs:
\[{
    \mathcal{D}_{\mathrm{out}} = \{(\tau^{(i)}, R^{(i)})\}_{i=1}^n.
}\]

The full trajectory $\tau^{(i)}$ including all intermediate states is observed, but no step-level correctness labels are provided.
\end{definition}

The learner sees \emph{what} happened at each step but not \emph{whether} each step was correct. Consequently, the only supervision signal about step validity is through $R$, and any test for whether step $t$ was correct must rely on the induced separation in outcome distributions; this is why the lower bounds in Section~\ref{sec:distinguish} are stated in terms of $P^{(0)}_R$ versus $P^{(1)}_R$.

\begin{definition}[Process Supervision]
\label{def:process}
Under \emph{process supervision}, the learner additionally observes step validity labels:
\[{
    \mathcal{D}_{\mathrm{proc}} = \{(\tau^{(i)}, r_0^{(i)}, \ldots, r_{H-1}^{(i)}, R^{(i)})\}_{i=1}^n,
}\]

where $r_t^{(i)} \in \{0, 1\}$ indicates whether step $t$ of trajectory $i$ is valid.
\end{definition}

Process supervision provides the ground truth about each step. This is the gold standard for credit assignment but is expensive: a human expert must evaluate every step of every trajectory, or a formal verifier must check each logical inference.

\begin{definition}[Partial Supervision]
\label{def:partial}
Under \emph{$k$-step supervision}, the learner observes step labels at intervals of $k$ steps:
\[{
    \mathcal{D}_{k} = \{(\tau^{(i)}, r_k^{(i)}, r_{2k}^{(i)}, \ldots, r_{\lfloor (H-1)/k \rfloor \cdot k}^{(i)}, R^{(i)})\}_{i=1}^n,
}\]

That is, labels are provided at steps $t = k, 2k, \ldots$ (we omit $t = 0$ by convention). This interpolates between near-process supervision ($k = 1$, all steps except $r_0$ labeled) and outcome supervision ($k \geq H$, no step labels). The convention ensures that coarse supervision provides no step-level information.
\end{definition}

Under all three regimes, the learner observes the complete state sequence $(s_0, \ldots, s_H)$. The difficulty is not hidden states but missing correctness labels: the model's output is fully readable, but determining whether each step is valid may require human experts or formal verification.

\subsection{Information-Theoretic Preliminaries}

Our lower bounds rely on tracking how information degrades through sequential processing. The core question is: if two hypotheses (say, ``step $t$ was good'' versus ``step $t$ was bad'') lead to different distributions over trajectories, how many samples suffice to distinguish them?

The answer depends on how different the resulting outcome distributions are: nearly identical distributions require many samples, while well-separated distributions require few. We formalize this using $\chi^2$-divergence.

\begin{definition}[$\chi^2$-Divergence]
\label{def:chisq}
For distributions $P$ and $Q$ with $P$ absolutely continuous with respect to $Q$:
\[{
    \chi^2(P \| Q) := \int \frac{(dP - dQ)^2}{dQ} = \E_Q\left[\left(\frac{dP}{dQ} - 1\right)^2\right].
}\]

\end{definition}

We use $\chi^2$-divergence rather than KL divergence for two reasons. First, it connects directly to hypothesis testing: by Le~Cam's method \citep{lecam1973convergence}, distinguishing $P$ from $Q$ with constant probability requires $n = \Omega(1/\chi^2(P \| Q))$ samples. Second, it tensorizes cleanly across independent samples: $\chi^2(P^{\otimes n} \| Q^{\otimes n}) = (1 + \chi^2(P \| Q))^n - 1$ (see, e.g., \citealp{polyanskiy2017strong}). This multiplicative structure translates divergence bounds directly into sample complexity bounds.

The central tool in our analysis is the Strong Data Processing Inequality, which captures a fundamental phenomenon: \emph{processing information can only destroy it, never create it}. If two distributions are hard to distinguish, passing them through a noisy channel makes them even harder to distinguish.

\begin{definition}[$\chi^2$-Contraction Coefficient]
\label{def:contraction}
For a Markov kernel $K: \mathcal{X} \to \Delta(\mathcal{Y})$, the \emph{$\chi^2$-contraction coefficient} is
\[{
    \eta_{\chi^2}(K) := \sup_{\substack{P, Q: P \ll Q \\ 0 < \chi^2(P \| Q) < \infty}} \frac{\chi^2(PK \| QK)}{\chi^2(P \| Q)},
}\]

where $PK(y) = \int K(y \mid x) \, dP(x)$ is the pushforward measure.
\end{definition}

The contraction coefficient satisfies $\eta_{\chi^2}(K) \in [0, 1]$, measuring what fraction of distinguishability survives the channel: $\eta = 0$ destroys all information; $\eta = 1$ is lossless. Throughout, \emph{smaller $\eta$ means stronger contraction} (faster information loss).

\begin{lemma}[$\chi^2$ Strong Data Processing Inequality {\citep[see][]{raginsky2016strong, polyanskiy2017strong}}]
\label{lem:sdpi}
Let $K: \mathcal{Y} \to \Delta(\mathcal{Z})$ be a Markov kernel. For any distributions $P_Y$ and $Q_Y$ on $\mathcal{Y}$ with $P_Y \ll Q_Y$ and $\chi^2(P_Y \| Q_Y) < \infty$, define $P_Z = P_Y K$ and $Q_Z = Q_Y K$. Then
\[{
    \chi^2(P_Z \| Q_Z) \leq \eta_{\chi^2}(K) \cdot \chi^2(P_Y \| Q_Y).
}\]

\end{lemma}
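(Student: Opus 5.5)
The inequality is immediate from the definition of $\eta_{\chi^2}(K)$ in Definition~\ref{def:contraction}. That coefficient is a supremum of the ratio $\chi^2(PK\|QK)/\chi^2(P\|Q)$ over all admissible pairs. So for any particular admissible pair the ratio is at most the supremum, and multiplying through by $\chi^2(P_Y\|Q_Y)$ gives the claim. The plan is to handle the cases the supremum excludes and to check that the pair $(P_Y,Q_Y)$ meets the side conditions.

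\textbf{Degenerate case.} Suppose $\chi^2(P_Y\|Q_Y)=0$. Then $P_Y=Q_Y$, so $P_Z=Q_Z$ and $\chi^2(P_Z\|Q_Z)=0$. The inequality then reads $0\le 0$.

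\textbf{Main case.} Suppose $0<\chi^2(P_Y\|Q_Y)<\infty$. We must confirm that the output divergence is well defined, i.e.\ that $P_Z\ll Q_Z$.
\begin{itemize}
\item Let $A$ be any event with $Q_Z(A)=\int K(A\mid y)\,dQ_Y(y)=0$.
\item Then $K(A\mid y)=0$ for $Q_Y$-almost every $y$.
\item Since $P_Y\ll Q_Y$, the same holds for $P_Y$-almost every $y$, so $P_Z(A)=0$.
\end{itemize}
With absolute continuity in hand, the pair $(P_Y,Q_Y)$ lies in the index set of the supremum. It follows that
\[
\frac{\chi^2(P_Z\|Q_Z)}{\chi^2(P_Y\|Q_Y)}\le \eta_{\chi^2}(K),
\]
and rearranging gives the result.

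\textbf{Finiteness of the output divergence.} It is worth noting, though not strictly needed, that $\chi^2(P_Z\|Q_Z)<\infty$. This follows from the ordinary data processing inequality $\chi^2(P_Z\|Q_Z)\le\chi^2(P_Y\|Q_Y)$. One way to see it is via convexity of $(a,b)\mapsto (a-b)^2/b$ and Jensen's inequality applied through the likelihood ratio $dP_Z/dQ_Z=\E_{Q}[dP_Y/dQ_Y\mid Z]$. This also shows $\eta_{\chi^2}(K)\le 1$, matching the remark after Definition~\ref{def:contraction}.

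\textbf{Main obstacle.} The only step that needs care is the measure-theoretic one: verifying $P_Z\ll Q_Z$ so that $\chi^2(P_Z\|Q_Z)$ is defined. On general spaces this requires the joint measurability of $K(\cdot\mid y)$, which is part of what it means for $K$ to be a Markov kernel. Everything else is a direct consequence of the definition of the supremum. This is why the paper cites \citet{raginsky2016strong,polyanskiy2017strong} for the statement rather than proving it.
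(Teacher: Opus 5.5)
Your proof is correct. It matches the paper's treatment: the paper gives no proof and only cites the SDPI literature, because with $\eta_{\chi^2}(K)$ defined as a supremum over admissible input pairs (Definition~\ref{def:contraction}), the lemma follows directly from that definition once the case $\chi^2(P_Y\|Q_Y)=0$ is handled separately, as you do. One small correction: $(P_Y,Q_Y)$ belongs to the index set by hypothesis, not because $P_Z\ll Q_Z$; your absolute-continuity check only confirms that the numerator is a well-defined divergence, and it is not actually needed, since an infinite numerator would make the supremum infinite and the inequality trivial.
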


For credit assignment: if $P_Y$ and $Q_Y$ represent state distributions under ``good step'' versus ``bad step,'' and $K$ represents subsequent reasoning, then outcome distributions are at most $\eta$ times as distinguishable as the original states. When we iterate through multiple steps, information loss compounds.

\begin{corollary}[Divergence Decay in Markov Chains {\citep[cf.][]{polyanskiy2017strong}}]
\label{cor:decay}
Consider a Markov chain $(S_0, S_1, \ldots, S_H)$ with transition kernels $K_0, K_1, \ldots, K_{H-1}$. For any two initial distributions $P_{S_0}$ and $Q_{S_0}$ with $P_{S_0} \ll Q_{S_0}$ and $\chi^2(P_{S_0} \| Q_{S_0}) < \infty$:
\[{
    \chi^2(P_{S_H} \| Q_{S_H}) \leq \left( \prod_{t=0}^{H-1} \eta_{\chi^2}(K_t) \right) \chi^2(P_{S_0} \| Q_{S_0}).
}\]

In particular, if $\eta_{\chi^2}(K_t) \leq \eta < 1$ for all $t$, then
\[{
    \chi^2(P_{S_H} \| Q_{S_H}) \leq \eta^H \cdot \chi^2(P_{S_0} \| Q_{S_0}).
}\]

\end{corollary}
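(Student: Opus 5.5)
The plan is induction on $H$, applying Lemma~\ref{lem:sdpi} once per transition. The key structural fact is that, for a Markov chain, the marginal at time $t+1$ is the pushforward of the marginal at time $t$ through the kernel $K_t$. Writing $P_{S_t}$ and $Q_{S_t}$ for the marginals under the two initial laws, we have $P_{S_{t+1}} = P_{S_t}K_t$ and $Q_{S_{t+1}} = Q_{S_t}K_t$. The same kernel acts under both hypotheses because the kernels do not depend on the initial distribution. This is exactly the setting of Lemma~\ref{lem:sdpi}, with $Y=S_t$, $Z=S_{t+1}$ and $K=K_t$.

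\textbf{Hypotheses.} Before invoking the lemma at each step we must check that its hypotheses propagate.
\begin{itemize}
\item Absolute continuity is preserved under pushforward. If $Q_{S_t}(A')=0$ implies $P_{S_t}(A')=0$, then for a measurable $A$ with $Q_{S_t}K_t(A)=\int K_t(A\mid x)\,dQ_{S_t}(x)=0$, the function $K_t(A\mid\cdot)$ vanishes $Q_{S_t}$-a.e. It therefore vanishes $P_{S_t}$-a.e., so $P_{S_t}K_t(A)=0$.
\item Finiteness of $\chi^2$ follows from the lemma itself, since it bounds the new divergence by a finite multiple of the old one.
\item The degenerate case $\chi^2(P_{S_t}\|Q_{S_t})=0$ is not covered by the supremum defining $\eta_{\chi^2}$. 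In that case $P_{S_t}=Q_{S_t}$, so $P_{S_{t+1}}=Q_{S_{t+1}}$ and the inequality holds trivially. I expect this bookkeeping to be the only real obstacle, and it is minor.
\end{itemize}

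\textbf{Induction.} The inductive claim is
\[
\chi^2(P_{S_t}\|Q_{S_t})\le\Big(\prod_{s<t}\eta_{\chi^2}(K_s)\Big)\chi^2(P_{S_0}\|Q_{S_0}).
\]
The base case $t=0$ has an empty product and is trivial. For the step from $t$ to $t+1$, Lemma~\ref{lem:sdpi} gives
\[
\chi^2(P_{S_{t+1}}\|Q_{S_{t+1}})\le\eta_{\chi^2}(K_t)\,\chi^2(P_{S_t}\|Q_{S_t}).
\]
Combining this with the inductive hypothesis yields the claim at $t+1$; here we use that each $\eta_{\chi^2}(K_s)\ge 0$, so multiplying the inequality preserves its direction. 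Setting $t=H$ gives the first display of Corollary~\ref{cor:decay}.

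\textbf{Uniform bound.} For the ``in particular'' statement, suppose $0\le\eta_{\chi^2}(K_t)\le\eta$ for every $t$. Then the product is at most $\eta^H$, since it is a product of nonnegative factors each bounded by $\eta$, and this gives the second display.
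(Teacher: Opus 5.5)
Your proposal is correct and matches the paper's approach. The paper gives no separate proof of Corollary~\ref{cor:decay}, but its proofs of Theorem~\ref{thm:signal_decay} (Part~1) and Proposition~\ref{prop:inspection-decay} use exactly this argument: push both marginals through the common kernel $K_t$, apply Lemma~\ref{lem:sdpi} once per transition, and multiply the per-step coefficients. Your checks on absolute continuity, finiteness, and the degenerate case $\chi^2=0$ are sound and make explicit what the paper leaves implicit.
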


\section{The Signal Decay Bound: Why Early Steps Are Hardest}
\label{sec:distinguish}

We now prove that information about step~$t$ decays exponentially as it propagates through the remaining $H-t$ stages, making credit assignment to early steps fundamentally harder than to late ones. We first introduce a state abstraction under which abstract dynamics are Markov (Section~\ref{subsec:abstraction}), show that the many-to-one structure of this abstraction induces contraction (Section~\ref{subsec:contraction}), and then prove the Signal Decay Bound, derive the critical horizon, and establish tightness (Section~\ref{subsec:signal-decay-bound}). \emph{Proofs appear in Appendix~\ref{app:signal-decay}.}

\subsection{State Abstraction and Abstract Dynamics}
\label{subsec:abstraction}

Credit assignment targets abstract content, not raw observations. In many sequential processes, the full observable state contains details (surface representation, sensor noise, formatting choices) that are irrelevant to the outcome. Multiple observable states may map to the same quality grade or semantic content. This many-to-one structure is the source of information loss, and formalizing it is the first step toward quantifying signal decay.

\begin{definition}[Markov State Abstraction]
\label{def:abstraction}
A \emph{Markov state abstraction} is a mapping $\phi: \mathcal{S} \to \mathcal{Z}$ from observable states to an abstract state space $\mathcal{Z}$ satisfying the \emph{lumpability condition}: for all $t$ and all $s, s' \in \mathcal{S}$ with $\phi(s) = \phi(s') = z$,
\[{
    \mathcal{L}(\phi(S_{t+1}) \mid S_t = s) = \mathcal{L}(\phi(S_{t+1}) \mid S_t = s').
}\]

The \emph{abstract state} at step $t$ is $Z_t = \phi(S_t)$.
\end{definition}

The lumpability condition, due to \citet{kemeny1960finite}, ensures that the abstract process $(Z_0, Z_1, \ldots, Z_H)$ is itself Markov: the distribution of the next abstract state depends only on the current abstract state, not on which observable state within the equivalence class was realized. When this condition holds, there exist well-defined abstract transition kernels $K_t: \mathcal{Z} \to \Delta(\mathcal{Z})$ with $K_t(z' \mid z) = \Prob(\phi(S_{t+1}) = z' \mid \phi(S_t) = z)$.

Lumpability is a modeling choice rather than a restrictive hypothesis: one \emph{defines} $\phi$ so that the abstract dynamics are Markov. If an initial choice of $\phi$ fails the condition, one refines the partition by augmenting $\phi$ with additional context until the condition holds, at the cost of a larger abstract space. The interesting question is not whether lumpability holds (it can always be arranged, at the cost of enlarging the abstract state space) but how much the resulting abstract kernels $K_t$ mix: do different abstract states lead to overlapping or distinct distributions over successor states? A coarser abstraction lumps more observable states together, increasing overlap and accelerating information loss. A finer abstraction preserves more distinctions but may require a larger state space. Section~\ref{subsec:contraction} formalizes this tradeoff. Section~\ref{sec:llm-application} in the Electronic Companion develops a concrete instantiation for language model reasoning.

\begin{example}[State Abstraction in Applications]
\label{ex:abstraction-applications}
\leavevmode
\begin{itemize}[leftmargin=*,itemsep=3pt,topsep=3pt]
    \item \textbf{Manufacturing:} The observable state $S_t$ consists of sensor readings at production stage $t$ (temperatures, pressures, vibration spectra). The abstract state $Z_t = \phi(S_t)$ captures the quality grade: ``within specification,'' ``marginal,'' or ``defective.'' Multiple sensor patterns map to the same grade.

    \item \textbf{AI reasoning:} The observable state $S_t$ is the token sequence generated after $t$ reasoning steps. The abstract state $Z_t = \phi(S_t)$ captures semantic content: the mathematical claims established, the proof strategy adopted, or the intermediate values derived. Multiple token sequences expressing the same content map to the same abstract state.
\end{itemize}
\end{example}

The lumpability condition requires exact agreement of abstract transition probabilities within each equivalence class. In practice, this may hold only approximately: two sensor patterns classified as the same quality grade may lead to slightly different distributions over downstream grades. We return to this point in Section~\ref{subsec:signal-decay-bound}, after establishing the main bound, where we show that approximate lumpability incurs only a bounded, typically negligible penalty.

\subsection{Why Abstract Dynamics Contract}
\label{subsec:contraction}

The many-to-one structure of $\phi$ introduced in Section~\ref{subsec:abstraction} causes information loss whenever distinct abstract states lead to overlapping next-state distributions. We now quantify this loss per step using the $\chi^2$-contraction coefficient $\eta_{\chi^2}(K) \in [0,1]$ (Definition~\ref{def:contraction}), which measures what fraction of $\chi^2$-divergence survives when distributions pass through a kernel $K$. When different abstract states $z$ and $z'$ produce overlapping distributions $K(\cdot \mid z)$ and $K(\cdot \mid z')$ over next states, $\eta_{\chi^2}(K) < 1$ and any statistical difference between two distributions over $Z_t$ shrinks after one transition.

To build intuition, suppose we want to determine whether the abstract chain was in state $z$ or state $z'$ at step $t$ by observing the state at step $t+1$. If $K(\cdot \mid z)$ and $K(\cdot \mid z')$ have disjoint support, observing $Z_{t+1}$ perfectly identifies $Z_t$, and no information is lost. But if both distributions assign positive probability to some state $y$, then observing $Z_{t+1} = y$ leaves residual uncertainty about whether $Z_t$ was $z$ or $z'$. The greater the overlap between $K(\cdot \mid z)$ and $K(\cdot \mid z')$, the more information is lost at each step.

\begin{example}[Contraction Coefficient in Extreme Cases]
\label{ex:contraction-extremes}
\leavevmode
\begin{itemize}[leftmargin=*,itemsep=3pt]
    \item \textbf{Deterministic dynamics ($\eta = 1$):} If each state $z$ transitions to a unique successor, the transition matrix is a permutation and $K(\cdot \mid z)$ and $K(\cdot \mid z')$ have disjoint support for $z \neq z'$. No information is lost: $\eta_{\chi^2}(K) = 1$.

    \item \textbf{Uniform mixing ($\eta = 0$):} If $K(\cdot \mid z) = \nu$ for all $z$ (every state transitions to the same distribution $\nu$), then $Z_{t+1}$ is independent of $Z_t$ and all information about the past is destroyed in one step: $\eta_{\chi^2}(K) = 0$.

    \item \textbf{Intermediate case:} Consider a two-state chain with $K(1 \mid 1) = 1-p$, $K(2 \mid 1) = p$, $K(1 \mid 2) = p$, $K(2 \mid 2) = 1-p$. The overlap between rows is $2\min\{p, 1-p\}$, and $\eta_{\chi^2}(K) = (1 - 2p)^2$. When $p = 0$, the chain is deterministic and $\eta = 1$; when $p = 1/2$, both rows are identical and $\eta = 0$.
\end{itemize}
\end{example}

The following two propositions formalize the relationship between row overlap and $\chi^2$-contraction, providing concrete sufficient conditions for $\eta < 1$. Both build on classical Markov chain mixing techniques (Doeblin decomposition, Dobrushin's ergodicity coefficient) combined with the SDPI framework (Lemma~\ref{lem:sdpi}); proofs appear in Appendix~\ref{app:signal-decay}.

\begin{proposition}[Contraction from Diversity]
\label{prop:diversity}
Consider a transition kernel $K$ on an abstract state space with $|\mathcal{Z}|$ states. If $K$ spreads probability such that $K(z' \mid z) \geq p_{\min} > 0$ for all state pairs $(z, z')$, then $\eta_{\chi^2}(K) \leq 1 - |\mathcal{Z}| \cdot p_{\min}$ (note $|\mathcal{Z}| \cdot p_{\min} \leq 1$ since rows sum to one).
\end{proposition}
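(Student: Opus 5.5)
The plan is a Doeblin (minorization) decomposition of $K$ followed by joint convexity of $\chi^2$-divergence and the ordinary data processing inequality. Set $\alpha := |\mathcal{Z}|\, p_{\min} \in (0,1]$ and let $U$ denote the kernel whose every row is the uniform distribution $u$ on $\mathcal{Z}$. Since $K(z'\mid z) \geq p_{\min} = \alpha\, u(z')$ for all $z,z'$, I would write
\[
K = \alpha\, U + (1-\alpha)\, K', \qquad K'(z' \mid z) := \frac{K(z'\mid z) - p_{\min}}{1-\alpha},
\]
and check that $K'$ is a genuine Markov kernel: its entries are nonnegative and its rows sum to $(1-\alpha)/(1-\alpha)=1$. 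The degenerate case $\alpha = 1$ needs separate treatment. There every row of $K$ equals $u$, so $PK = QK$ for all $P,Q$, giving $\eta_{\chi^2}(K) = 0 = 1-\alpha$ directly, as in the uniform-mixing case of Example~\ref{ex:contraction-extremes}.

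For $\alpha<1$, fix $P \ll Q$ with $0<\chi^2(P\|Q)<\infty$. Because $PU = QU = u$ regardless of the input distribution, the decomposition gives
\[
PK = \alpha u + (1-\alpha) PK', \qquad QK = \alpha u + (1-\alpha) QK'.
\]
The key step is joint convexity of the $\chi^2$-divergence, which holds for every $f$-divergence because the perspective $(p,q)\mapsto q f(p/q)$ is jointly convex. Applying it to the pairs $(u,u)$ with weight $\alpha$ and $(PK',QK')$ with weight $1-\alpha$ yields
\[
\chi^2(PK\|QK) \leq \alpha\,\chi^2(u\|u) + (1-\alpha)\,\chi^2(PK'\|QK') = (1-\alpha)\,\chi^2(PK'\|QK').
\]
The ordinary data processing inequality, i.e.\ Lemma~\ref{lem:sdpi} with the trivial bound $\eta_{\chi^2}(K')\le 1$, then gives $\chi^2(PK'\|QK') \le \chi^2(P\|Q)$. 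Absolute continuity $PK' \ll QK'$ is inherited from $P\ll Q$. Dividing by $\chi^2(P\|Q)$ and taking the supremum in Definition~\ref{def:contraction} gives $\eta_{\chi^2}(K)\le 1-|\mathcal{Z}|\,p_{\min}$.

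The main obstacle is justifying joint convexity cleanly in this finite setting. I would do it pointwise. For each $z'$, the map $(a,b)\mapsto a^2/b$ on $b>0$ is jointly convex, being the perspective of $x\mapsto x^2$. Since $\chi^2(A\|B)=\sum_{z'} A(z')^2/B(z') - 1$, summing the pointwise convexity over $z'$ gives the claim. Points where $QK'(z')=0$ force $PK'(z')=0$ and can be dropped.

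A fallback route is the standard chain $\eta_{\chi^2}(K)\le \eta_{\mathrm{TV}}(K)$ combined with Dobrushin's bound $\eta_{\mathrm{TV}}(K)=\max_{z,z'}\|K(\cdot\mid z)-K(\cdot\mid z')\|_{\mathrm{TV}}\le 1-\alpha$. The convexity argument is preferable because it is self-contained given Lemma~\ref{lem:sdpi}.
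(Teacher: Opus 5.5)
Your proposal is correct and follows the paper's proof: the same Doeblin split $K=\alpha\nu+(1-\alpha)K_{\mathrm{res}}$ with $\nu$ uniform and $\alpha=|\mathcal{Z}|p_{\min}$, a bound on the divergence of the mixture by $(1-\alpha)\,\chi^2(PK_{\mathrm{res}}\|QK_{\mathrm{res}})$, and the ordinary data processing inequality for the residual kernel. The only differences are cosmetic. The paper gets the factor $(1-\alpha)$ by writing out the $\chi^2$ sum, where the $\alpha\nu$ terms cancel in the numerator, and dropping $\alpha\nu(z')$ from the denominator, rather than by joint convexity. Your separate treatment of $\alpha=1$ also covers a case that the paper's division by $1-\alpha$ leaves implicit.
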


Proposition~\ref{prop:diversity} provides a sufficient but strong condition: requiring $K(z' \mid z) \geq p_{\min}$ for all state pairs is often unrealistic. The following proposition gives a weaker condition based on the Dobrushin coefficient \citep{dobrushin1956central}, requiring only that each pair of rows has some overlap.

\begin{proposition}[Contraction from Row Overlap {\citep[cf.][]{dobrushin1956central, polyanskiy2017strong}}]
\label{prop:dobrushin}
For a transition kernel $K$ on a finite abstract state space $\mathcal{Z}$, define the Dobrushin coefficient:
\[
\alpha(K) := \min_{z, z' \in \mathcal{Z}} \sum_{y \in \mathcal{Z}} \min\{K(y \mid z), K(y \mid z')\}.
\]
Then $\eta_{\chi^2}(K) \leq 1 - \alpha(K)$.
\end{proposition}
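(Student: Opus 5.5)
Proof plan. We prove Proposition~\ref{prop:dobrushin} in two steps. The first step bounds the $\chi^2$-contraction coefficient by the total-variation contraction coefficient. The second step identifies the latter with $1-\alpha(K)$.

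\emph{Step 1: reduce to Dobrushin's coefficient.} Let
\[
\eta_{\mathrm{TV}}(K) := \sup_{z,z'} \mathrm{TV}\bigl(K(\cdot\mid z),K(\cdot\mid z')\bigr).
\]
The standard fact from the SDPI literature \citep{polyanskiy2017strong, raginsky2016strong} is that for every $f$-divergence with convex $f$,
\[
\eta_f(K) \le \eta_{\mathrm{TV}}(K).
\]
We invoke it for $f(x)=(x-1)^2$. If a self-contained argument is wanted, a direct route exists because $\chi^2$ is quadratic. Write $P=Q+\Delta$ with $\sum_z\Delta(z)=0$. Then
\[
\chi^2(PK\|QK)=\sum_y \frac{\bigl(\sum_z \Delta(z)K(y\mid z)\bigr)^2}{QK(y)}.
\]
The supremum over $\Delta$ of the ratio $\chi^2(PK\|QK)/\chi^2(P\|Q)$ is the squared second-largest singular value of the operator $g\mapsto \E[g(Z)\mid Z'=\cdot\,]$ on $L^2(Q)$, restricted to mean-zero $g$, where $(Z,Z')\sim Q\otimes K$. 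That maximal correlation is bounded by Dobrushin's coupling argument. Couple $Z'_1\sim K(\cdot\mid z_1)$ and $Z'_2\sim K(\cdot\mid z_2)$ so that they agree with probability $1-\mathrm{TV}$. This gives $\rho_m^2\le \eta_{\mathrm{TV}}$.

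\emph{Step 2: identify the coefficients.} This is pure algebra. For any two rows, $\sum_y \min\{a_y,b_y\} = 1-\mathrm{TV}(a,b)$, since $\min\{a,b\}=\tfrac{a+b-|a-b|}{2}$ and each row sums to one. Taking the minimum over pairs of rows gives
\[
\alpha(K)=1-\max_{z,z'}\mathrm{TV}\bigl(K(\cdot\mid z),K(\cdot\mid z')\bigr)=1-\eta_{\mathrm{TV}}(K).
\]
Combining the two steps yields
\[
\eta_{\chi^2}(K)\le 1-\alpha(K).
\]

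\emph{Main obstacle.} The main obstacle is Step 1. If the general inequality $\eta_f\le\eta_{\mathrm{TV}}$ is taken as given from the cited references, nothing remains. Proving it from scratch requires the convexity/decomposition argument: one writes $K$ restricted to any pair of rows as a mixture of a common part with weight $1-\eta_{\mathrm{TV}}$ and residual parts. A simpler alternative for this paper is to cite the inequality directly.

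As a consistency check, the result recovers Proposition~\ref{prop:diversity}. If every entry of $K$ satisfies $K(z'\mid z)\ge p_{\min}$, then for every pair of rows $\sum_y\min\{K(y\mid z),K(y\mid z')\}\ge |\mathcal{Z}|p_{\min}$. Hence $\alpha(K)\ge|\mathcal{Z}|p_{\min}$, and the bound of Proposition~\ref{prop:diversity} follows as a corollary.
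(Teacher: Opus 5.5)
Your proof is correct and is the paper's argument: you identify $1-\alpha(K)$ with the Dobrushin coefficient $\max_{z,z'}d_{\mathrm{TV}}(K(\cdot\mid z),K(\cdot\mid z'))$ via $\sum_y\min\{a_y,b_y\}=1-d_{\mathrm{TV}}(a,b)$, then invoke $\eta_{\chi^2}(K)\le\eta_{\mathrm{TV}}(K)$ \citep[Theorem~1]{polyanskiy2017strong}. Let the cited inequality carry the weight, because your optional maximal-correlation/coupling sketch only asserts $\rho_m^2\le\eta_{\mathrm{TV}}$ without deriving it; a naive Cauchy--Schwarz over the maximal coupling yields only $2\eta_{\mathrm{TV}}$, whereas a clean self-contained route shows that each hockey-stick divergence $E_\gamma$ contracts by $\eta_{\mathrm{TV}}$ and then integrates these bounds into $\chi^2$.
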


\noindent The resulting bound on $\eta$ is looser than Proposition~\ref{prop:diversity} but applies more broadly, since it requires only pairwise row overlap rather than full minorization. In practice, $\alpha(K)$ can be estimated by sampling continuations from different prefixes and measuring their distributional overlap. In manufacturing, the same bound is computed from existing instrumentation or pilot audits: bin each stage's output into grades via $\phi$, estimate $K$ from empirical adjacent-stage transition frequencies, and compute $\alpha(K)$ from the resulting matrix, giving $\eta \leq 1 - \alpha(K)$ (Proposition~\ref{prop:dobrushin}). Example~\ref{ex:manufacturing-kernel} in Appendix~\ref{app:inspection-examples} illustrates this computation for a three-grade quality transition kernel.

Example~\ref{ex:contraction-extremes}, Example~\ref{ex:manufacturing-kernel} in Appendix~\ref{app:inspection-examples}, and Propositions~\ref{prop:diversity}--\ref{prop:dobrushin} show that contraction ($\eta < 1$) arises whenever the rows of the abstract kernel have nontrivial overlap. Contraction fails (i.e., $\eta = 1$ remains possible) only when some pairs of rows have disjoint support ($\alpha(K) = 0$), as in deterministic dynamics with distinct successors. We now formalize the assumption that contraction is strict.

\begin{assumption}[Contractive Dynamics]
\label{ass:contractive}
At each step $u$, the abstract transition kernel $K_u$ satisfies $\eta_{\chi^2}(K_u) \leq \eta$ for some $\eta < 1$.
\end{assumption}

Propositions~\ref{prop:diversity} and~\ref{prop:dobrushin} provide concrete upper bounds on $\eta$ in terms of the abstract kernel's row overlap. In Section~\ref{sec:llm-application} of the Electronic Companion, we establish how sampling temperature governs $\eta$ in language model reasoning, connecting the abstract contraction framework to a concrete and practically important design lever.
\subsection{The Signal Decay Bound}
\label{subsec:signal-decay-bound}

We now state our main result for this section: a lower bound on the number of i.i.d.\ outcome samples required to distinguish competing hypotheses about early steps. The bound formalizes the intuition that information decays exponentially as it propagates through the sequential chain. We formalize the hypotheses, prove the bound, derive the critical horizon, and show tightness.

Consider two hypotheses $H_0$ and $H_1$ about step $t$. These hypotheses induce different distributions $P^{(0)}_{Z_t}$ and $P^{(1)}_{Z_t}$ over the abstract state $Z_t$, and consequently different distributions over the binary outcome $R \in \{0,1\}$. We assume the hypotheses agree on everything else: the transition kernels $K_u$ for $u \geq t$ and the terminal map $g$ are identical under both. The question is: how many outcome observations are needed to reliably distinguish $H_0$ from $H_1$?

\begin{definition}[Hypothesis Model: Localized Distributional Difference]
\label{def:hypothesis-model}
Fix a step $t \in \{1, \ldots, H\}$. A pair of hypotheses $(H_0, H_1)$ \emph{differs only at step $t$} if the following generative model holds:
\begin{enumerate}[leftmargin=*,itemsep=2pt]
    \item \textbf{Common initial dynamics:} Under both hypotheses, the chain $(Z_0, \ldots, Z_{t-1})$ follows the same distribution determined by the initial state and transition kernels $K_0, \ldots, K_{t-2}$.
    \item \textbf{Localized divergence:} The conditional distribution of $Z_t$ given $Z_{t-1}$ differs:
    \[
    H_i: \quad Z_t \mid Z_{t-1} \sim K_{t-1}^{(i)}(\cdot \mid Z_{t-1}), \quad i \in \{0, 1\}.
    \]
    The induced marginal distributions satisfy $\chi^2(P^{(0)}_{Z_t} \| P^{(1)}_{Z_t}) = \Delta^2 > 0$.
    \item \textbf{Shared suffix (downstream dynamics):} For $u \geq t$, the transition kernels $K_u$ and terminal evaluation $g$ are identical under both hypotheses.
\end{enumerate}
\end{definition}

\noindent In applications: $H_0$ corresponds to ``step $t$ is valid'' (the transition $K_{t-1}^{(0)}$ produces correct states), while $H_1$ corresponds to ``step $t$ contains an error'' (the transition $K_{t-1}^{(1)}$ may produce erroneous states). In manufacturing: $H_0$ means stage $t$ operated correctly; $H_1$ means a defect was introduced at stage $t$. The shared-suffix condition isolates step $t$ as the sole source of difference: downstream stages process whatever they receive in the same way under both hypotheses, so any difference in the outcome distribution is attributable entirely to step $t$. Errors introduced at step $t$ may be corrected or may cascade through subsequent stages, but the stages themselves operate identically. Appendix~\ref{app:inspection-examples} discusses how $\Delta^2$ can be estimated from pilot audits in operations settings. We index step $t$ as the abstract state $Z_t$ immediately after the $t$-th transition, so $H-t$ counts the number of transitions remaining to reach the outcome.

\begin{definition}[Total Testing Error and Distinguishability]
\label{def:distinguish}
Let $H_0$ and $H_1$ induce distributions $P^{(0)}$ and $P^{(1)}$ over outcomes. The \emph{total testing error}, defined as the sum of type-I and type-II error probabilities minimized over all tests, is
\[{
    \mathrm{err}^*(P^{(0)}, P^{(1)}) := \inf_\psi \left( \Prob_{P^{(0)}}(\psi = 1) + \Prob_{P^{(1)}}(\psi = 0) \right) = 1 - d_{\mathrm{TV}}(P^{(0)}, P^{(1)}),
}\]
where the infimum is over all measurable tests $\psi: \mathcal{Y} \to \{0,1\}$. For $n$ i.i.d.\ samples, the $n$-sample total testing error is $\mathrm{err}^*_n := 1 - d_{\mathrm{TV}}(P^{(0)\otimes n}, P^{(1)\otimes n})$, where the test acts on $\mathcal{Y}^n$. The hypotheses are \emph{$(n, \epsilon)$-distinguishable} if $\mathrm{err}^*_n \leq \epsilon$.
\end{definition}

\noindent Thus $(n,\epsilon)$-distinguishability requires $d_{\mathrm{TV}}(P^{(0)\otimes n}, P^{(1)\otimes n}) \geq 1 - \epsilon$.

\begin{theorem}[Signal Decay Bound]
\label{thm:signal_decay}
Let $H_0$ and $H_1$ be two hypotheses differing only at step $t$, inducing distributions $P^{(0)}_{Z_t}$ and $P^{(1)}_{Z_t}$ over the abstract state with $P^{(0)}_{Z_t} \ll P^{(1)}_{Z_t}$ and $\chi^2(P^{(0)}_{Z_t} \| P^{(1)}_{Z_t}) = \Delta^2 < \infty$. Suppose the Markov state abstraction (Definition~\ref{def:abstraction}) and Assumption~\ref{ass:contractive} hold with contraction coefficient $\eta < 1$, and the binary outcome is determined by the terminal abstract state: $R = g(Z_H) \in \{0, 1\}$. Then:
\begin{enumerate}
    \item \textbf{Divergence decay:} The $\chi^2$-divergence at the outcome satisfies
    \[{
        \chi^2(P^{(0)}_R \| P^{(1)}_R) \leq \eta^{H-t} \cdot \Delta^2.
}\]

    \item \textbf{Sample complexity lower bound:} For any target error $\epsilon \in (0, 1/2)$, in the regime $\eta^{H-t} \Delta^2 \leq 1$, any test achieving total testing error (Definition~\ref{def:distinguish}) at most $\epsilon$ requires
    \[{
        n \geq \frac{(1-\epsilon)^2}{\eta^{H-t} \cdot \Delta^2}
}\]

    i.i.d.\ outcome samples. When $\eta^{H-t} \Delta^2 > 1$, this lower bound becomes $O(1)$; in this regime, the outcome distributions are well-separated and $O(1)$ samples suffice to achieve any constant total error rate.

    \item \textbf{Exponential growth:} For fixed initial divergence $\Delta^2 = \Theta(1)$, the sample complexity grows exponentially in distance to the outcome:
    \[{
        n = \Omega\left(\left(\frac{1}{\eta}\right)^{H-t}\right).
}\]

\end{enumerate}
\end{theorem}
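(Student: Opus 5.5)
The proof runs through the chain of distributions $Z_t \to Z_{t+1} \to \cdots \to Z_H \to R$, and I would take the three parts in order.

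\textbf{Part 1 (divergence decay).} By the shared-suffix condition in Definition~\ref{def:hypothesis-model}, both hypotheses push their step-$t$ marginals through the same kernels $K_t,\ldots,K_{H-1}$. Lumpability (Definition~\ref{def:abstraction}) guarantees that these abstract kernels are well defined. Hence $P^{(i)}_{Z_H} = P^{(i)}_{Z_t} K_t \cdots K_{H-1}$ for $i\in\{0,1\}$. Absolute continuity is preserved under pushforward, so Corollary~\ref{cor:decay}, applied to the chain started at step $t$ together with Assumption~\ref{ass:contractive}, gives
\[
\chi^2(P^{(0)}_{Z_H}\|P^{(1)}_{Z_H}) \le \eta^{H-t}\Delta^2 .
\]
The terminal map $R=g(Z_H)$ is a deterministic kernel, so the ordinary data processing inequality (Lemma~\ref{lem:sdpi} with $\eta_{\chi^2}\le 1$) carries this bound over to $R$.

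\textbf{Part 2 (sample complexity).} I would use Le~Cam's reduction from testing to total variation. Distinguishability requires $d_{\mathrm{TV}}(P^{(0)\otimes n}_R,P^{(1)\otimes n}_R)\ge 1-\epsilon$. I would then upper-bound this total variation by a function of $\chi^2$.

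The key point is to avoid the crude route. The bound $d_{\mathrm{TV}}\le \tfrac12\sqrt{\chi^2}$, combined with tensorization $(1+c)^n-1$, produces an exponential in $n$, which only yields a logarithmic bound on $n$. Instead, I would control total variation additively through Hellinger distance. The steps are:
\begin{itemize}
\item $d_{\mathrm{TV}}(P^{\otimes n},Q^{\otimes n})\le \sqrt{2}\,h(P^{\otimes n},Q^{\otimes n})$, where $h^2(P,Q)=1-\int\sqrt{dP\,dQ}$;
\item $h^2(P^{\otimes n},Q^{\otimes n})\le n\,h^2(P,Q)$, by subadditivity;
\item $h^2\le \mathrm{KL}/2\le \chi^2/2$ (up to the normalization convention for $h^2$).
\end{itemize}
Together these give $d_{\mathrm{TV}}\le\sqrt{n\chi^2}$ for the appropriate constants. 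A cleaner alternative is the direct chain $d_{\mathrm{TV}}\le\sqrt{\mathrm{KL}/2}$ with $\mathrm{KL}(P^{\otimes n}\|Q^{\otimes n})=n\,\mathrm{KL}\le n\chi^2$, which gives $d_{\mathrm{TV}}\le\sqrt{n\chi^2/2}$.

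Either way, $1-\epsilon\le\sqrt{n\,\eta^{H-t}\Delta^2}$, and rearranging gives $n\ge(1-\epsilon)^2/(\eta^{H-t}\Delta^2)$. This matches the stated constant, with slack from the factor $1/2$. The regime condition $\eta^{H-t}\Delta^2\le1$ is needed only to keep the bound meaningful, that is, at least of order one.

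For the complementary claim that $O(1)$ samples suffice when $\eta^{H-t}\Delta^2>1$, I would note that the decay bound is then vacuous as a constraint, so the lower bound is $O(1)$. Where the outcome distributions are in fact well separated, a likelihood-ratio test on a constant number of samples achieves any constant error by Hoeffding's inequality. I would phrase this remark as a consequence of the bound being $O(1)$ rather than as a separate theorem.

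\textbf{Part 3 (exponential growth).} This follows immediately from Part 2. Fix $\epsilon$ and $\Delta^2=\Theta(1)$; then $n\ge c\,\eta^{-(H-t)}$.

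\textbf{Main obstacle.} The delicate step is Part 2, where the statement requires a lower bound linear in $1/\chi^2$. That calls for an additive per-sample divergence (KL or Hellinger) rather than the tensorized $\chi^2$ identity, and the constant $(1-\epsilon)^2$ must come out exactly. I would verify the Pinsker/KL$\le\chi^2$ chain ($\mathrm{KL}\le\log(1+\chi^2)\le\chi^2$) as the primary route. I would also check the direction of absolute continuity ($P^{(0)}\ll P^{(1)}$) so that each divergence is finite, and confirm that this direction is preserved through $g$.
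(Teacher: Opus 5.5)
Your argument is correct. Part~1 matches the paper's proof: the SDPI is iterated through $K_t,\dots,K_{H-1}$, followed by ordinary data processing through $g$.

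Part~2 takes a different route. Write $\delta_R:=\chi^2(P^{(0)}_R\|P^{(1)}_R)$. The paper stays with $\chi^2$. It tensorizes,
\[
\chi^2\bigl(P^{(0)\otimes n}_R\big\|P^{(1)\otimes n}_R\bigr)=(1+\delta_R)^n-1\le e^{n\delta_R}-1\le 2n\delta_R \quad\text{for } n\delta_R\le 1,
\]
and then applies $d_{\mathrm{TV}}\le\sqrt{\chi^2/2}$ to get $d_{\mathrm{TV}}\le\sqrt{n\delta_R}$. The restriction $n\delta_R\le 1$ costs nothing, because the contrapositive only concerns $n\delta_R<(1-\epsilon)^2<1$.

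Your stated reason for avoiding this route is therefore mistaken. Invert the tensorized identity, even with $d_{\mathrm{TV}}\le\tfrac12\sqrt{\chi^2}$, and you get $n\ge\ln\bigl(1+4(1-\epsilon)^2\bigr)/\ln(1+\delta_R)$. Since $\ln(1+\delta_R)\le\delta_R$, this is already linear in $1/\delta_R$, not logarithmic.

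Your additive-divergence chains are still valid and buy something. The KL/Pinsker chain uses additivity of $\mathrm{KL}$ under products and $\mathrm{KL}\le\ln(1+\chi^2)\le\chi^2$, and the direction $P^{(0)}\ll P^{(1)}$ matches. It needs no case split or small-$n\delta_R$ linearization, holds outside the regime $\eta^{H-t}\Delta^2\le 1$, and gives the better constant $2(1-\epsilon)^2$. Your Hellinger chain reproduces the stated constant exactly.

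The paper's route, in turn, keeps everything in the single divergence the SDPI controls. It also yields the explicit error formula reused in Corollary~\ref{cor:collapse}.

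Your hedged treatment of the $\eta^{H-t}\Delta^2>1$ remark is the right one. That condition only upper-bounds the outcome $\chi^2$, so it does not by itself imply the outcome distributions are well separated. The paper's proof does not establish the ``$O(1)$ samples suffice'' claim either.
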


\noindent Remark~\ref{rem:noisy-outcome} in Appendix~\ref{app:signal-decay} extends Theorem~\ref{thm:signal_decay} to noisy terminal observations $R \sim G(\cdot \mid Z_H)$.

Inverting Theorem~\ref{thm:signal_decay} yields a horizon formula: given a fixed sample budget $n$, how far back can one reliably evaluate?

\begin{corollary}[Critical Horizon]
\label{cor:horizon}
For initial divergence $\Delta^2$, sample budget $n$, and any target error $\epsilon \in (0, 1/2)$, a necessary condition for achieving total testing error at most $\epsilon$ is that the step lies within the $\epsilon$-critical horizon:
\[{
    H - t \leq H_{\mathrm{crit}}(\epsilon)
    := \max\!\left\{0,\;\frac{\ln\!\bigl(n \Delta^2/(1-\epsilon)^2\bigr)}{\ln(1/\eta)}\right\}.
}\]

Steps more than $H_{\mathrm{crit}}(\epsilon)$ from the outcome require exponentially many samples to evaluate from outcome labels alone. When $n\Delta^2 < (1-\epsilon)^2$, the horizon is zero: even the final step cannot be distinguished at error $\epsilon$. For intuition, $H_{\mathrm{crit}} \approx \ln(n\Delta^2)/\ln(1/\eta)$ when $\epsilon$ is a fixed constant.
\end{corollary}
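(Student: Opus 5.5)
The plan is to invert the sample complexity lower bound of Theorem~\ref{thm:signal_decay}, splitting into two regimes according to the size of $\eta^{H-t}\Delta^2$.

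\textbf{Main regime.} Suppose $\eta^{H-t}\Delta^2 \le 1$. Part~2 of Theorem~\ref{thm:signal_decay} says that any test with total error at most $\epsilon$ needs
\[
n \ge \frac{(1-\epsilon)^2}{\eta^{H-t}\Delta^2}.
\]
Equivalently,
\[
\eta^{-(H-t)} \le \frac{n\Delta^2}{(1-\epsilon)^2}.
\]
Taking logarithms, and using $\ln(1/\eta)>0$ since $\eta<1$, gives
\[
H-t \le \frac{\ln\bigl(n\Delta^2/(1-\epsilon)^2\bigr)}{\ln(1/\eta)}.
\]
Since also $H-t\ge 0$, this yields $H-t\le H_{\mathrm{crit}}(\epsilon)$ with the $\max\{0,\cdot\}$ included.

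\textbf{Degenerate regime.} Suppose instead $\eta^{H-t}\Delta^2>1$. Then $\eta^{-(H-t)} < \Delta^2$, so $H-t<\ln(\Delta^2)/\ln(1/\eta)$. To compare this with $H_{\mathrm{crit}}(\epsilon)$, I need $\Delta^2\le n\Delta^2/(1-\epsilon)^2$. This holds because $n\ge1$ and $(1-\epsilon)^2<1$. So the inequality $H-t\le H_{\mathrm{crit}}(\epsilon)$ holds in this case as well, and the condition is necessary without any regime restriction.

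\textbf{Main obstacle.} The delicate point is making sure the Le~Cam bound underlying Part~2 is used correctly. I would rederive it directly rather than rely on the theorem's regime caveat. The steps are:
\begin{itemize}
\item Start from $d_{\mathrm{TV}}(P^{\otimes n},Q^{\otimes n})\ge 1-\epsilon$.
\item Combine Cauchy--Schwarz, $d_{\mathrm{TV}}\le \sqrt{\chi^2}$, with tensorization to get $(1-\epsilon)^2\le (1+\chi^2)^n-1$.
\item Bound $\chi^2\le \eta^{H-t}\Delta^2$ using Part~1, which itself follows from Corollary~\ref{cor:decay} and data processing through $g$.
\end{itemize}
If the $(1+x)^n-1\approx nx$ linearization causes trouble, I would instead use the subadditive bound $d_{\mathrm{TV}}(P^{\otimes n},Q^{\otimes n})\le n\,d_{\mathrm{TV}}(P,Q)$, or the Hellinger route, which gives $n\gtrsim (1-\epsilon)^2/\chi^2$ up to constants. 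Either route still gives the logarithmic inversion.

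\textbf{Remaining assertions.} Each follows directly:
\begin{itemize}
\item When $n\Delta^2<(1-\epsilon)^2$, the logarithm is negative, so $H_{\mathrm{crit}}=0$. Then even $H-t=1$ fails the necessary condition.
\item Steps beyond $H_{\mathrm{crit}}(\epsilon)$ need $n\ge(1-\epsilon)^2\eta^{-(H-t)}/\Delta^2$ samples, which is exponential in $H-t$ by Part~3.
\item For fixed $\epsilon$, $\ln(1-\epsilon)^{-2}=O(1)$, which gives the stated approximation $H_{\mathrm{crit}}\approx \ln(n\Delta^2)/\ln(1/\eta)$.
\end{itemize}
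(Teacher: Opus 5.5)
Your main argument is correct and is essentially the paper's proof. The paper simply rearranges the Part~2 bound $n\ge(1-\epsilon)^2/(\eta^{H-t}\Delta^2)$ and takes logarithms. Your separate treatment of $\eta^{H-t}\Delta^2>1$ is a valid addition that the paper omits. That caveat is in fact vacuous: in the paper's Part~2 argument, the contrapositive hypothesis $n\delta_R<(1-\epsilon)^2<1$ (with $\delta_R:=\chi^2(P^{(0)}_R\|P^{(1)}_R)$) already guarantees the condition $n\delta_R\le 1$ needed for the linearization.

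One caution concerns your optional re-derivation. With the constant-free bound $d_{\mathrm{TV}}\le\sqrt{\chi^2}$, you only reach $n\eta^{H-t}\Delta^2\ge\ln\bigl(1+(1-\epsilon)^2\bigr)$. This is strictly weaker than the stated $(1-\epsilon)^2$. The subadditivity fallback is worse: it gives only $n\gtrsim(1-\epsilon)/\sqrt{\eta^{H-t}\Delta^2}$, which puts $n^2$ in place of $n$ inside the logarithm. So neither route reproduces $H_{\mathrm{crit}}(\epsilon)$ exactly.

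If you want a self-contained version, keep the factor $\tfrac12$ that Cauchy--Schwarz actually gives, $d_{\mathrm{TV}}\le\tfrac12\sqrt{\chi^2}$. Then $(1+\delta_R)^n\ge 1+4(1-\epsilon)^2$. Combining $\ln(1+\delta_R)\le\delta_R$ with $\ln(1+4y)\ge y$ on $[0,1]$ yields exactly $n\delta_R\ge(1-\epsilon)^2$, with no regime restriction.
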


\begin{example}[Critical Horizon in Practice]
\label{ex:horizon}
With $\eta = 0.9$, $\Delta^2 = 0.1$, and $n = 10^6$, the critical horizon is $H_{\mathrm{crit}} = \ln(10^5)/\ln(1/0.9) \approx 110$: one million outcome samples can reliably evaluate steps up to 110 steps before the outcome. With stronger contraction $\eta = 0.7$, the same budget yields $H_{\mathrm{crit}} \approx 32$. The formula depends logarithmically on $n$ and $\Delta^2$ but much more strongly on $\eta$: doubling $n$ extends the horizon by $\ln 2/\ln(1/\eta)$ steps ($\approx 6.6$ when $\eta=0.9$), while reducing $\eta$ from $0.9$ to $0.7$ cuts it by a factor of three.
\end{example}

The exponential rate in Theorem~\ref{thm:signal_decay} is achievable: the following proposition constructs a Bernoulli endpoint-testing instance in which the likelihood ratio test matches the lower bound up to constants.

\begin{proposition}[Achievability: Matching Upper Bound]
\label{prop:achievability}
Consider a stylized two-hypothesis problem where the outcome $R \in \{0,1\}$ is Bernoulli with parameters:
\begin{itemize}[leftmargin=*,itemsep=2pt]
\item Under $H_0$: $\Prob(R = 1) = p_0$
\item Under $H_1$: $\Prob(R = 1) = p_1 = p_0 + \delta$ where $\delta = \Theta(\eta^{(H-t)/2}\Delta)$
\end{itemize}
Assume $p_0 \in [\epsilon_0, 1-\epsilon_0]$ for some constant $\epsilon_0 > 0$, so the outcome is non-degenerate under both hypotheses. Then the $\chi^2$-divergence between outcome distributions is $\chi^2(P^{(0)}_R \| P^{(1)}_R) = \Theta(\eta^{H-t}\Delta^2)$, and the standard likelihood ratio test achieves total testing error $\epsilon$ with sample complexity $n = O(1/(\eta^{H-t}\Delta^2))$.
\end{proposition}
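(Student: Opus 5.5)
The plan is to compute the $\chi^2$-divergence between the two Bernoulli laws exactly, then bound the error of the likelihood ratio test on $n$ samples with a Hellinger-affinity bound.

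\textbf{Divergence.} For $P^{(0)}_R=\mathrm{Bern}(p_0)$ and $P^{(1)}_R=\mathrm{Bern}(p_1)$, summing over the two outcomes gives the closed form
\[
\chi^2(P^{(0)}_R\|P^{(1)}_R)=\frac{\delta^2}{p_1}+\frac{\delta^2}{1-p_1}=\frac{\delta^2}{p_1(1-p_1)}.
\]
Since $\delta=\Theta(\eta^{(H-t)/2}\Delta)$, we have $\delta\to 0$ in the regime of interest ($\eta^{H-t}\Delta^2\le 1$). We may shrink the constant in $\delta$ so that $|\delta|\le\epsilon_0/2$. Then $p_1\in[\epsilon_0/2,1-\epsilon_0/2]$, so $p_1(1-p_1)\in[\epsilon_0/4,1/4]$. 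This yields $\chi^2=\Theta(\delta^2)=\Theta(\eta^{H-t}\Delta^2)$, with constants depending only on $\epsilon_0$.

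\textbf{Achievability.} I would avoid analyzing the likelihood ratio test directly. Instead I would use the fact that it is the Bayes/Neyman--Pearson optimal test for total error. Its $n$-sample total error therefore equals $\mathrm{err}^*_n=1-d_{\mathrm{TV}}(P^{(0)\otimes n},P^{(1)\otimes n})$, and I bound this via the Bhattacharyya coefficient $\mathrm{BC}$:
\[
\mathrm{err}^*_n\le \mathrm{BC}(P^{(0)},P^{(1)})^n.
\]
This follows from $1-d_{\mathrm{TV}}\le\sum\min\le\sum\sqrt{pq}$ together with tensorization of $\mathrm{BC}$. For Bernoulli laws,
\[
\mathrm{BC}=\sqrt{p_0p_1}+\sqrt{(1-p_0)(1-p_1)}=1-H^2,
\]
where the squared Hellinger distance satisfies $H^2\ge c\,\delta^2/(p_1(1-p_1))\ge c'\delta^2$. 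The constants $c,c'$ come from a Taylor expansion, or from the inequality $H^2\ge \frac{(p_0-p_1)^2}{2(\sqrt{p_0}+\sqrt{p_1})^2}$ and its analogue for the complement. Hence $\mathrm{err}^*_n\le(1-c'\delta^2)^n\le e^{-c'n\delta^2}$, which is at most $\epsilon$ once $n\ge \ln(1/\epsilon)/(c'\delta^2)=O(1/(\eta^{H-t}\Delta^2))$ for fixed $\epsilon$.

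Alternatively, a more elementary route is a Chebyshev or Hoeffding analysis of the threshold test $\hat p\gtrless p_0+\delta/2$, which is the form the likelihood ratio test takes here. Hoeffding gives each error probability at most $e^{-n\delta^2/2}$, which again yields $n=O(\log(1/\epsilon)/\delta^2)$.

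\textbf{Main obstacle.} The main step is the lower bound on the Hellinger distance (equivalently, the per-sample separation), uniformly in $p_0$. This is exactly where the non-degeneracy assumption $p_0\in[\epsilon_0,1-\epsilon_0]$ enters: it keeps the Bernoulli variances bounded below, so that $\chi^2$, $H^2$ and $\delta^2$ are all comparable up to constants depending on $\epsilon_0$.
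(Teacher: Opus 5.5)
Your proof is correct. Its first half is the same as the paper's: both compute $\chi^2(P^{(0)}_R\|P^{(1)}_R)=\delta^2/(p_1(1-p_1))$ and read off $\Theta(\delta^2)$.

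The routes differ on achievability. The paper's proof does not analyze the test. It only cites the known fact that the likelihood ratio test attains $n=\Theta(1/\chi^2)$ for constant error. You prove that fact. You use Bayes optimality of the likelihood ratio test to identify its total error with $\mathrm{err}^*_n=\sum\min\le\mathrm{BC}^n\le e^{-c'n\delta^2}$. Your Hoeffding alternative also works, because the midpoint-threshold test's total error upper-bounds the likelihood ratio test's by the same optimality. However, the likelihood ratio test's own threshold on $\hat p$ is generally not exactly $p_0+\delta/2$, so drop the phrase ``which is the form the likelihood ratio test takes.'' What your route buys is a self-contained bound with the $\ln(1/\epsilon)$ dependence explicit.

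Non-degeneracy is handled differently. The paper's proof quietly replaces the hypothesis by $p_1\in[\epsilon_0,1-\epsilon_0]$. You instead derive control of $p_1$ from $p_0$. Both are patching a real looseness in the statement: with only $p_0$ controlled, nothing stops $p_1$ from approaching $0$ or $1$, and then $\chi^2$ blows up. Your patch is valid only asymptotically. The fact $\delta\to 0$ follows from $H-t\to\infty$, not from $\eta^{H-t}\Delta^2\le 1$, which gives only $\delta=O(1)$. Also, you cannot ``shrink the constant'' in a $\Theta$ that is part of the hypothesis.

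You have also misidentified the main obstacle. The Hellinger lower bound needs no non-degeneracy. Since $(\sqrt{a}-\sqrt{b})^2=(a-b)^2/(\sqrt{a}+\sqrt{b})^2\ge (a-b)^2/4$ for $a,b\in[0,1]$, applying this to both outcomes gives $1-\mathrm{BC}\ge \delta^2/4$ for all $p_0,p_1$. So the entire achievability half holds unconditionally. Non-degeneracy is needed only for the upper bound $\chi^2=O(\delta^2)$, that is, for the $\Theta$ claim on the divergence. A variance bounded below controls $\chi^2$ from above; it does not bound the Hellinger distance from below.
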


\noindent Together, Theorem~\ref{thm:signal_decay} and Proposition~\ref{prop:achievability} show that the exponential rate $(1/\eta)^{H-t}$ is both necessary and sufficient for the matched endpoint-testing problem: the scaling is tight up to constants. Example~\ref{ex:tight-chain} in the appendix constructs a full Markov-chain instance achieving this rate with equality, confirming that the barrier is a property of the information structure, not of our proof technique.

The preceding results assume exact lumpability (Definition~\ref{def:abstraction}), but in practice this may hold only approximately. Proposition~\ref{prop:approx-lumpability} in Appendix~\ref{app:signal-decay} shows that if each step introduces at most $\delta$ total-variation error relative to the lumpable idealization, the total-variation distance between outcome distributions acquires an additive penalty of $2(H-t)\delta$ on top of the signal-decay term $\sqrt{\eta^{H-t}\Delta^2/2}$. For a 50-stage process with per-step discrepancy $\delta = 0.001$, the accumulated penalty is at most $2 \times 50 \times 0.001 = 0.1$ in total variation distance. Approximate lumpability becomes a concern only for very deep processes or coarse abstractions, in which case the partition should be refined.

Theorem~\ref{thm:signal_decay} tells us how many samples are needed for reliable testing, and Corollary~\ref{cor:horizon} inverts this into a horizon formula. But neither quantifies how bad the error is when the sample budget falls short. The following corollary fills this gap: it gives an explicit lower bound on the minimax testing error, showing that when the available samples $n$ fall below the threshold $1/(\eta^{H-t}\Delta^2)$ required by Theorem~\ref{thm:signal_decay}, the error remains close to $1/2$; no test can do meaningfully better than random guessing.

\begin{corollary}[Statistical Limits of Value Estimation]
\label{cor:collapse}
Fix $\epsilon\in(0,1/2)$ and let $R^{1:n}$ denote $n$ i.i.d.\ outcome samples.
Under the Markov state abstraction (Definition~\ref{def:abstraction}) and Assumption~\ref{ass:contractive}, the minimax error for testing $H_0$ vs.\ $H_1$ based only on $R^{1:n}$ satisfies
\[
\inf_{\psi}\max_{i\in\{0,1\}}\Prob_i(\psi(R^{1:n})\neq i)
\;\ge\;
\frac12\left(1-\sqrt{\frac{(1+\eta^{H-t}\Delta^2)^n-1}{2}}\right).
\]
In particular, if
\[
n \;\le\; \frac{\ln\!\left(1+2(1-2\epsilon)^2\right)}{\eta^{H-t}\Delta^2},
\]
then every test $\psi$ has minimax error at least $\epsilon$.
\end{corollary}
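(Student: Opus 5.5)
The plan is a standard Le~Cam two-point argument: reduce minimax error to total variation, bound total variation by $\chi^2$, tensorize the $\chi^2$-divergence over the $n$ samples, and plug in the single-sample bound from Theorem~\ref{thm:signal_decay}.

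\textbf{Step 1 (minimax to total error).} For any test $\psi$, the maximum of the two error probabilities is at least their average. The sum of the two errors is at least $\mathrm{err}^*_n = 1 - d_{\mathrm{TV}}(P^{(0)\otimes n}_R, P^{(1)\otimes n}_R)$ by Definition~\ref{def:distinguish}. Hence
\[
\inf_\psi \max_i \Prob_i(\psi \neq i) \ge \tfrac12\bigl(1 - d_{\mathrm{TV}}(P^{(0)\otimes n}_R, P^{(1)\otimes n}_R)\bigr).
\]

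\textbf{Step 2 (TV to $\chi^2$).} I will use Pinsker, $d_{\mathrm{TV}} \le \sqrt{\mathrm{KL}/2}$, together with $\mathrm{KL} \le \ln(1+\chi^2) \le \chi^2$ (Jensen applied to the log-likelihood ratio). This gives $d_{\mathrm{TV}}(P,Q) \le \sqrt{\chi^2(P\|Q)/2}$, which matches the constant $\sqrt{\cdot/2}$ in the statement. Applying it to the product measures and using the tensorization identity $\chi^2(P^{\otimes n}\|Q^{\otimes n}) = (1+\chi^2(P\|Q))^n - 1$ reduces everything to the single-sample divergence $\chi^2(P^{(0)}_R\|P^{(1)}_R)$.

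\textbf{Step 3 (signal decay).} Part 1 of Theorem~\ref{thm:signal_decay}, which is valid under exactly the assumptions stated here, gives $\chi^2(P^{(0)}_R\|P^{(1)}_R) \le \eta^{H-t}\Delta^2$. Since $x \mapsto (1+x)^n$ is monotone, combining this with Steps 1 and 2 yields the displayed inequality.

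\textbf{Step 4 (sample-size condition).} It remains to show the right side is at least $\epsilon$, which is equivalent to
\[
(1+\eta^{H-t}\Delta^2)^n - 1 \le 2(1-2\epsilon)^2.
\]
Set $x = \eta^{H-t}\Delta^2$. Then $(1+x)^n \le e^{nx}$, and the hypothesis $n x \le \ln(1+2(1-2\epsilon)^2)$ gives $e^{nx} \le 1 + 2(1-2\epsilon)^2$. This is exactly the required inequality.

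\textbf{Main obstacle.} The only delicate point is Step 2: I must use the $\chi^2$ form of the TV bound with constant $1/2$, not the cruder $d_{\mathrm{TV}} \le \tfrac12\sqrt{\chi^2}$. The crude bound would change the constants and no longer match the statement, so I will justify the chain $\mathrm{KL} \le \ln(1+\chi^2) \le \chi^2$ explicitly. Everything else is routine algebra.
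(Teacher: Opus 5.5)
Your proposal is correct and follows the paper's proof essentially step for step: Le~Cam's two-point bound, $d_{\mathrm{TV}}\le\sqrt{\chi^2/2}$, tensorization of $\chi^2$ over the $n$ samples, the single-sample bound from Theorem~\ref{thm:signal_decay}, and $(1+x)^n\le e^{nx}$ for the sample-size condition. One correction to your closing remark: the Cauchy--Schwarz bound $d_{\mathrm{TV}}\le\tfrac12\sqrt{\chi^2}$ is sharper than $\sqrt{\chi^2/2}$, not cruder, so it would also give the stated inequality a fortiori, and Step~2 is not a genuine obstacle.
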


\noindent Corollary~\ref{cor:collapse} uses the per-hypothesis error convention $\max_i \Prob_i(\psi \neq i)$, standard for Le~Cam bounds, whereas Definition~\ref{def:distinguish} uses the total error $\Prob_0(\psi=1)+\Prob_1(\psi=0)$; the two are related by a factor of two and yield equivalent scaling in $\eta^{H-t}\Delta^2$. When $n\eta^{H-t}\Delta^2$ is small, $(1 + \eta^{H-t}\Delta^2)^n \approx 1 + n\eta^{H-t}\Delta^2$, so the minimax error is close to $1/2$: the best possible test is barely better than a coin flip. For reinforcement learning, this means that critics trained from outcome labels cannot reliably evaluate steps beyond the critical horizon.

\section{Limits of Parallel Sampling}
\label{sec:width}

The Signal Decay Bound reveals that outcome signals weaken exponentially with depth. A natural response is to generate more samples: $W$ independent observations from the same starting condition, as in replicated testing or width-based methods like GRPO \citep{deepseek2024}. The intuition is that even if individual observations carry weak signal, averaging enough of them should recover the truth. This approach might seem to extract more signal from outcome data, despite Section~\ref{sec:distinguish} showing fundamental limits.

But can width truly overcome the exponential barrier? This section shows that it cannot: width reduces noise around an estimate but cannot amplify a signal that has already decayed. The statistical tools, variance reduction under averaging and design effects under equicorrelation \citep{kish1965survey}, are classical, but their interaction with the signal decay structure of Section~\ref{sec:distinguish} yields new limits on the reach of parallel sampling. We proceed in two steps: derive variance reduction bounds under independence (Section~\ref{subsec:multirollout}) and show that correlation limits effective width (Section~\ref{subsec:width_limit}). \emph{Proofs of all results in this section appear in Appendix~\ref{app:width}.}

\subsection{Multi-Rollout Value Estimation}
\label{subsec:multirollout}

Consider a learner who wishes to estimate the value $V^\pi(s) := \Prob_\pi(R = 1 \mid S_t = s)$, the probability of eventual success from state $s$ under the current policy $\pi$. (When conditioning on abstract state, replace $S_t$ with $Z_t$.) Rather than observing a single trajectory, the learner generates $W$ independent continuations from $s$ and observes their outcomes.

Throughout this section, we analyze the conditional distribution given a realized $s$; unconditional properties follow by averaging over the distribution of $s$.

\begin{definition}[Multi-Rollout Estimator]
\label{def:multirollout}
Given a state $s$ and $W$ rollout continuations $\tau_1, \ldots, \tau_W$ from $s$ under policy $\pi$, the \emph{multi-rollout value estimator} is:
\[{
    \hat{V}_W(s) := \frac{1}{W} \sum_{j=1}^W R_j,
}\]

where $R_j \in \{0, 1\}$ is the binary outcome of rollout $\tau_j$.
\end{definition}

The estimator is simply the empirical success rate across rollouts. Its statistical properties depend on the independence structure.

\begin{assumption}[Conditional Independence]
\label{ass:independence}
Given the conditioning variable $s$ and a fixed policy $\pi$, the rollouts $\tau_1, \ldots, \tau_W$ are conditionally independent:
\[{
    P(\tau_1, \ldots, \tau_W \mid s, \pi) = \prod_{j=1}^W P(\tau_j \mid s, \pi).
}\]

\end{assumption}

This assumption holds when each rollout is generated by independent sampling from the policy starting at $s$. It fails when rollouts share randomness, when the policy adapts based on previous rollouts, or when systematic biases cause rollouts to follow similar paths.

\begin{proposition}[Variance Reduction from Width]
\label{prop:width}
Under Assumption~\ref{ass:independence}, the multi-rollout estimator satisfies:
\begin{enumerate}
    \item \textbf{Unbiasedness:} $\mathbb{E}[\hat{V}_W(s) \mid s] = V^\pi(s)$.
    
    \item \textbf{Variance reduction:} $\mathrm{Var}(\hat{V}_W(s) \mid s) = \dfrac{V^\pi(s)(1 - V^\pi(s))}{W}$.
    
    \item \textbf{Concentration (Hoeffding):} With probability at least $1 - \delta$: $|\hat{V}_W(s) - V^\pi(s)| \leq \sqrt{\frac{\ln(2/\delta)}{2W}}$

\end{enumerate}
\end{proposition}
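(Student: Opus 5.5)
The plan is to treat the outcomes $R_1,\dots,R_W$ as i.i.d.\ Bernoulli random variables conditional on $s$, and then read off each of the three claims from standard facts. First I will fix the marginal law. Under Assumption~\ref{ass:independence}, each rollout $\tau_j$ is drawn from the same conditional law $P(\cdot\mid s,\pi)$, and $R_j$ is a measurable function of $\tau_j$: it is $g$ applied to the terminal state. So the $R_j$ are conditionally i.i.d.\ given $s$. By Definition~\ref{def:value},
\[
\Prob(R_j=1\mid s)=\E_\pi[R\mid S_t=s]=V^\pi(s).
\]
Hence each $R_j\sim\mathrm{Bernoulli}(V^\pi(s))$ conditionally on $s$, and the same holds with $S_t$ replaced by $Z_t$ when we condition on the abstract state.

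Unbiasedness then follows from linearity of conditional expectation:
\[
\E[\hat V_W(s)\mid s]=\frac1W\sum_{j=1}^W\E[R_j\mid s]=V^\pi(s).
\]
For the variance, I will first note that $\mathrm{Var}(R_j\mid s)=V^\pi(s)(1-V^\pi(s))$, which is the Bernoulli variance (use $R_j^2=R_j$). Conditional independence makes all the cross covariances vanish, so
\[
\mathrm{Var}(\hat V_W(s)\mid s)=\frac{1}{W^2}\sum_{j=1}^W \mathrm{Var}(R_j\mid s)=\frac{V^\pi(s)(1-V^\pi(s))}{W}.
\]

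For concentration, I will apply Hoeffding's inequality conditionally on $s$ to the independent variables $R_j\in[0,1]$. This gives
\[
\Prob\bigl(|\hat V_W(s)-V^\pi(s)|\ge u \,\big|\, s\bigr)\le 2e^{-2Wu^2}.
\]
Setting $2e^{-2Wu^2}=\delta$ and solving gives $u=\sqrt{\ln(2/\delta)/(2W)}$, which is the stated bound.

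No step is a real obstacle. The only point needing care is justifying that conditional independence of the trajectories transfers to the outcomes, and that the outcomes share a common mean equal to $V^\pi(s)$. Both follow because $R_j$ is a deterministic function of $\tau_j$ and the rollouts are drawn from a fixed Markov policy starting at $s$; the Markov property makes the conditioning at step $t$ consistent with Definition~\ref{def:value}.
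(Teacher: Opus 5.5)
Your proposal is correct and matches the paper's proof: linearity of conditional expectation gives unbiasedness, the Bernoulli variance $V^\pi(s)(1-V^\pi(s))$ with vanishing cross-covariances gives the variance, and Hoeffding's inequality inverted at level $\delta$ gives the concentration bound. Your remark that conditional independence passes from the trajectories to the outcomes, because each $R_j$ is a function of $\tau_j$, is a small clarification that the paper leaves implicit.
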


The variance in Part~2 is largest when $V^\pi(s) = 1/2$ (maximum uncertainty) and zero when outcomes are deterministic. Dividing by $W$ confirms the standard result: to halve the standard deviation, quadruple $W$. Part~3 follows from Hoeffding's inequality.

The preceding analysis assumes independent rollouts. In practice, rollouts from the same starting point are correlated, which further limits the benefit of width.

\subsection{Correlation Limits Effective Width}
\label{subsec:width_limit}

To quantify this limitation, we must account for correlation among rollouts from the same prefix. Consider collecting data from $n$ independent starting points, generating $W$ rollouts from each. Rollouts from \emph{different} starting points are independent, and their outcomes are uncorrelated. However, rollouts from the \emph{same} starting point share the same prefix and tend to make similar errors, inducing positive correlation among their outcomes. Correlation arises from shared prefix errors (all continuations inherit a latent early defect), mode collapse (low-entropy policies generate near-identical continuations, driving $\rho\to 1$), and systematic biases (learned error patterns activated by specific inputs).
When $W = 1$ (one rollout per starting point), we obtain $n$ i.i.d.\ outcome observations. When $W > 1$, this within-starting-point correlation reduces the effective sample size, as the following theorem quantifies.

\begin{theorem}[Effective Width Under Correlation {\citep[cf.][]{kish1965survey}}]
\label{thm:correlated}
Consider $W$ rollouts from a single starting point $s$. Assume the outcomes satisfy equicorrelation: $\mathrm{Corr}(R_i, R_j \mid s) = \rho$ for all $i \neq j$, where $\rho \in [0, 1)$ is a constant. (More generally, let $\rho$ be an upper bound on pairwise correlations.) The variance of the multi-rollout estimator $\hat{V}_W(s) = \frac{1}{W}\sum_{j=1}^W R_j$ satisfies:
\[{
    \mathrm{Var}(\hat{V}_W(s) \mid s) = \frac{\sigma^2(s)}{W} \cdot (1 + (W-1)\rho),
}\]

where $\sigma^2(s) = V^\pi(s)(1 - V^\pi(s))$. Equivalently, the correlated estimator has the same variance as an estimator using $W_{\mathrm{eff}}$ independent rollouts, where the \emph{effective width} is:
\[{
    W_{\mathrm{eff}} := \frac{W}{1 + (W-1)\rho}.
}\]

When $\rho = 0$ (perfect independence), $W_{\mathrm{eff}} = W$ and no degradation occurs. When $\rho > 0$, the effective width saturates as $W \to \infty$: $W_{\mathrm{eff}} \to 1/\rho$.
\end{theorem}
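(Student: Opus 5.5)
The plan is to compute the variance of the average directly from the covariance structure of the $W$ outcomes, conditional on the starting point $s$. Conditional on $s$, each $R_j$ is Bernoulli with mean $V^\pi(s)$; this holds because every rollout is generated from $s$ under the same policy $\pi$, exactly as in the unbiasedness part of Proposition~\ref{prop:width}. Each $R_j$ therefore has conditional variance $\sigma^2(s)=V^\pi(s)(1-V^\pi(s))$. Equicorrelation then gives $\mathrm{Cov}(R_i,R_j\mid s)=\rho\,\sigma^2(s)$ for $i\neq j$, since the two standard deviations are equal.

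I will expand the variance by bilinearity:
\[
\mathrm{Var}\Bigl(\tfrac1W\textstyle\sum_j R_j \,\Big|\, s\Bigr)=\tfrac{1}{W^2}\Bigl(\sum_j \mathrm{Var}(R_j\mid s)+\sum_{i\neq j}\mathrm{Cov}(R_i,R_j\mid s)\Bigr).
\]
The diagonal contributes $W\sigma^2(s)$ and the off-diagonal contributes $W(W-1)\rho\sigma^2(s)$. Together these give $\frac{\sigma^2(s)}{W}(1+(W-1)\rho)$.

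To identify the effective width, I set this equal to $\sigma^2(s)/W_{\mathrm{eff}}$, which is the variance of $W_{\mathrm{eff}}$ independent rollouts by Proposition~\ref{prop:width}. Solving gives $W_{\mathrm{eff}}=W/(1+(W-1)\rho)$. Setting $\rho=0$ recovers $W_{\mathrm{eff}}=W$. For $\rho>0$, dividing numerator and denominator by $W$ gives $W_{\mathrm{eff}}=1/(1/W+\rho(1-1/W))$, which tends to $1/\rho$ as $W\to\infty$. This function is also increasing in $W$, so $1/\rho$ is a cap that $W_{\mathrm{eff}}$ never exceeds.

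For the parenthetical generalization, where $\rho$ is only an upper bound on the pairwise correlations, the same expansion goes through with an inequality. Each off-diagonal covariance is at most $\rho\sigma^2(s)$, so the formula becomes an upper bound on the variance. Correspondingly, $W/(1+(W-1)\rho)$ becomes a lower bound on the effective width. A lower bound on width does not imply a cap, so to keep the statement that width saturates at $1/\rho$ I would state that cap only under exact equicorrelation (or under a matching lower bound on the correlations).

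Every step is elementary, so there is no real obstacle. The only point needing care is the convention that correlation is defined relative to the conditional variances given $s$, and that these variances are equal across $j$. Degenerate cases also need handling: if $V^\pi(s)\in\{0,1\}$, both sides are zero and the identity holds trivially.
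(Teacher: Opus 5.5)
Your proof is correct and matches the paper's argument: split the variance of the sum into $W\sigma^2(s)$ diagonal terms and $W(W-1)\rho\,\sigma^2(s)$ covariance terms, divide by $W^2$, and solve $\mathrm{Var}(\hat{V}_W(s)\mid s)=\sigma^2(s)/W_{\mathrm{eff}}$ for the effective width. Your extra remark is also right: when $\rho$ is only an upper bound on the pairwise correlations, you get only $W_{\mathrm{eff}} \ge W/(1+(W-1)\rho)$, so the cap at $1/\rho$ needs exact equicorrelation (or a lower bound on the correlations), which the paper's parenthetical glosses over.
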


\begin{example}[Impact of Correlation]
\label{ex:correlation}
Suppose rollouts have correlation $\rho = 0.2$ (modest correlation from shared reasoning patterns). The effective width for various nominal widths:
\vspace{0.5\baselineskip}
\begin{center}
\begin{tabular}{c|ccccc}
$W$ & 10 & 50 & 100 & 500 & $\infty$ \\
\hline
$W_{\mathrm{eff}}$ & 3.6 & 4.6 & 4.8 & 5.0 & 5.0
\end{tabular}
\end{center}
\vspace{0.5\baselineskip}
Even with 500 rollouts, the effective width is only 5. Beyond roughly $1/\rho = 5$ rollouts, additional sampling provides negligible benefit.
\end{example}

The equicorrelation variance formula in Theorem~\ref{thm:correlated} is a classical design-effect result in survey sampling \citep{kish1965survey}; what is new is its implication for the credit assignment horizon, which we develop next. While the $W$ rollouts within each starting point are correlated, the $n$ starting points are independent. Define the \emph{effective sample size} as $N_{\mathrm{eff}} := n \cdot W_{\mathrm{eff}}$. Combined with the Signal Decay Bound (Theorem~\ref{thm:signal_decay}), distinguishing hypotheses about step $t$ requires
\[{
    N_{\mathrm{eff}} \;\gtrsim\; \frac{1}{\eta^{H-t}\Delta^2}.
}\]

The following corollary formalizes the resulting critical horizon.

\begin{corollary}[Critical Horizon with Width]
\label{cor:width-horizon}
Let $n$ be the number of independent starting points and $W_{\mathrm{eff}}$ the effective width per starting point. For fixed $\epsilon$, the critical horizon extends to
\[
H_{\mathrm{crit}} = \frac{\ln(n \cdot W_{\mathrm{eff}} \cdot \Delta^2)}{\ln(1/\eta)},
\]
where the $O(\ln(1/(1-\epsilon)))$ correction from Corollary~\ref{cor:horizon} is suppressed.
\end{corollary}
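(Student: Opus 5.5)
The plan is to treat the data as $n$ independent blocks and redo the argument behind Corollary~\ref{cor:horizon} with the per-block divergence in place of the per-sample one. The data are $n$ i.i.d.\ starting points, each contributing a block of $W$ correlated outcomes $(R_1,\ldots,R_W)$. Blocks are independent, so $\chi^2$ tensorizes across them:
\[
1+\chi^2(P^{(0)\otimes n}\|P^{(1)\otimes n})=\bigl(1+\chi^2_{\mathrm{blk}}\bigr)^n,
\]
where $\chi^2_{\mathrm{blk}}$ is the divergence between the two block distributions. The whole argument then reduces to showing that one block carries at most roughly $W_{\mathrm{eff}}$ times the single-outcome divergence $\eta^{H-t}\Delta^2$ from Theorem~\ref{thm:signal_decay}. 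After that, the Le~Cam inversion used for Corollary~\ref{cor:horizon} gives $nW_{\mathrm{eff}}\eta^{H-t}\Delta^2\gtrsim(1-\epsilon)^2$. Taking logarithms and dropping the $\ln(1/(1-\epsilon)^2)$ term yields the stated $H_{\mathrm{crit}}$.

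For the per-block step I would work at the level of the sufficient statistic the paper uses, namely the block mean $\hat V_W$. Under the two hypotheses its mean shifts by $\delta$, the same shift as for a single outcome. By Theorem~\ref{thm:signal_decay} and the Bernoulli computation in Proposition~\ref{prop:achievability}, $\delta^2/\sigma^2\lesssim\eta^{H-t}\Delta^2$. Its variance is $\sigma^2/W_{\mathrm{eff}}$ by Theorem~\ref{thm:correlated}. In the small-signal (local) regime the $\chi^2$ divergence between the two block-mean laws is, to leading order, the squared mean shift over the variance:
\[
\chi^2_{\mathrm{blk}}\approx\frac{\delta^2}{\sigma^2/W_{\mathrm{eff}}}=W_{\mathrm{eff}}\cdot\frac{\delta^2}{\sigma^2}\lesssim W_{\mathrm{eff}}\,\eta^{H-t}\Delta^2 .
\]
Equivalently, each starting point behaves as $W_{\mathrm{eff}}$ independent outcomes. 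So the effective sample size $N_{\mathrm{eff}}=nW_{\mathrm{eff}}$ replaces $n$ in Theorem~\ref{thm:signal_decay}, and inverting as in Corollary~\ref{cor:horizon} gives the formula. The statement suppresses the $\epsilon$-correction and reads ``for fixed $\epsilon$'', so I would present it at the same second-moment (Fisher-information) level as the preceding display $N_{\mathrm{eff}}\gtrsim 1/(\eta^{H-t}\Delta^2)$, stated up to constants.

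The main obstacle is making the per-block bound rigorous. Equicorrelation fixes only second moments, not the joint law of $(R_1,\ldots,R_W)$. The full block divergence could exceed the block-mean heuristic if higher-order dependence carries extra information. I would first try to avoid this by modeling the correlation through a shared latent prefix variable $L$ with rollouts conditionally i.i.d.\ given $L$. The hypothesis affects only the law of $L$ (the shared upstream state), and data processing applied to the Markov chain $Z_t\to L\to(R_1,\ldots,R_W)$ caps the divergence. The ratio of the between-$L$ variance to the within-$L$ variance is then controlled by $\rho$, giving the $W/(1+(W-1)\rho)$ factor. If that proves too delicate, I would state the corollary explicitly under the local (Gaussian/second-moment) approximation, consistent with the heuristic derivation that precedes it.
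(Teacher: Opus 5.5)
Your final step is the paper's entire proof. The paper simply says ``substitute $N_{\mathrm{eff}}=n\cdot W_{\mathrm{eff}}$ into Corollary~\ref{cor:horizon}'': it takes the display $N_{\mathrm{eff}}\gtrsim 1/(\eta^{H-t}\Delta^2)$ preceding the corollary at face value, treats $nW_{\mathrm{eff}}$ as an i.i.d.\ sample count, and never attempts a per-block divergence bound. At the heuristic, up-to-constants level you eventually settle on, the two arguments coincide. The extra justification you sketch does not turn this into a proof, however, because its key step gives an inequality in the wrong direction.

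Tensorizing over the $n$ independent blocks and inverting via Le~Cam is fine, but that route needs an \emph{upper} bound on $\chi^2_{\mathrm{blk}}$. For any statistic $f$ of a block, Cauchy--Schwarz gives $(\E_0 f-\E_1 f)^2\le \mathrm{Var}_1(f)\,\chi^2_{\mathrm{blk}}$. With $f=\hat V_W$, this says your quantity $\delta^2/(\sigma^2/W_{\mathrm{eff}})$ is a \emph{lower} bound on $\chi^2_{\mathrm{blk}}$. It matches $\chi^2_{\mathrm{blk}}$ to leading order only when the block likelihood ratio is nearly affine in $\hat V_W$. That holds for independent rollouts (the binomial family) but fails in general under correlation, since equicorrelation fixes only second moments.

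A concrete failure: let the hypotheses change the law of the shared success probability $m(L)=\Prob(R=1\mid L)$ by a mean-preserving spread. Take $m(L)\equiv\tfrac12$ under $H_0$, while under $H_1$ some mass moves symmetrically to $\tfrac12\pm c$; this is realizable with $L=Z_t$ on three abstract states. Single outcomes are then $\mathrm{Bernoulli}(\tfrac12)$ under both hypotheses and $\hat V_W$ has the same mean, so your formula returns $0$. Yet $\Prob(R_1=R_2)=\tfrac12+2\,\mathrm{Var}(m(L))$ differs between the hypotheses, so $\chi^2_{\mathrm{blk}}>0$ for $W\ge 2$.

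Your latent-variable fallback has the same defect. Data processing along $Z_t\to L\to(R_1,\ldots,R_W)$ only gives $\chi^2_{\mathrm{blk}}\le\chi^2(P^{(0)}_L\|P^{(1)}_L)$, which involves neither $W$ nor $\rho$. The between/within-variance step meant to produce $W/(1+(W-1)\rho)$ is again a second-moment statement. A rigorous version would need a genuine contraction bound for the channel taking a shared latent state to $W$ conditionally independent outcomes. Without one, present the corollary as the paper does, as a heuristic substitution, rather than claiming the block-mean computation supplies the required bound.
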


\noindent Width enters the critical horizon only logarithmically: doubling $W_{\mathrm{eff}}$ adds merely $\ln(2)/\ln(1/\eta) \approx 6.6$ steps when $\eta = 0.9$. And correlation caps the benefit: since $W_{\mathrm{eff}} \leq 1/\rho$, the effective sample size cannot exceed $n/\rho$ regardless of nominal width. The exponential dependence on depth cannot be escaped by increasing $W$.

In sum, width helps only when the step lies within the critical horizon and rollouts are approximately independent; beyond either limit, additional rollouts yield diminishing returns. For deep processes where $H > H_{\mathrm{crit}}$, width cannot substitute for step-level supervision; Appendix~\ref{app:budgeting} provides a decision rule: width alone is provably insufficient whenever $H > \ln(n\Delta^2/\rho)/\ln(1/\eta)$, and in practice one inspection dominates doubling $W$ whenever the maximal gap exceeds $H_{\mathrm{crit}}(\epsilon)$. This raises the question of how step-level supervision should be aggregated.

\section{From Outcome Limits to Process Supervision}
\label{sec:bellman}

The preceding sections established that outcome-based credit assignment faces an exponential barrier (Theorem~\ref{thm:signal_decay}) that width can only logarithmically mitigate (Section~\ref{sec:width}). For deep processes where $H > H_{\mathrm{crit}}$, this leaves only one path forward: \emph{step-level supervision}. Process reward models \citep{lightman2023lets}, intermediate checkpoints in manufacturing, and stage-gate reviews in project management all implement this principle. Step-level labels are not merely convenient; they are \emph{necessary}.

But step-level supervision introduces its own challenge: how should step-level feedback be aggregated into a training or evaluation objective? Standard reinforcement learning sums rewards across steps \citep{sutton2000policy}, but for sequential validity, where a single error invalidates the entire chain, this additive aggregation optimizes the wrong quantity. \emph{Proofs of all results in this section appear in Appendix~\ref{app:bellman}.}

\subsection{The Objective Mismatch}

Given that step-level supervision is necessary for deep processes, how should it be used? When training a system with step-level labels $r_t \in \{0,1\}$ indicating whether each step is valid, one can maximize either the expected number of correct steps or the probability that all steps are correct. These correspond, respectively, to additive and multiplicative aggregation of step-level feedback.

\begin{definition}[Step-Quality and Validity Objectives]
\label{def:two_objectives}
For a policy $\pi$ generating trajectories of length $H$ from initial state $S_0$, with step validity indicators $r_0, \ldots, r_{H-1}$:
\begin{enumerate}
    \item The \emph{step-quality objective} is $J_{\mathrm{add}} := \mathbb{E}_\pi\left[\sum_{t=0}^{H-1} r_t \,\big|\, S_0\right]$, the expected number of correct steps.
    \item The \emph{validity objective} is $J_{\mathrm{mult}} := \mathbb{E}_\pi\left[\prod_{t=0}^{H-1} r_t \,\big|\, S_0\right]$, the probability that all steps are correct.
\end{enumerate}
\end{definition}

\noindent Multiplicative objectives appear in risk-sensitive control \citep{howard1972risk} and stochastic reachability \citep{abate2008probabilistic}. Our contribution is identifying that \emph{sequential validity}, where a single failure invalidates the entire chain, naturally requires this structure, and analyzing the optimization challenges it creates. This applies to logical reasoning, manufacturing processes where any defective stage produces a defective product, and service operations where any failed touchpoint loses the customer.

Under independent steps with common success probability $p$, we have $J_{\mathrm{add}} = Hp$ while $J_{\mathrm{mult}} = p^H$. Even at 99\% per-step accuracy over 100 steps, $J_{\mathrm{add}} = 99$ while $J_{\mathrm{mult}} = 0.37$: systems optimizing additive objectives learn to produce sequences that are mostly correct but nonetheless invalid.

\subsection{Gradient Decay in the Multiplicative Objective}
\label{subsec:optimization}

The preceding analysis suggests using the multiplicative objective directly. However, this approach encounters a fundamental optimization barrier: the multiplicative objective exhibits vanishing gradients for long chains with imperfect per-step accuracy.

Under the independence model, differentiating $J_{\mathrm{add}} = Hp$ and $J_{\mathrm{mult}} = p^H$ with respect to policy parameters $\theta$ gives $\nabla_\theta J_{\mathrm{add}} = H \nabla_\theta p$ and $\nabla_\theta J_{\mathrm{mult}} = H p^{H-1} \nabla_\theta p$. The multiplicative gradient contains an additional factor $p^{H-1}$ that attenuates the learning signal. For $p = 0.95$ over a 100-step chain, this factor equals $0.95^{99} \approx 0.006$, rendering the validity objective nearly unlearnable. The barrier becomes severe precisely when it matters most: for longer chains and policies that have not yet achieved high per-step accuracy.

This gradient decay is conceptually distinct from the signal decay of Section~\ref{sec:distinguish}. Signal decay concerns information flow: outcome data becomes uninformative about early steps. Gradient decay concerns objective structure: the multiplicative objective attenuates learning signal regardless of information availability. A complete training framework must address both barriers. We propose a curriculum approach: begin with the learnable additive objective, then transition to the correct multiplicative objective as policy quality improves \citep{bengio2009curriculum}.

\begin{definition}[Interpolated Objective]
\label{def:interpolated}
For $\lambda \in [0,1]$, the \emph{interpolated objective} is:
\[{
    J_{\lambda}(\theta) := (1-\lambda) \cdot J_{\mathrm{add}}(\theta) + \lambda \cdot J_{\mathrm{mult}}(\theta).
}\]

\end{definition}

\noindent At $\lambda = 0$, the objective is learnable but incorrect; at $\lambda = 1$, correct but potentially unlearnable. The following result characterizes how $\lambda$ should evolve with policy quality.

\begin{proposition}[Gradient-Preserving Curriculum]
\label{prop:curriculum}
Assume $\nabla_\theta J_{\mathrm{add}}$ and $\nabla_\theta J_{\mathrm{mult}}$ have non-negative inner product. To maintain $\|\nabla_\theta J_{\lambda}\| \geq c \cdot \|\nabla_\theta J_{\mathrm{add}}\|$ for some $c \in (0,1)$, it suffices to set $\lambda \leq 1 - c$.
\end{proposition}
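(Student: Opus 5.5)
The claim follows from linearity of the gradient combined with a lower bound on the norm of a sum of two vectors with non-negative inner product. Write $g_a := \nabla_\theta J_{\mathrm{add}}$ and $g_m := \nabla_\theta J_{\mathrm{mult}}$. By Definition~\ref{def:interpolated}, $\nabla_\theta J_\lambda = (1-\lambda) g_a + \lambda g_m$, since $\lambda$ does not depend on $\theta$.

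The main step is to show that, for $\lambda \in [0,1]$,
\[
\|(1-\lambda) g_a + \lambda g_m\|^2 \;\ge\; (1-\lambda)^2 \|g_a\|^2 .
\]
Expanding the square gives
\[
(1-\lambda)^2\|g_a\|^2 + 2\lambda(1-\lambda)\langle g_a, g_m\rangle + \lambda^2\|g_m\|^2 .
\]
The cross term is non-negative: $\lambda(1-\lambda)\ge 0$ on $[0,1]$, and $\langle g_a,g_m\rangle \ge 0$ by hypothesis. The last term is trivially non-negative. Dropping both and taking square roots yields $\|\nabla_\theta J_\lambda\| \ge (1-\lambda)\|g_a\|$.

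To conclude, if $\lambda \le 1-c$, then $1-\lambda \ge c$, so $\|\nabla_\theta J_\lambda\| \ge c\,\|g_a\|$, which is the claimed guarantee. This holds uniformly in how small $g_m$ is, so it covers the regime of Section~\ref{subsec:optimization} where $g_m$ carries the vanishing factor $p^{H-1}$.

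The only real obstacle is making sure the non-negative inner-product hypothesis is used correctly. Without it, the cross term could cancel the $g_a$ contribution and the bound would fail. For example, under the independence model $g_m = p^{H-1} g_a$, so the inner product is automatically non-negative, but in general it must be assumed. I would also remark that the condition is sufficient rather than tight: using $\|g_m\|$ yields larger admissible $\lambda$, but the statement asks only for sufficiency.
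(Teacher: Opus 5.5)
Your proposal is correct and follows the same route as the paper: linearity gives $\nabla_\theta J_\lambda = (1-\lambda)\nabla_\theta J_{\mathrm{add}} + \lambda\nabla_\theta J_{\mathrm{mult}}$, the non-negative inner product gives $\|\nabla_\theta J_\lambda\| \ge (1-\lambda)\|\nabla_\theta J_{\mathrm{add}}\|$, and $\lambda \le 1-c$ finishes. Your explicit expansion of the squared norm just writes out the step the paper states in one line.
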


\noindent The proposition establishes that the interpolated objective preserves gradient signal when $\lambda$ is sufficiently small. In the independence model where $J_{\mathrm{add}} = Hp$ and $J_{\mathrm{mult}} = p^H$, the gradients satisfy $\|\nabla_\theta J_{\mathrm{mult}}\| = p^{H-1} \|\nabla_\theta J_{\mathrm{add}}\|$. As the policy improves and per-step success probability $p$ approaches $1$, this attenuation factor approaches $1$, allowing $\lambda$ to increase toward $1$ without sacrificing gradient magnitude. This yields a curriculum: begin training with $\lambda$ near $0$ to ensure learnability, then gradually increase $\lambda$ toward $1$ as the policy improves, transitioning from a learnable but misspecified objective to one that is correctly specified.

Having established \emph{why} step-level supervision is necessary and \emph{how} to aggregate it, the next section addresses \emph{where}: given an inspection budget, which steps should be observed?

\section{Optimal Intermediate Inspection Design}
\label{sec:inspection-design}

We now convert the information-theoretic limits into \emph{optimal intermediate inspection schedules}. The organizing principle is that inspections shorten the information path that credit assignment must traverse: instead of propagating from step $t$ to the terminal outcome at time $H$, it suffices to propagate from $t$ to the next checkpoint.

The inspection design problem has a long history in OR under different guises. In manufacturing, it is the placement of quality checkpoints along a production line \citep{lindsay1964optimal, raz1986economic}. In sequential testing, it is related to adaptive group testing \citep{du2000combinatorial}. Our contribution is to solve inspection placement under the specific signal-decay structure implied by the Strong Data Processing Inequality, yielding closed-form schedules for both homogeneous and heterogeneous contraction.

We proceed in three steps. First, we formalize the inspection feedback model and extend the signal-decay bound to an arbitrary inspection time $u$. Second, under homogeneous contraction we show that worst-case sample complexity is governed by a single scalar, the largest distance from any step to its next inspection, and we solve the resulting placement problem (uniform spacing). Third, under heterogeneous contraction we show that feasibility depends on cumulative information loss along each segment, and we solve the minimum-inspection design problem via a greedy information-distance schedule. Cost and sample tradeoffs are then handled in Section~\ref{subsec:budgeted-design}. Proofs of all results in this section appear in Appendix~\ref{app:inspection}.

\subsection{Inspection Schedules and Feedback Model}
\label{subsec:inspection-model}

Recall the abstract Markov chain $(Z_0,\dots,Z_H)$ induced by the abstraction $\phi$ and policy $\pi$.
We treat the terminal outcome $R=g(Z_H)\in\{0,1\}$ as a \emph{terminal inspection} at time $H$.

\begin{definition}[Inspection Schedule]
\label{def:inspection-schedule}
An \emph{inspection schedule} is a strictly increasing set of inspection times
\[
\mathcal{T}=\{t_1<t_2<\cdots<t_m\}\subseteq\{1,2,\dots,H-1\}.
\]
We use the augmented sequence $0=:t_0<t_1<\cdots<t_m<t_{m+1}:=H$.
The number of paid intermediate inspections is $m$.
\end{definition}

\begin{definition}[Inspection Observations]
\label{def:inspection-observation}
At each inspection time $u\in\mathcal{T}\cup\{H\}$, the learner observes an \emph{inspection output}
$Y_u := g_u(Z_u)\in\mathcal{Y}_u$,
where $g_u$ is an external evaluator (human, verifier, metrology device) applied to the abstract
state. The terminal outcome is the special case $Y_H\equiv R=g(Z_H)$.
\end{definition}

The model allows many inspection types: prefix-validity checks, milestone verification, partial correctness scores, or any measurement that is a function of the abstract state. Remark~\ref{rem:imperfect-inspection} in Appendix~\ref{app:inspection} extends the model to noisy inspections with false positives and negatives, adding an observation-channel penalty to the segment attenuation budget. The results below are information-theoretic and do not assume any particular learning algorithm.

\subsection{Signal Decay to Intermediate Inspections}
\label{subsec:inspection-signal-decay}

The design lever is that intermediate checkpoints reduce the effective distance over which information must propagate. If step $t$ is followed by an inspection at time $u<H$, then credit assignment for step $t$ need only overcome $u-t$ steps of attenuation rather than $H-t$. We formalize this for inhomogeneous contraction (step-dependent mixing rates); the homogeneous case follows in Section~\ref{subsec:homogeneous-opt}.

\begin{assumption}[Inhomogeneous Contractive Dynamics]
\label{ass:inhom-contraction}
Let $K_t$ denote the abstract transition kernel from $Z_t$ to $Z_{t+1}$, with $\chi^2$-contraction coefficient $\eta_{\chi^2}(K_t) \leq \eta_t \in (0,1]$ for $t = 0, 1, \ldots, H-1$. Define the cumulative attenuation from step $t$ to inspection time $u>t$ as
\[
\Attn(t \to u) := \prod_{j=t}^{u-1} \eta_j.
\]
In the homogeneous case $\eta_t \equiv \eta$, this simplifies to $\Attn(t \to u) = \eta^{u-t}$.
\end{assumption}

\begin{proposition}[Signal Decay to an Intermediate Inspection]
\label{prop:inspection-decay}
Consider two hypotheses $H_0$ and $H_1$ differing only at step $t$, inducing abstract state distributions with $\chi^2(P^{(0)}_{Z_t} \| P^{(1)}_{Z_t}) = \Delta^2$. Under Assumption~\ref{ass:inhom-contraction}, the $\chi^2$-divergence at any downstream inspection $Y_u = g_u(Z_u)$ satisfies
\[
\chi^2(P^{(0)}_{Y_u} \| P^{(1)}_{Y_u}) \leq \Attn(t \to u)\cdot \Delta^2.
\]
\end{proposition}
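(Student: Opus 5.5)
The plan mirrors the proof of part~1 of Theorem~\ref{thm:signal_decay}, with the terminal time $H$ replaced by the inspection time $u$ and the constant rate $\eta$ replaced by the step-dependent rates $\eta_j$. The argument has three links: marginals at $Z_t$, marginals at $Z_u$, and marginals at $Y_u$.

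\textbf{Step 1: propagate the marginals from $t$ to $u$.} By the shared-suffix condition of Definition~\ref{def:hypothesis-model}, both hypotheses use the same kernels $K_t,\dots,K_{u-1}$ after step $t$. The lumpability condition (Definition~\ref{def:abstraction}) makes the abstract process Markov, so for each $j\in\{t,\dots,u\}$ and $i\in\{0,1\}$,
\[
P^{(i)}_{Z_j} = P^{(i)}_{Z_t} K_t K_{t+1}\cdots K_{j-1}.
\]
Absolute continuity is inherited: $P^{(0)}_{Z_t}\ll P^{(1)}_{Z_t}$ implies $P^{(0)}_{Z_j}\ll P^{(1)}_{Z_j}$ for every such $j$. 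This holds because $P^{(i)}K(A)=\int K(A\mid z)\,dP^{(i)}(z)$, so any set that is null under $P^{(1)}K$ is also null under $P^{(0)}K$.

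\textbf{Step 2: apply the SDPI once per transition.} Apply Lemma~\ref{lem:sdpi} to each kernel $K_j$ in turn, using the bound $\eta_{\chi^2}(K_j)\le\eta_j$ from Assumption~\ref{ass:inhom-contraction}. Induction on $j$ gives
\[
\chi^2\bigl(P^{(0)}_{Z_j}\,\|\,P^{(1)}_{Z_j}\bigr)\le \Bigl(\prod_{\ell=t}^{j-1}\eta_\ell\Bigr)\Delta^2 .
\]
Since this is finite, the hypotheses of Lemma~\ref{lem:sdpi} keep holding at each stage. Taking $j=u$ yields $\chi^2(P^{(0)}_{Z_u}\|P^{(1)}_{Z_u})\le \Attn(t\to u)\,\Delta^2$. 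This is exactly Corollary~\ref{cor:decay} started at time $t$ with inhomogeneous coefficients, and it may simply be cited.

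\textbf{Step 3: pass through the evaluator.} The map $z\mapsto g_u(z)$ is a deterministic Markov kernel, and every kernel has $\chi^2$-contraction coefficient at most $1$. This is the ordinary data processing inequality, and it also follows from Definition~\ref{def:contraction}, since $\chi^2$ is an $f$-divergence. Therefore
\[
\chi^2\bigl(P^{(0)}_{Y_u}\,\|\,P^{(1)}_{Y_u}\bigr)\le \chi^2\bigl(P^{(0)}_{Z_u}\,\|\,P^{(1)}_{Z_u}\bigr),
\]
which completes the bound.

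There is no real obstacle here. The only point needing care is bookkeeping: the hypotheses must differ only through $P_{Z_t}$, so the common downstream kernels act identically on both. Given the shared suffix, this reduces to the Markov property of the abstract chain. The degenerate case $\eta_j=1$ is harmless, and the case $u=t$ is trivial with an empty product equal to $1$.
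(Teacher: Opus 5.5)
Your proposal is correct and follows the same route as the paper's proof: iterate the $\chi^2$-SDPI (Lemma~\ref{lem:sdpi}) across the shared kernels $K_t,\dots,K_{u-1}$ using the step-dependent bounds $\eta_j$, then apply the ordinary data processing inequality to the deterministic evaluator $g_u$. You also check that absolute continuity and finiteness carry through each step, which the paper leaves implicit; this makes the argument slightly more careful without changing it.
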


Proposition~\ref{prop:inspection-decay} implies that an inspection at time $u$ replaces the terminal attenuation factor $\Attn(t\to H)$ by the shorter-horizon factor $\Attn(t\to u)$. The following corollary makes the value of intermediate inspection precise: placing a checkpoint at time $u$ reduces the attenuation exponent from $H-t$ to $u-t$. The remaining question is how to place inspection times so that no step is too far from its next checkpoint.

\begin{corollary}[Sample Complexity with Intermediate Inspection]
\label{cor:inspection-sample-lb}
Under the conditions of Proposition~\ref{prop:inspection-decay}, any test distinguishing $H_0$ from $H_1$ at error rate $\epsilon$ requires
\[
 n \geq \frac{(1-\epsilon)^2}{\Attn(t \to u)\cdot \Delta^2}.
\]
In the homogeneous case, $n \geq (1-\epsilon)^2 / (\eta^{u-t} \Delta^2)$.
\end{corollary}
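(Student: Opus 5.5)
The plan is to reduce the claim to the two-point testing argument already behind Theorem~\ref{thm:signal_decay}, with the terminal outcome $R$ replaced by the inspection output $Y_u$. Whatever the learner uses to test $H_0$ against $H_1$, it is a function of $n$ i.i.d.\ copies of $Y_u$. The relevant quantity is therefore $d_{\mathrm{TV}}(P^{(0)\otimes n}_{Y_u}, P^{(1)\otimes n}_{Y_u})$. Definition~\ref{def:distinguish} says that error at most $\epsilon$ forces this total variation to be at least $1-\epsilon$.

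First, I invoke Proposition~\ref{prop:inspection-decay}. It gives the single-sample bound $\chi^2(P^{(0)}_{Y_u}\|P^{(1)}_{Y_u}) \le \Attn(t\to u)\,\Delta^2 =: \kappa$. Its own proof is Corollary~\ref{cor:decay} applied to the kernels $K_t,\dots,K_{u-1}$, followed by one more data-processing step through the deterministic map $g_u$, which has contraction at most $1$.

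Second, I tensorize. Using $\chi^2(P^{\otimes n}\|Q^{\otimes n}) = (1+\chi^2)^n-1$, I get $\chi^2_n \le (1+\kappa)^n - 1 \le e^{n\kappa}-1$.

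Third, I convert $\chi^2$ to total variation. The inequality $d_{\mathrm{TV}} \le \tfrac12\sqrt{\chi^2}$ (Cauchy--Schwarz) gives $d_{\mathrm{TV}}^{(n)} \le \tfrac12\sqrt{e^{n\kappa}-1}$. Requiring this to be at least $1-\epsilon$ yields $e^{n\kappa} \ge 1+4(1-\epsilon)^2$.

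This is the step I expect to be the main obstacle. It is only bookkeeping of constants, but it matters because the stated bound $n \ge (1-\epsilon)^2/\kappa$ is linear in $n\kappa$. The cleanest route is to mirror exactly whatever conversion was used for Theorem~\ref{thm:signal_decay}. Two candidates:
\begin{itemize}
\item Work in the regime $\kappa \le 1$ and use $\chi^2$ subadditivity in the form $\chi^2_n \le n\kappa\,(1+\kappa)^{n-1}$, then bound it in terms of $n\kappa$.
\item Use the Hellinger route, $d_{\mathrm{TV}}^{(n)} \le \sqrt{n\,H^2} \le \sqrt{n\kappa}$, since $H^2 \le \chi^2$ and squared Hellinger tensorizes subadditively.
\end{itemize}
The Hellinger route gives $1-\epsilon \le \sqrt{n\kappa}$ directly, which is $n \ge (1-\epsilon)^2/\kappa$ with no regime restriction needed. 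I would therefore use it. The one check is the normalization convention: with $H^2 = \int(\sqrt{p}-\sqrt{q})^2$ one has $d_{\mathrm{TV}} \le H$ and $H^2 \le \chi^2$.

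Finally, I substitute $\kappa = \Attn(t\to u)\Delta^2$ to obtain the stated inequality. For the homogeneous case, Assumption~\ref{ass:inhom-contraction} gives $\Attn(t\to u) = \eta^{u-t}$, which yields the second display.

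One caveat belongs in the write-up. If the learner also observes $Y_H$ or other inspections, the bound applies to the inspection at $u$ alone. Extending it to joint observations would need a separate SDPI argument on the joint channel, and I would flag that rather than claim it.
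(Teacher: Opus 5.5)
Your proposal is correct, and its skeleton matches the paper's. The paper's proof reruns Part~2 of Theorem~\ref{thm:signal_decay} with $R$ replaced by $Y_u$ and $H-t$ by $u-t$:
\begin{itemize}
\item Proposition~\ref{prop:inspection-decay} gives the single-sample bound $\kappa=\Attn(t\to u)\Delta^2$.
\item $\chi^2$ tensorization is linearized as $(1+\kappa)^n-1\le e^{n\kappa}-1\le 2n\kappa$ for $n\kappa\le 1$.
\item $d_{\mathrm{TV}}\le\sqrt{\chi^2/2}$ then gives $d_{\mathrm{TV}}^{(n)}\le\sqrt{n\kappa}$, and $\mathrm{err}^*_n=1-d_{\mathrm{TV}}$ finishes the argument.
\end{itemize}

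The genuine difference is your $n$-sample conversion through squared Hellinger distance. You use $d_{\mathrm{TV}}\le\mathrm{Hel}$, $\mathrm{Hel}^2(P^{\otimes n},Q^{\otimes n})=2\bigl(1-(1-\mathrm{Hel}^2/2)^n\bigr)\le n\,\mathrm{Hel}^2$, and $\mathrm{Hel}^2\le\chi^2$. This reaches the same $\sqrt{n\kappa}$ with no linearization and no regime condition. The paper's route instead reuses the tensorization identity from the preliminaries, but formally needs $n\kappa\le 1$. That restriction is harmless, since the contrapositive only invokes it when $n\kappa<(1-\epsilon)^2<1$.

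The step you flagged as the main obstacle is not one. Your first conversion gives $n\kappa\ge\ln\bigl(1+4(1-\epsilon)^2\bigr)$. Since $\ln(1+4x)\ge x$ on $[0,1]$ (the difference is concave and nonnegative at both endpoints), this already implies, and slightly strengthens, $n\ge(1-\epsilon)^2/\kappa$.

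Your caveat about jointly observed inspections is apt: the corollary, like the paper's proof, concerns tests based on $Y_u$ alone.
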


\subsection{Homogeneous Contraction: Minimax-Optimal Placement}
\label{subsec:homogeneous-opt}

Under homogeneous contraction, the sample complexity penalty depends only on the distance from a step to its next inspection (Corollary~\ref{cor:inspection-sample-lb}). This turns inspection design into a one-dimensional placement problem on $\{0,1,\ldots,H\}$ with a single governing design statistic.

The development in this subsection follows a short chain of implications. We first define the distance of each step to its next checkpoint and the induced maximal gap $L(\mathcal{T})$. We then show that any schedule must incur a worst-case sample complexity that depends exponentially on $L(\mathcal{T})$. Finally, we solve the placement problem by minimizing $L(\mathcal{T})$ over schedules with a fixed number of inspections $m$ and translate the result into a minimum number of inspections required for feasibility.

\begin{definition}[Downstream Distance and Maximal Gap]
\label{def:downstream-gap}
For a schedule $\mathcal{T} = \{t_1 < \cdots < t_m\}$ with augmented times $t_0 = 0$ and $t_{m+1} = H$, the \emph{downstream distance} of step $t$ is the distance to its next inspection:
\[
 d_{\mathcal{T}}(t) := \min\{u - t : u \in \mathcal{T} \cup \{H\},\ u > t\}.
\]
The \emph{maximal gap} is $L(\mathcal{T}) := \max_{t \in \{0, \ldots, H-1\}} d_{\mathcal{T}}(t)$, equivalently $L(\mathcal{T}) = \max_{i=0,\ldots,m}(t_{i+1} - t_i)$.
\end{definition}

The maximal gap determines worst-case performance: the step at the beginning of the longest segment between inspections faces the most attenuation before any observation. For a fixed number of intermediate inspections $m$, the minimax placement problem is therefore
\[
\min_{|\mathcal{T}|=m} L(\mathcal{T}),
\]
because $L(\mathcal{T})$ controls the worst-case attenuation exponent (and hence the worst-case sample complexity) across all steps. Operationally, $L(\mathcal{T})$ is the length of the longest uninspected ``run'' of steps; the minimax goal is to prevent any stage from being too far from a checkpoint.

\begin{proposition}[Worst-Case Sample Complexity]
\label{prop:largest-gap-lb}
Under homogeneous contraction $\eta_t \equiv \eta < 1$, any inspection schedule $\mathcal{T}$ admits a step $t^\star$ such that distinguishing hypotheses about $t^\star$ requires
\[
 n \geq \frac{(1-\epsilon)^2}{\eta^{L(\mathcal{T})} \Delta^2}
\]
samples.
\end{proposition}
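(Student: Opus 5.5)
The plan is to pick $t^\star$ as the left endpoint of a longest segment and then invoke Corollary~\ref{cor:inspection-sample-lb}. Let $i^\star \in \arg\max_{i} (t_{i+1}-t_i)$ and set $t^\star := t_{i^\star}$, so that $t_{i^\star+1}-t^\star = L(\mathcal{T})$. Because the inspection times are strictly increasing, there is no inspection strictly between $t^\star$ and $t_{i^\star+1}$. Hence $d_{\mathcal{T}}(t^\star) = L(\mathcal{T})$, and the first observation downstream of $t^\star$ is at $u = t_{i^\star+1}$, which is either an intermediate inspection or the terminal time $H$.

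Next we take any pair of hypotheses $(H_0,H_1)$ differing only at step $t^\star$ (Definition~\ref{def:hypothesis-model}) with $\chi^2(P^{(0)}_{Z_{t^\star}}\|P^{(1)}_{Z_{t^\star}})=\Delta^2$. Proposition~\ref{prop:inspection-decay} with homogeneous $\eta$ gives $\chi^2(P^{(0)}_{Y_u}\|P^{(1)}_{Y_u})\le \eta^{u-t^\star}\Delta^2 = \eta^{L(\mathcal{T})}\Delta^2$. Corollary~\ref{cor:inspection-sample-lb} then yields the stated bound $n\ge (1-\epsilon)^2/(\eta^{L(\mathcal{T})}\Delta^2)$. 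This matches the Le~Cam/tensorization argument of Theorem~\ref{thm:signal_decay}, part 2, and it carries the same regime caveat $\eta^{L}\Delta^2\le 1$.

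The main obstacle is whether \emph{only} $Y_u$ at the next inspection carries information about step $t^\star$. The learner also sees inspections further downstream ($Y_{u'}$ for $u'>u$) and possibly earlier ones. I would handle it in two parts.

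\emph{Earlier inspections.} These are identically distributed under both hypotheses by the common-prefix condition. More precisely, the step-$t^\star$ change affects only $Z_{t^\star}$ onward, and the earlier ones can be conditioned away.

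\emph{Later inspections.} The natural reading of Corollary~\ref{cor:inspection-sample-lb} concerns the observation at $u$. To be safe I would argue that the joint vector $(Y_u, Y_{u'},\dots,Y_H)$ is a Markov-kernel image of $Z_u$, because the suffix kernels are shared. The data processing inequality then gives $\chi^2$ of the full downstream observation vector $\le \chi^2(P^{(0)}_{Z_u}\|P^{(1)}_{Z_u})\le \eta^{L}\Delta^2$. This bound holds with $Z_u$ in place of $Y_u$, since the contraction in Proposition~\ref{prop:inspection-decay} is really proven at the level of $Z_u$.

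One subtlety is the boundary convention: $t^\star=t_{i^\star}$ may itself be an inspection time, or may be $0$ when $i^\star=0$. For $i^\star=0$ the step index $0$ is allowed by the definition of $L$. If hypotheses are required to have $t\ge 1$, I would handle it by noting that the definition of $d_{\mathcal{T}}$ ranges over $t\in\{0,\dots,H-1\}$. With that, the argument is complete.
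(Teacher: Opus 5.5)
Your first two paragraphs are exactly the paper's proof. You take $t^\star=t_{i^\star}$ at the left end of a longest segment, so that $t_{i^\star+1}-t^\star=L(\mathcal{T})$, and apply Corollary~\ref{cor:inspection-sample-lb} with $u=t_{i^\star+1}$. That corollary concerns tests built from the next inspection output $Y_u$, and this is the sense in which the paper proves the proposition. Had you stopped there, the argument would be complete.

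Your ``to be safe'' extension to all inspection outputs, however, has a genuine gap, even though you declare it complete. The later-inspection data-processing step is fine; the gap comes from two other sources.

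\emph{The inspection at $t^\star$ itself.} When $i^\star\ge 1$, the step $t^\star=t_{i^\star}$ is itself an inspection time, and $Y_{t^\star}=g_{t^\star}(Z_{t^\star})$ sees the perturbed state with no attenuation at all. This output is neither ``earlier'' nor ``later'' in your dichotomy; you flag this case but only treat $i^\star=0$. With $g_{t^\star}$ injective, $Y_{t^\star}$ carries the full divergence $\Delta^2$. In a non-degenerate binary instance, $O(1/\Delta^2)$ samples then suffice, whereas $(1-\epsilon)^2/(\eta^{L(\mathcal{T})}\Delta^2)$ can be arbitrarily larger. So the all-observation claim is false for this $t^\star$.

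\emph{Earlier inspections.} These cannot be conditioned away for free. Write $X$ for the earlier outputs. $X$ has the same law under both hypotheses, but it is correlated with $Z_{t^\star-1}$. The $\chi^2$-divergence of the joint law is $\E_x\bigl[\chi^2(P^{(0)}_{\cdot\mid x}\,\|\,P^{(1)}_{\cdot\mid x})\bigr]$, which by joint convexity dominates, and can far exceed, the unconditional divergence. Running your data-processing and SDPI steps conditionally on $X=x$ gives $\eta^{d}\,\E_x\bigl[\chi^2(P^{(0)}_{Z_{t^\star}\mid x}\,\|\,P^{(1)}_{Z_{t^\star}\mid x})\bigr]$, where $d$ is the distance to the next inspection. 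This expectation is bounded below, not above, by $\Delta^2$.

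\emph{What a correct all-observation version looks like.} Take $t^\star=t_{i^\star}+1$ when $i^\star\ge 1$, which gives exponent $L(\mathcal{T})-1$. Then replace $\Delta^2$ by a conditional divergence such as $\E_z\bigl[\chi^2(K^{(0)}_{t^\star-1}(\cdot\mid z)\,\|\,K^{(1)}_{t^\star-1}(\cdot\mid z))\bigr]$. Only when the first segment is a longest one does $t^\star=0$ give exponent $L(\mathcal{T})$ for all outputs. In that case the difference must sit in the law of $Z_0$, because Definition~\ref{def:hypothesis-model} requires $t\ge 1$; the range of $t$ in the definition of $d_{\mathcal{T}}$ does not change that.
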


Proposition~\ref{prop:largest-gap-lb} identifies the correct minimax objective: reduce the largest gap to reduce worst-case sample complexity. The remaining question is purely combinatorial: how should $m$ inspections partition a horizon of length $H$ to minimize the length of the longest segment?

\begin{theorem}[Minimax-Optimal Inspection Placement]
\label{thm:uniform-optimal}
Under homogeneous contraction $\eta_t \equiv \eta < 1$, the minimal achievable maximal gap with $m$ intermediate inspections is
\[
\min_{|\mathcal{T}|=m} L(\mathcal{T}) = \Big\lceil \frac{H}{m+1} \Big\rceil.
\]
This minimum is achieved by near-uniform spacing: placing inspections at $t_i = \lfloor iH/(m+1) \rfloor$ for $i = 1, \ldots, m$.
\end{theorem}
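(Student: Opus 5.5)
The statement is purely combinatorial once Proposition~\ref{prop:largest-gap-lb} has identified $L(\mathcal{T})$ as the governing quantity. So I will prove two inequalities about the maximal gap of Definition~\ref{def:downstream-gap}: every schedule with $m$ inspections has $L(\mathcal{T})\ge\lceil H/(m+1)\rceil$, and the stated near-uniform schedule attains it. Throughout I assume $m\le H-1$, since this is needed for $m$ distinct times in $\{1,\dots,H-1\}$ to exist at all. The contraction rate $\eta$ plays no role in the combinatorics. Its only function is that, through Proposition~\ref{prop:largest-gap-lb} and Corollary~\ref{cor:inspection-sample-lb}, minimizing $L(\mathcal{T})$ is the same as minimizing the worst-case sample-complexity bound $(1-\epsilon)^2/(\eta^{L}\Delta^2)$, which is increasing in $L$ because $\eta<1$.

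\textbf{Lower bound (pigeonhole).} For any schedule $\mathcal{T}$ with $|\mathcal{T}|=m$, take the augmented times $0=t_0<t_1<\dots<t_{m+1}=H$. The $m+1$ gaps $t_{i+1}-t_i$ are positive integers that telescope to
\[
\sum_{i=0}^{m}(t_{i+1}-t_i)=H .
\]
Hence their maximum is at least their average $H/(m+1)$. Since $L(\mathcal{T})$ is an integer, it follows that $L(\mathcal{T})\ge\lceil H/(m+1)\rceil$.

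\textbf{Achievability.} Set $a:=H/(m+1)\ge 1$ and $t_i=\lfloor ia\rfloor$ for $i=0,\dots,m+1$. This formula reproduces the endpoints $t_0=0$ and $t_{m+1}=H$.

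\emph{Feasibility.} Because $a\ge1$, we have $\lfloor (i+1)a\rfloor\ge\lfloor ia+1\rfloor=\lfloor ia\rfloor+1$, so the sequence is strictly increasing. In particular $t_1\ge1$ and $t_m\le t_{m+1}-1=H-1$, so $\mathcal{T}=\{t_1,\dots,t_m\}$ is a valid schedule in the sense of Definition~\ref{def:inspection-schedule}.

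\emph{Gap bound.} I will show $\lfloor y+a\rfloor-\lfloor y\rfloor\le\lceil a\rceil$ for every real $y$. If $a$ is an integer, the difference is exactly $a$. Otherwise $\lfloor y+a\rfloor\le y+a$ and $\lfloor y\rfloor>y-1$, so the difference is an integer strictly less than $a+1$, hence at most $\lfloor a+1\rfloor=\lceil a\rceil$. Applying this with $y=ia$ gives every gap at most $\lceil H/(m+1)\rceil$, so $L(\mathcal{T})\le\lceil H/(m+1)\rceil$.

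Combining the two bounds gives equality, and the minimizer is the stated schedule. Because the worst-case bound is monotone in $L$, this schedule is also minimax-optimal for sample complexity.

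The only step needing care is the floor-difference bound together with the feasibility check, that is, the strictly increasing property and $t_m\le H-1$. I will handle both as above via $a\ge1$, taking care to treat integer $a$ separately.
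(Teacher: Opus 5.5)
Your proof is correct and follows the paper's argument: both use the pigeonhole bound $\max_i(t_{i+1}-t_i)\ge\lceil H/(m+1)\rceil$, since the $m+1$ gaps sum to $H$, and both use the near-uniform schedule $t_i=\lfloor iH/(m+1)\rfloor$ for achievability. You also prove details the paper only asserts: the schedule is strictly increasing and lies in $\{1,\dots,H-1\}$ (which needs $m\le H-1$), and each gap $\lfloor (i+1)a\rfloor-\lfloor ia\rfloor$ is at most $\lceil H/(m+1)\rceil$.
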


While uniform spacing is a classical heuristic for worst-case gap objectives, our contribution is identifying the \emph{correct design objective}, namely the gap in SDPI attenuation units, and linking it to sample complexity via the Signal Decay Bound. The implication is that the detailed location of checkpoints is secondary; what matters is controlling the longest uninspected run. Combining Proposition~\ref{prop:largest-gap-lb} and Theorem~\ref{thm:uniform-optimal} yields an explicit feasibility criterion.

\begin{corollary}[Sample Complexity Under $m$ Inspections]
\label{cor:minimax-lb-m}
With $m$ minimax-optimal inspections, the worst-case sample complexity for credit assignment is
\[
 n \geq \frac{(1-\epsilon)^2}{\eta^{\lceil H/(m+1) \rceil} \Delta^2}.
\]
\end{corollary}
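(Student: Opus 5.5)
This corollary is essentially a composition of two earlier results, so the plan is to chain them. Fix $m$ and let $\mathcal{T}^\star$ be the near-uniform schedule of Theorem~\ref{thm:uniform-optimal}, with $t_i=\lfloor iH/(m+1)\rfloor$. By that theorem, $L(\mathcal{T}^\star)=\lceil H/(m+1)\rceil$. Moreover, every schedule $\mathcal{T}$ with $|\mathcal{T}|=m$ satisfies $L(\mathcal{T})\ge\lceil H/(m+1)\rceil$. We then apply Proposition~\ref{prop:largest-gap-lb} to $\mathcal{T}^\star$. It gives a step $t^\star$ such that any test of $H_0$ versus $H_1$ about $t^\star$ with total error at most $\epsilon$ needs $n\ge(1-\epsilon)^2/(\eta^{L(\mathcal{T}^\star)}\Delta^2)$. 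Substituting $L(\mathcal{T}^\star)=\lceil H/(m+1)\rceil$ gives the displayed bound.

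To make the chain self-contained, I would also recall why Proposition~\ref{prop:largest-gap-lb} holds. Let $i$ be the index of a longest segment, so that $t_{i+1}-t_i=L(\mathcal{T})$, and take $t^\star=t_i$. The next inspection after $t^\star$ is at $u=t_{i+1}$, and $d_{\mathcal{T}}(t^\star)=L(\mathcal{T})$. Corollary~\ref{cor:inspection-sample-lb} in the homogeneous case then gives $n\ge(1-\epsilon)^2/(\eta^{L(\mathcal{T})}\Delta^2)$, since $\Attn(t^\star\to u)=\eta^{u-t^\star}$.

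There is one subtlety to address: inspections beyond the next one could in principle add information about $t^\star$. The main obstacle is justifying that only the next checkpoint $u$ matters. I would handle this by the Markov property and the data processing inequality. All later observations $Y_{u'}$ with $u'>u$ are generated from $Z_u$ through kernels that are shared under both hypotheses, so the joint law of all inspections is a channel applied to $Z_u$. If the checkpoint reveals only $Y_u=g_u(Z_u)$ rather than $Z_u$, I would bound the divergence through $Z_u$ itself. By Corollary~\ref{cor:decay}, the divergence at $Z_u$ is already at most $\eta^{u-t^\star}\Delta^2$, and data processing then controls the full observation vector. This is exactly the worst-case interpretation the corollary intends.

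Finally, I would note the minimax reading of the result. Because every $m$-inspection schedule has $L(\mathcal{T})\ge\lceil H/(m+1)\rceil$, and $\eta<1$, no $m$-inspection schedule can achieve a worst-case bound smaller than this one. The displayed quantity is therefore the best achievable worst-case lower bound over all schedules with $m$ inspections. The statement specialized to the optimal schedule follows as a direct consequence.
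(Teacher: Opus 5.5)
Your core argument is exactly the paper's proof: Theorem~\ref{thm:uniform-optimal} gives $L(\mathcal{T}^\star)=\lceil H/(m+1)\rceil$, and substituting this into Proposition~\ref{prop:largest-gap-lb} gives the displayed bound. You also reconstruct that proposition's proof correctly from Corollary~\ref{cor:inspection-sample-lb}. Your remark that the bound holds for every $m$-inspection schedule is correct too, since $\eta^{L(\mathcal{T})}\le \eta^{\lceil H/(m+1)\rceil}$. Nothing more is needed.

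The ``subtlety'' paragraph, however, claims more than your argument delivers, and you should restrict it or drop it.

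\textbf{What is valid.} Your Markov/data-processing step works for the \emph{downstream} outputs. The vector $(Y_u,Y_{u'},\ldots,Y_H)$ is obtained from $Z_u$ through a channel common to both hypotheses. Its $\chi^2$-divergence is therefore at most $\eta^{u-t^\star}\Delta^2$.

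\textbf{What fails.} The step does not control ``the joint law of all inspections'' or ``the full observation vector,'' for two reasons.
\begin{itemize}
\item Your $t^\star=t_{i^\star}$ is itself an inspection time whenever $i^\star\ge 1$. For the near-uniform schedule with $m+1$ not dividing $H$ this cannot be avoided, because the first segment has length $\lfloor H/(m+1)\rfloor$, so every longest segment starts at some $t_i$ with $i\ge 1$. The output $Y_{t^\star}=g_{t^\star}(Z_{t^\star})$ carries the unattenuated divergence, exactly $\Delta^2$ if $g_{t^\star}$ is injective. A test using it can succeed with on the order of $1/\Delta^2$ samples (for example in a Bernoulli instance), independent of $L(\mathcal{T})$.
\item Earlier inspections $Y_{t_j}$ with $t_j<t^\star$ have a common marginal law under both hypotheses, but they condition the distribution of $Z_{t^\star-1}$. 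The joint divergence is then governed by an average of conditional divergences of $Z_{t^\star}$ given the upstream data. By joint convexity of $\chi^2$, this average can exceed the marginal $\Delta^2$.
\end{itemize}

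So read the corollary as the paper does: as a lower bound for tests based on the next inspection output $Y_u$. By your valid remark, this extends to tests based on all outputs from time $u$ onward. That is the sense in which Corollary~\ref{cor:inspection-sample-lb} is applied.
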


\begin{corollary}[Minimum Inspections for Feasibility]
\label{cor:min-inspections}
For all steps to be $(n, \epsilon)$-distinguishable, the number of inspections must satisfy
\[
 m \geq \Big\lceil \frac{H}{H_{\mathrm{crit}}(\epsilon)} \Big\rceil - 1,
\]
where $H_{\mathrm{crit}}(\epsilon) = \bigl(\ln(n\Delta^2) - 2\ln(1-\epsilon)\bigr) / \ln(1/\eta)$ is the $\epsilon$-critical horizon. This is a necessary condition; the sufficient number may be one higher due to integer rounding when $H/(m+1)$ is not an integer.
\end{corollary}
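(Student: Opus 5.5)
The plan is to combine the worst-case lower bound of Corollary~\ref{cor:minimax-lb-m} (equivalently Proposition~\ref{prop:largest-gap-lb} with Theorem~\ref{thm:uniform-optimal}) and invert the resulting inequality. Suppose all steps are $(n,\epsilon)$-distinguishable under some schedule $\mathcal{T}$ with $|\mathcal{T}|=m$. By Proposition~\ref{prop:largest-gap-lb} there is a step $t^\star$ whose downstream distance equals $L(\mathcal{T})$. Distinguishability of $t^\star$ forces
\[
n \;\ge\; \frac{(1-\epsilon)^2}{\eta^{L(\mathcal{T})}\Delta^2}.
\]
Taking logarithms and dividing by $\ln(1/\eta)>0$ (using $\eta<1$) gives
\[
L(\mathcal{T}) \;\le\; \frac{\ln(n\Delta^2) - 2\ln(1-\epsilon)}{\ln(1/\eta)} \;=\; H_{\mathrm{crit}}(\epsilon).
\]
This is the same inversion as in Corollary~\ref{cor:horizon}, now applied to the largest gap rather than to $H-t$.

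Next I will use the combinatorial lower bound from Theorem~\ref{thm:uniform-optimal}: every schedule with $m$ inspections satisfies $L(\mathcal{T}) \ge \lceil H/(m+1)\rceil \ge H/(m+1)$. The elementary pigeonhole reason is that the $m+1$ segment lengths $t_{i+1}-t_i$ sum to $H$. Chaining the two bounds gives $H/(m+1) \le H_{\mathrm{crit}}(\epsilon)$, so $m+1 \ge H/H_{\mathrm{crit}}(\epsilon)$. Since $m+1$ is an integer, $m+1 \ge \lceil H/H_{\mathrm{crit}}(\epsilon)\rceil$, which is the claimed bound.

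A degenerate case has to be handled separately. If $n\Delta^2 \le (1-\epsilon)^2$, then $H_{\mathrm{crit}}(\epsilon)\le 0$. In that case even gap $1$ is infeasible and no schedule works, so the stated necessary condition should be read as vacuous or infinite. I will note this explicitly rather than divide by a nonpositive quantity.

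For the closing remark about sufficiency, I will use the integrality of $L$. Feasibility needs $L(\mathcal{T}) \le \lfloor H_{\mathrm{crit}}(\epsilon)\rfloor$, and since the lower bound on $n$ is only necessary, this is a heuristic comparison with the matched achievability instance of Proposition~\ref{prop:achievability}. Uniform spacing reaches this whenever $\lceil H/(m+1)\rceil \le \lfloor H_{\mathrm{crit}}\rfloor$. Replacing $H_{\mathrm{crit}}$ by its floor can require one more inspection than the bound $\lceil H/H_{\mathrm{crit}}\rceil-1$.

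The main obstacle is the bookkeeping between the real-valued $H_{\mathrm{crit}}$ and the integer-valued gap and inspection count. The ceilings have to be taken in the right direction so the bound stays necessary, with the loss attributed only to the sufficiency side.
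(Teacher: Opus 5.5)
Your proof is correct and follows essentially the same route as the paper's: you invert the worst-case bound of Proposition~\ref{prop:largest-gap-lb} to get $L(\mathcal{T})\le H_{\mathrm{crit}}(\epsilon)$ and combine it with the pigeonhole bound $L(\mathcal{T})\ge\lceil H/(m+1)\rceil$. Your version is slightly more careful than the paper's one-liner, since it covers every schedule rather than only uniform spacing, makes the integer-ceiling step explicit, and isolates the degenerate case $H_{\mathrm{crit}}(\epsilon)\le 0$; one caution on the closing remark is that, because $L(\mathcal{T})$ is an integer, the refined condition $m+1\ge\lceil H/\lfloor H_{\mathrm{crit}}(\epsilon)\rfloor\rceil$ is itself still necessary and can exceed the stated bound by far more than one when $H$ is large relative to $H_{\mathrm{crit}}(\epsilon)^2$ (e.g.\ $H=100$, $H_{\mathrm{crit}}=1.99$ gives $m\ge 50$ versus $m\ge 99$), so ``one more'' should not be read as ``at most one more.''
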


\noindent Example~\ref{ex:semiconductor} in Appendix~\ref{app:inspection-examples} applies Corollary~\ref{cor:min-inspections} to a 50-stage semiconductor fabrication process, showing that a single midpoint inspection suffices to bring all stages within the critical horizon.

\subsection{Heterogeneous Contraction: Minimum Inspections via Information Distance}
\label{subsec:heterogeneous-opt}

Uniform spacing is optimal when all steps contract equally, but many processes exhibit depth-dependent contraction. In such settings, the design objective shifts from ``minimize the largest segment length'' to a feasibility-driven problem: choose as few inspections as possible so that every segment has cumulative information loss at most the threshold implied by the sample budget.

\begin{definition}[Information Distance]
\label{def:info-distance}
Under Assumption~\ref{ass:inhom-contraction}, the \emph{per-step information distance} is $w_t := \ln(1/\eta_t) \geq 0$. For an interval $[a, b)$, the \emph{cumulative information distance} is
\[
 W([a,b)) := \sum_{t=a}^{b-1} w_t = \ln(1/\Attn(a \to b)).
\]
Steps with strong contraction (small $\eta_t$) contribute large $w_t$; steps with weak contraction contribute little.
\end{definition}

Because $\Attn(t \to u)=\exp(-W([t,u)))$, Corollary~\ref{cor:inspection-sample-lb} yields a segment-level feasibility threshold.

\begin{lemma}[Per-Segment Feasibility]
\label{lem:segment-feasible}
Fix consecutive inspection times $t_i < t_{i+1}$. If
\[
 W([t_i, t_{i+1})) \leq \Gamma(n, \Delta^2, \epsilon) := \ln(n\Delta^2) - 2\ln(1-\epsilon),
\]
then every step $t \in \{t_i,\ldots,t_{i+1}-1\}$ is $(n,\epsilon)$-distinguishable using the inspection at $t_{i+1}$. Conversely, if step $t_i$ is $(n,\epsilon)$-distinguishable using the inspection at $t_{i+1}$, then the displayed inequality must hold.
\end{lemma}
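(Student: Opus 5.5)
The proof reduces to rewriting the sample-complexity bound of Corollary~\ref{cor:inspection-sample-lb} in logarithmic units, together with the monotonicity of the cumulative information distance inside a segment. First I would record the identity $\Attn(t\to u)=\exp(-W([t,u)))$ from Definition~\ref{def:info-distance}. Then I would note that the threshold condition
\[
n \;\ge\; \frac{(1-\epsilon)^2}{\Attn(t\to u)\,\Delta^2}
\]
is, after taking logarithms, exactly equivalent to $W([t,u)) \le \ln(n\Delta^2) - 2\ln(1-\epsilon) = \Gamma(n,\Delta^2,\epsilon)$. This algebraic equivalence is the core of both directions.

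\emph{Converse direction.} Suppose step $t_i$ is $(n,\epsilon)$-distinguishable using the inspection $Y_{t_{i+1}}$. Corollary~\ref{cor:inspection-sample-lb}, applied with $t=t_i$ and $u=t_{i+1}$, gives $n \ge (1-\epsilon)^2/(\Attn(t_i\to t_{i+1})\Delta^2)$. The logarithmic rewriting above then yields $W([t_i,t_{i+1}))\le\Gamma$. This direction is immediate.

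\emph{Forward direction.} Fix any $t\in\{t_i,\ldots,t_{i+1}-1\}$. Since each $w_j=\ln(1/\eta_j)\ge 0$ (because $\eta_j\le 1$), the sum over the subinterval is dominated by the sum over the whole segment:
\[
W([t,t_{i+1})) \;=\; \sum_{j=t}^{t_{i+1}-1} w_j \;\le\; \sum_{j=t_i}^{t_{i+1}-1} w_j \;=\; W([t_i,t_{i+1})) \;\le\; \Gamma .
\]
Equivalently, $n\,\Attn(t\to t_{i+1})\,\Delta^2 \ge (1-\epsilon)^2$. So every step in the segment lies within the $\epsilon$-critical horizon measured in information-distance units, which is the inhomogeneous analogue of Corollary~\ref{cor:horizon}. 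In particular, the step $t_i$ at the start of the segment is the binding one, mirroring Proposition~\ref{prop:largest-gap-lb}.

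The main obstacle is the sufficiency claim. The paper's general results are lower bounds, so ``distinguishable'' in the forward direction cannot follow from Corollary~\ref{cor:inspection-sample-lb} alone. I would handle it in one of two ways.

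\begin{enumerate}
\item Read $(n,\epsilon)$-distinguishability in the sense used for the critical horizon and Corollary~\ref{cor:min-inspections}: the sample budget clears the information threshold. Under this reading the forward direction is exactly the monotonicity argument above.
\item Invoke tightness. The SDPI bound is attained with equality by the construction of Example~\ref{ex:tight-chain} (segment by segment, with the inspection $g_{t_{i+1}}$ playing the role of $g$). Proposition~\ref{prop:achievability} then shows that a likelihood-ratio test on $Y_{t_{i+1}}$ succeeds with $n$ of the order of the threshold.
\end{enumerate}

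The second route gives sufficiency only up to a universal constant, which could be absorbed into $\Gamma$. I would state explicitly which convention is adopted, so that the ``if'' and ``only if'' directions refer to the same threshold.
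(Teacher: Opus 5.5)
Your argument is the same as the paper's proof. The converse is Corollary~\ref{cor:inspection-sample-lb} at $(t,u)=(t_i,t_{i+1})$ rewritten in logarithms. The forward direction is the monotonicity $\Attn(t\to t_{i+1})\ge\Attn(t_i\to t_{i+1})$, which is your inequality $W([t,t_{i+1}))\le W([t_i,t_{i+1}))$. On sufficiency, the paper silently takes your route~1: it says ``the required sample size for step $t$ is no larger than for step $t_i$,'' and concludes that every step is $(n,\epsilon)$-distinguishable. In other words, it treats clearing a lower-bound threshold as if it were distinguishability.

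Your caveat is correct, and it can be sharpened. Under Definition~\ref{def:distinguish} the forward implication is false, so route~2 cannot recover it at this $\Gamma$.

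\begin{itemize}
\item Write $\chi^2_Y$ for the $\chi^2$-divergence between the two laws of $Y_{t_{i+1}}$. Proposition~\ref{prop:inspection-decay} gives $\chi^2_Y\le e^{-W([t_i,t_{i+1}))}\Delta^2$.
\item Pinsker, combined with $\mathrm{KL}\le\ln(1+\chi^2)$ and additivity of KL over products, gives $d_{\mathrm{TV}}\le\sqrt{n\chi^2_Y/2}$ for the $n$-fold products.
\item Hence distinguishability forces $n\chi^2_Y\ge 2(1-\epsilon)^2$, i.e.\ $W([t_i,t_{i+1}))\le\Gamma-\ln 2$.
\item So whenever $\Gamma-\ln 2<W([t_i,t_{i+1}))\le\Gamma$, the lemma's hypothesis holds but step $t_i$ is not $(n,\epsilon)$-distinguishable. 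This holds in every instance.
\item Independently, $\eta_t$ only upper-bounds $\eta_{\chi^2}(K_t)$, and $g_{t_{i+1}}$ is arbitrary. A constant inspection map leaves $W\le\Gamma$ untouched while making nothing distinguishable for any $n$.
\end{itemize}

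Any true sufficiency statement therefore needs extra hypotheses: an informative inspection, near-equality in the SDPI for the specific pair of hypotheses, and bounded likelihood ratios as in Proposition~\ref{prop:achievability}. It also needs a threshold lowered by at least a constant. Route~2, as you describe it, only exhibits particular instances where sufficiency holds, not sufficiency for every chain satisfying Assumption~\ref{ass:inhom-contraction}.

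If you use route~2, be careful with Example~\ref{ex:tight-chain}. It attains equality only when $P^{(1)}_{Z_t}$ is uniform, i.e.\ $\delta=1/2$. For small $\delta$, its $\Delta^2$ equals $\delta/(1-\delta)$, not $\Theta(\delta^2)$. For large $H-t$, its terminal divergence falls short of $\eta^{H-t}\Delta^2$ by a factor of order $\delta$.

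What can actually be defended is the converse, which both your proof and the paper's establish, plus the forward direction read in your route-1 sense. That reading is a statement about the lower bound, not a guarantee that a successful test exists.
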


The heterogeneous inspection design problem is to partition the horizon into segments satisfying the Lemma~\ref{lem:segment-feasible} constraint while using as few paid inspections as possible. A greedy construction is optimal.

\begin{theorem}[Greedy Inspection is Optimal]
\label{thm:greedy-opt}
Among all schedules satisfying $W([t_i, t_{i+1})) \leq \Gamma$ for each segment, the following greedy construction minimizes the number of inspections: starting at $t_0 = 0$, place each inspection $t_{i+1}$ at the largest index such that $W([t_i, t_{i+1})) \leq \Gamma$.
\end{theorem}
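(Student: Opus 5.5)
The argument is the classical "greedy stays ahead" exchange for covering a line with segments of bounded weight, here with weights $w_t=\ln(1/\eta_t)\ge 0$. First I settle feasibility. If some single step has $w_t>\Gamma$, then no schedule satisfies the constraint, since the segment containing $t$ already has cumulative information distance exceeding $\Gamma$. In that case the statement is vacuous. So I assume $w_t\le\Gamma$ for all $t$. Then the greedy rule is well defined: from any $t_i<H$, the index $t_i+1$ is admissible because $W([t_i,t_i+1))=w_{t_i}\le\Gamma$. The greedy step therefore advances by at least one and terminates at $H$ after finitely many steps. Nonnegativity of the $w_t$ makes $b\mapsto W([a,b))$ nondecreasing in $b$, so the set of admissible right endpoints from $a$ is an initial interval $\{a+1,\dots,b^\ast(a)\}$. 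This monotonicity is the structural fact I will use throughout. I also take the greedy to place the last endpoint at $H$ whenever $W([t_i,H))\le\Gamma$, so that the terminal inspection closes the schedule.

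Next I prove the stays-ahead claim by induction. Let $0=g_0<g_1<\dots<g_{k+1}=H$ be the greedy schedule and $0=s_0<s_1<\dots<s_{m+1}=H$ any feasible schedule. I claim $g_i\ge s_i$ for every $i$ for which both are defined, with $g_i$ capped at $H$. The base case $g_0=s_0$ is immediate. For the inductive step, suppose $g_i\ge s_i$. If $g_i=H$ there is nothing to show. Otherwise, because $s_{i+1}$ is feasible from $s_i$ and $g_i\ge s_i$, monotonicity gives
\[
W([g_i,s_{i+1}))\le W([s_i,s_{i+1}))\le\Gamma .
\]
There are two cases. If $s_{i+1}\le g_i$, then $g_{i+1}>g_i\ge s_{i+1}$. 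If $s_{i+1}>g_i$, then $s_{i+1}$ is an admissible endpoint from $g_i$, and since the greedy picks the largest admissible index, $g_{i+1}\ge s_{i+1}$.

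Finally I conclude. Since $s_{m+1}=H$, stays-ahead gives $g_{m+1}\ge H$, that is, the greedy reaches $H$ within $m+1$ segments. Hence the greedy uses $k\le m$ paid intermediate inspections. One detail to check is that capping greedy points at $H$ and discarding the terminal point does not double count, so the paid count is exactly the number of greedy points strictly less than $H$.

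I expect the main obstacle to be the subset-interval subtlety in the inductive step. The comparison segment $[g_i,s_{i+1})$ must be contained in $[s_i,s_{i+1})$, and this containment is exactly where $w_t\ge 0$, equivalently $\eta_t\le 1$, is essential. I will state this dependence explicitly, together with the vacuous infeasible case.
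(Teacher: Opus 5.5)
Your proposal is correct and follows essentially the same greedy-stays-ahead induction as the paper: both use $w_t\ge 0$ to get $W([g_i,s_{i+1}))\le W([s_i,s_{i+1}))\le\Gamma$ and then invoke maximality of the greedy choice. Your explicit treatment of the case $s_{i+1}\le g_i$, of the infeasible case where some $w_t>\Gamma$, and of capping greedy points at $H$ fills in details the paper's proof leaves implicit.
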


Greedy segmentation is standard for interval-packing problems; what the SDPI analysis adds is the metric of cumulative information distance $W([t_i,t_{i+1})) = \sum_t \ln(1/\eta_t)$ rather than step count, under which feasibility is defined, so inspections concentrate automatically where contraction is strongest. When $\eta_t \equiv \eta$, we have $w_t \equiv \ln(1/\eta)$, so greedy produces segments of length $\lfloor \Gamma / \ln(1/\eta) \rfloor$, recovering uniform spacing (Theorem~\ref{thm:uniform-optimal}). When contraction varies, greedy produces non-uniform placement: segments spanning high-contraction steps are short; segments spanning low-contraction steps are long. Example~\ref{ex:service} in Appendix~\ref{app:inspection-examples} applies Theorem~\ref{thm:greedy-opt} to a 50-touchpoint customer journey with depth-dependent contraction, showing that the greedy algorithm places its single inspection in the high-contraction early stages.

\subsection{Joint Sample-Inspection Budgeting}
\label{subsec:budgeted-design}

Inspection frequency trades off per-trajectory cost against the exponential sample savings from shortening information paths.
Let $c_{\mathrm{out}}$ denote the per-trajectory cost of terminal evaluation and $c_{\mathrm{insp}}$ the per-trajectory cost per intermediate inspection.
Collecting $n$ trajectories with $m$ inspections each has total budget $B=n(c_{\mathrm{out}}+m c_{\mathrm{insp}})$.

\begin{proposition}[Budget Lower Bound]
\label{prop:budget-lb}
Under homogeneous contraction, the minimum budget required for worst-case $(n, \epsilon)$-distinguishability with $m$ inspections is
\[
 B \geq (c_{\mathrm{out}} + m \cdot c_{\mathrm{insp}}) \cdot \frac{(1-\epsilon)^2}{\eta^{\lceil H/(m+1)\rceil}\Delta^2}.
\]
\end{proposition}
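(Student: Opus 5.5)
The budget identity $B = n(c_{\mathrm{out}} + m\,c_{\mathrm{insp}})$ combined with the worst-case sample lower bound of Corollary~\ref{cor:minimax-lb-m} should give the result directly. The plan has three steps.

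First, fix any schedule $\mathcal{T}$ with $|\mathcal{T}| = m$ intermediate inspections. By Definition~\ref{def:downstream-gap}, $L(\mathcal{T})$ is the maximum of $m+1$ positive integer gaps summing to $H$. Hence $L(\mathcal{T}) \geq \lceil H/(m+1)\rceil$, which is the lower-bound half of Theorem~\ref{thm:uniform-optimal}. Applying Proposition~\ref{prop:largest-gap-lb} then yields a step $t^\star$ whose distinguishability requires
\[
n \;\geq\; \frac{(1-\epsilon)^2}{\eta^{L(\mathcal{T})}\Delta^2}.
\]

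Second, since $\eta < 1$, the map $L \mapsto \eta^{-L}$ is increasing. Replacing $L(\mathcal{T})$ by its minimum $\lceil H/(m+1)\rceil$ therefore gives a bound valid for every schedule with $m$ inspections:
\[
n \;\geq\; \frac{(1-\epsilon)^2}{\eta^{\lceil H/(m+1)\rceil}\Delta^2}.
\]
This is exactly Corollary~\ref{cor:minimax-lb-m}, now holding uniformly over all placements rather than only the uniform one.

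Third, each of the $n$ trajectories incurs cost $c_{\mathrm{out}} + m\,c_{\mathrm{insp}}$, and this per-trajectory cost is nonnegative and independent of $n$. Multiplying the sample lower bound by it gives the stated lower bound on $B$.

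There is no substantive obstacle; the only care needed is bookkeeping. I would make three points explicit. The inspection count $m$ is fixed before minimizing over placements. Worst-case distinguishability means every step, including $t^\star$, must be distinguishable. And the lower bound on $n$ comes from Corollary~\ref{cor:inspection-sample-lb} applied with $u$ equal to the next inspection after $t^\star$. A subtle point worth flagging is that the learner could in principle use inspections beyond the next one. This should not help, because by the Markov property and the SDPI chain, later inspections are further processed versions of $Z_u$. I would treat this as already covered by Proposition~\ref{prop:largest-gap-lb} as stated.
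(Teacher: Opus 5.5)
Your proposal is correct and follows the paper's proof, which simply multiplies the worst-case sample bound of Corollary~\ref{cor:minimax-lb-m} by the per-trajectory cost $c_{\mathrm{out}} + m\,c_{\mathrm{insp}}$. Your extra step (using $L(\mathcal{T}) \ge \lceil H/(m+1)\rceil$ for every schedule, together with monotonicity of $\eta^{-L}$) makes explicit that the bound holds for every placement of $m$ inspections, not only the minimax-optimal placement for which that corollary is literally stated; the paper leaves this implicit.
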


Increasing $m$ reduces the sample requirement exponentially but raises per-trajectory cost linearly. A practical rule is to choose the smallest $m$ that makes the uniform segment length $\lceil H/(m+1)\rceil$ fall within $H_{\mathrm{crit}}(\epsilon)$ (Corollary~\ref{cor:min-inspections}).

\begin{corollary}[Inspection Density for Polynomial Complexity]
\label{cor:poly-density}
To achieve polynomial sample complexity $n = O(H^p)$ under homogeneous contraction, the number of inspections must scale as $m = \Omega(H / \log H)$.
\end{corollary}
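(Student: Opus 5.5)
The plan is to combine the worst-case lower bound of Corollary~\ref{cor:minimax-lb-m} with the target $n = O(H^p)$ and solve for $m$. By Proposition~\ref{prop:largest-gap-lb} and Theorem~\ref{thm:uniform-optimal}, every schedule with $m$ intermediate inspections has maximal gap $L(\mathcal{T}) \ge \lceil H/(m+1)\rceil$. Hence there is a step whose distinguishability needs
\[
n \;\ge\; \frac{(1-\epsilon)^2}{\Delta^2}\,\eta^{-\lceil H/(m+1)\rceil}.
\]
Here $\epsilon\in(0,1/2)$ and $\Delta^2$ are treated as fixed constants, and the contraction coefficient $\eta<1$ is also fixed, so $\ln(1/\eta)$ is a positive constant.

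Now suppose $n \le C H^p$ for some constant $C$ and all large $H$. Taking logarithms gives
\[
\Big\lceil \tfrac{H}{m+1}\Big\rceil \ln(1/\eta) \;\le\; \ln\!\Big(\tfrac{C\Delta^2}{(1-\epsilon)^2}\Big) + p\ln H .
\]
Dropping the ceiling (it only helps), this becomes $H/(m+1) \le (c_0 + p\ln H)/\ln(1/\eta)$ for a constant $c_0$. Rearranging yields
\[
m+1 \;\ge\; \frac{H\ln(1/\eta)}{c_0 + p\ln H}.
\]
For $H$ large, the denominator is at most $2p\ln H$, so $m \ge \frac{\ln(1/\eta)}{2p}\cdot\frac{H}{\ln H} - 1 = \Omega(H/\log H)$. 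This is the claimed necessity direction.

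The only delicate point is bookkeeping rather than a real obstacle. I will state explicitly that the implied constant depends on $\eta$, $p$, $\epsilon$ and $\Delta^2$ but not on $H$, and handle the case $c_0<0$ by absorbing it into the bound for large $H$. For completeness, I will also remark that this rate is achievable. Choosing $m = \lceil H\ln(1/\eta)/(p\ln H)\rceil$ with uniform spacing makes the gap at most $p\ln H/\ln(1/\eta)+1$. The required sample size from the matching upper bound, Proposition~\ref{prop:achievability}, is then $O(H^p/(\eta\Delta^2))$, so $\Omega(H/\log H)$ is the correct order.
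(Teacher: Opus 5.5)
Your argument is correct and is essentially the paper's proof: bound the worst step through the maximal gap $\lceil H/(m+1)\rceil$, impose $n \le CH^p$, take logarithms, and solve to get $m+1 \gtrsim H\ln(1/\eta)/(p\ln H)$; your version just makes explicit the constants in $\epsilon$, $\Delta^2$, and $C$ that the paper absorbs into $\gtrsim$. One caution on your optional closing remark, which does not affect the necessity claim you were asked to prove: Proposition~\ref{prop:achievability} is a stylized instance in which the SDPI bound is attained, not a general upper bound, so it shows the $\Omega(H/\log H)$ rate is tight only for such worst-case instances (e.g., Example~\ref{ex:tight-chain}), not that this density suffices for every process.
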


\noindent Corollary~\ref{cor:poly-density} implies that inspection effort must grow nearly linearly with process depth to avoid exponential sample requirements. An implementable inspection design procedure and detailed cost--sample tradeoff analysis are provided in the Electronic Companion (Appendix~\ref{app:budgeting}).

\section{Numerical Validation}
\label{sec:numerical}

We validate the paper's five core predictions through synthetic Markov chains with controlled parameters and, where applicable, LLM reasoning chains on GSM8K (a grade-school math benchmark). Across all experiments, observed behavior closely matches the theoretical bounds, confirming that the closed-form results provide reliable quantitative guidance in practice. Table~\ref{tab:numerics_headline} summarizes the headline findings; full experimental setups, parameter configurations, and all figures are provided in the Electronic Companion (Appendix~\ref{app:numerical-details}).

\begin{table}[t]
\TABLE
{Headline numerical findings. Full setups and plots in Appendix~\ref{app:numerical-details}.\label{tab:numerics_headline}}
{\small
\begin{tabular}{p{0.27\textwidth} p{0.29\textwidth} p{0.35\textwidth}}
\toprule
\textbf{Prediction} & \textbf{Test bed} & \textbf{Key finding} \\
\midrule
Signal decay (Thm.~\ref{thm:signal_decay}) &
10-state synthetic chains, $\eta\in\{0.7,0.8,0.9,0.95\}$ &
Measured $\chi^2$-divergences match $\eta^{H-t}\Delta^2$ exactly ($R^2>0.999$). \\
\addlinespace
Effective width (Thm.~\ref{thm:correlated}) &
Synthetic ($\rho=0.15$); LLM rollouts on GSM8K ($\rho\approx 0.49$) &
$W_{\mathrm{eff}}$ saturates at $6.6$ vs.\ theoretical $1/\rho=6.7$ (synthetic); at $2.2$ vs.\ $2.0$ (LLM). \\
\addlinespace
Uniform inspection (Thm.~\ref{thm:uniform-optimal}) &
$H=20$, $\eta=0.9$, $m=3$ checkpoints &
Uniform spacing achieves 40--47\% lower worst-case error than front-loaded, back-loaded, or random schedules. \\
\addlinespace
Critical horizon (Cor.~\ref{cor:horizon}) &
Synthetic, $\eta\in\{0.7,0.8\}$, $n=1{,}000$ &
Attribution accuracy drops to chance ($50\%$) at or before the predicted $H_{\mathrm{crit}}$. \\
\addlinespace
Objective mismatch (Def.~\ref{def:two_objectives}) &
50 LLM reasoning chains on GSM8K &
16\% of chains are ``mostly correct'' (additive reward $\geq0.80$) yet produce wrong answers. \\
\bottomrule
\end{tabular}}
{}
\end{table}

\section{Conclusion}
\label{sec:conclusion}

This paper establishes information-theoretic limits on credit assignment in deep sequential processes, revealing a sharp sample-complexity threshold that constrains both inspection system design and reasoning model training.

\paragraph{The core insight.}
The Signal Decay Bound (Theorem~\ref{thm:signal_decay}) shows that information about step $t$ decays as $\eta^{H-t}$, defining a critical horizon $H_{\mathrm{crit}} \approx \ln(n\Delta^2)/\ln(1/\eta)$ (up to the $\epsilon$-correction in Corollary~\ref{cor:horizon}).
Steps within $H_{\mathrm{crit}}$ of the outcome can be evaluated from endpoint data; steps beyond require exponentially many samples in the distance $H-t$.

\paragraph{Why this matters for AI reasoning.}
The framework explains why outcome-trained value networks collapse to near-constant predictions on long reasoning chains \citep{vineppo2024}: this is statistically optimal behavior when outcome labels carry negligible information about early steps. Token-level dynamics are deterministic, but credit assignment targets semantic content, and the many-to-one abstraction from tokens to meaning is where contraction arises (see Section~\ref{sec:llm-application} in the Electronic Companion for a detailed treatment). Process reward models work by providing signal beyond the critical horizon; width-based methods like GRPO \citep{deepseek2024} face diminishing returns because correlation caps effective width at $1/\rho$ (Theorem~\ref{thm:correlated}).

\paragraph{Beyond AI: unifying inspection design.}
The same mathematics applies across operations settings (manufacturing, service operations, supply chains) wherever sequential processes are evaluated at endpoints. The critical horizon formula provides a direct diagnostic for whether intermediate inspection is necessary, and the optimal scheduling results (Section~\ref{sec:inspection-design}) show that uniform spacing is minimax-optimal under homogeneous contraction, with a greedy algorithm handling heterogeneous contraction. Achieving polynomial sample complexity requires inspection density scaling as $\Omega(H/\log H)$ (Corollary~\ref{cor:poly-density}). A separate failure mode arises from objective structure: standard additive reward aggregation optimizes expected correct steps rather than end-to-end validity (Section~\ref{sec:bellman}), and the curriculum of Proposition~\ref{prop:curriculum} bridges this gap by interpolating between learnable and correct objectives as policy quality improves.


\paragraph{Limitations and future directions.}
Key extensions include data-driven estimation of $\eta$ and $\Delta^2$, learning the abstraction $\phi$ from observations, and moving from binary to graded validity.

\noindent By establishing what endpoint data can and cannot reveal, this work provides a common analytical foundation for inspection design in operations and supervision design in AI.

\section{Code and Data Disclosure}\label{sec:code-data}
The code and data to support the numerical experiments in this paper will be made publicly available
upon acceptance.

\bibliographystyle{informs2014}
\bibliography{references}

\clearpage
\setcounter{page}{1}
\renewcommand{\thepage}{EC\arabic{page}}
\raggedbottom

\begin{APPENDICES}
\section*{Electronic Companion}

\section{Extended Literature Review}
\label{app:litreview}
The information-theoretic structure underlying credit assignment connects five research streams that have developed largely independently: inspection design in operations research, credit assignment in AI reasoning, reinforcement learning theory, information-theoretic analysis of sequential processes, and the emerging application of OR tools to AI system design.

\paragraph{Inspection design in operations research.}
The optimal placement of inspection points in sequential production is a foundational topic in operations research. \citet{lindsay1964optimal} formulated inspection allocation as a serial dynamic program, minimizing total cost of inspection plus penalties for escaping defects. \citet{raz1986economic} surveyed extensions incorporating multiple inspection stations and variable costs; \citet{tapiero1987quality} introduced imperfect inspection and learning effects. Recent work incorporates adaptive inspection \citep{zou2023adaptive}, predictive maintenance \citep{compare2020predictive}, and machine-learning-based prognosis \citep{del2019review}. \citet{ghuge2024nonadaptive} study sequential testing of system components under budget constraints, providing a constant-factor approximation for nonadaptive score classification. \citet{duenyas2024group} analyze supply chains where components cannot be individually tested, developing coordination mechanisms when stage-level inspection is infeasible, precisely the terminal-only observation regime our theory addresses.

This literature optimizes checkpoint placement to minimize cost under the assumption that defect signals persist through the production chain. Our work identifies a structural limit: in deep processes, signal decay makes observability, not cost, the binding constraint. The critical horizon formula quantifies when this regime applies.

\paragraph{Credit assignment in AI reasoning.}
The credit assignment problem was recognized by \citet{minsky1961steps} as fundamental to learning systems, yet has resisted systematic theoretical treatment. Our framework provides an information-theoretic characterization of when outcome supervision suffices versus when process supervision becomes necessary, grounded in the signal decay structure of deep sequential processes.

Recent work explores process supervision as a remedy: \citet{lightman2023lets} show that step-level labels substantially improve mathematical reasoning; \citet{uesato2022solving} compare outcome and process supervision for mathematical reasoning; \citet{wang2024math} develop automated step-labeling methods. These studies demonstrate that process supervision helps but do not characterize \emph{when} it becomes necessary. Our Signal Decay Bound fills this gap.

Empirically, value networks trained on outcome data exhibit ``critic collapse'', converging to near-constant predictions on long chains \citep{vineppo2024}. Width-based methods such as GRPO \citep{deepseek2024} address this by replacing learned critics with Monte Carlo estimation. Our framework explains both phenomena: critic collapse reflects the absence of signal beyond the critical horizon, and width-based methods face diminishing returns because correlation caps effective width.

\paragraph{Reinforcement learning theory.}
Standard reinforcement learning theory establishes polynomial sample complexity under regularity conditions such as bounded concentrability \citep{kakade2003sample, azar2013minimax}. Recent work by \citet{jia2025verify} shows that outcome and process supervision are polynomially equivalent when concentrability is bounded. Our setting falls outside this regime: in deep sequential processes where each step may fail, concentrability grows exponentially in horizon, and the polynomial equivalence breaks down. This is precisely why the critical horizon matters. The exponential barrier in this regime has also been identified by independent and concurrent work: \citet{chen2025exponential} establish an exponential separation between outcome-based and per-step feedback for certain MDPs, using function approximation complexity arguments. Their construction demonstrates the separation through a worst-case MDP family, while our SDPI-based analysis provides a continuous diagnostic ($H_{\mathrm{crit}}$) and prescriptive inspection design applicable to any given process; the two approaches are complementary. Sample complexity in related RL settings has received attention in the OR community: \citet{hu2024fast} establish fast convergence rates for offline RL, \citet{cheung2023nonstationary} develop algorithms for nonstationary environments, and \citet{lykouris2025corruption} address exploration under adversarial corruption. These works assume bounded concentrability or benign nonstationarity; the deep sequential regime we study, where signal decays exponentially, falls outside their scope.

A second point of departure from standard reinforcement learning concerns objective structure. Standard algorithms aggregate rewards additively, but sequential validity requires all steps to be correct, inducing a multiplicative objective. This connects to risk-sensitive MDPs \citep{howard1972risk}, multiplicative-reward models \citep{baier2025multiplicative}, and minimax-optimal risk-sensitive RL \citep{wang2023near}. We propose a curriculum strategy that interpolates between the learnable additive objective and the correct multiplicative one.

\paragraph{Information-theoretic foundations.}
Filter stability results \citep{del2001stability} establish that dependence on initial conditions decays asymptotically in Markov chains, but do not yield the non-asymptotic, closed-form rates that credit assignment requires: a practitioner needs to know how many samples suffice for a process of a given depth, not merely that the problem becomes harder in the limit. Our analysis provides such rates by combining two classical tools. Strong Data Processing Inequalities \citep{polyanskiy2017strong, raginsky2016strong} quantify per-step information loss, establishing that $\chi^2$-divergence contracts by a factor $\eta < 1$ at each transition. Le Cam's method \citep{lecam1973convergence} then translates the resulting divergence bounds into sample complexity lower bounds, yielding the critical horizon formula. Recent extensions to bandits under communication constraints \citep{pensia2024bandits} and interactive learning \citep{chen2024unified} address related but distinct settings.

\paragraph{Operations research tools for AI system design.}
A growing body of work applies OR methodology directly to AI systems. \citet{snell2024scaling} frame inference-time compute allocation as an optimization problem, adaptively distributing verification effort across prompts of varying difficulty, an allocation problem that parallels inspection placement at the problem-structure level, though without the information-theoretic signal decay analysis we develop here. \citet{besta2025survey} formulate LLM reasoning explicitly within the MDP framework, treating chain-of-thought generation as sequential decision-making. \citet{zhang2024rest} apply Monte Carlo Tree Search to guide multi-step reasoning with step-level evaluation, and \citet{tang2024code} model iterative code repair as a multi-armed bandit. \citet{pieroth2025deep} use deep RL to compute equilibrium strategies in multistage auctions, and \citet{elmachtoub2022smart} establish the ``predict, then optimize'' framework integrating prediction with downstream decision-making, both demonstrating OR-native methodology enriching AI system design. Recent surveys of INFORMS Fellows identify bringing OR rigor to AI challenges as a priority research frontier \citep{kulkarni2025synergizing}. Our work contributes to this agenda by showing that Strong Data Processing Inequalities and minimax inspection design, classical OR tools, yield precise, actionable answers to questions the AI community has addressed primarily through empirical heuristics.

\section{Proofs for Section~\ref{sec:distinguish} (Signal Decay Bound)}\label{app:signal-decay}

\paragraph{Indexing convention.}
Throughout, the abstract chain has states $Z_0, Z_1, \ldots, Z_H$ connected by $H$ transition kernels $K_0, \ldots, K_{H-1}$, where $K_t$ maps $Z_t$ to $Z_{t+1}$.
Step-validity labels $r_t$ correspond to transition~$t$ (the transition producing $Z_{t+1}$ from $Z_t$).
When we say ``hypotheses differ at step $t$,'' we mean the transition kernel $K_{t-1}$ (producing $Z_t$) differs, so $H - t$ transitions remain before $Z_H$.

\begin{proof}{Proof of Proposition~\ref{prop:diversity}.}
The condition $K(z' \mid z) \geq p_{\min}$ for all $z, z'$ allows us to decompose $K$ into a mixing component and a residual. Let $\nu$ be the uniform distribution on $\mathcal{Z}$, so $\nu(z') = 1/|\mathcal{Z}|$ for all $z'$, and set $\alpha = |\mathcal{Z}| \cdot p_{\min}$. Then for any $z, z'$:
\[
K(z' \mid z) \geq p_{\min} = \alpha \cdot \nu(z'),
\]
so we can write:
\[
K(z' \mid z) = \alpha \cdot \nu(z') + (1 - \alpha) \cdot K_{\mathrm{res}}(z' \mid z),
\]
where $K_{\mathrm{res}}$ is a residual kernel. Under this decomposition, with probability $\alpha$ the next state is drawn from $\nu$ regardless of the current state, and with probability $1 - \alpha$ the next state is drawn from the residual kernel.

For any two distributions $P$ and $Q$ on $\mathcal{Z}$, the output distributions after passing through $K$ are:
\begin{align*}
PK &= \alpha \cdot \nu + (1 - \alpha) \cdot P K_{\mathrm{res}}, \\
QK &= \alpha \cdot \nu + (1 - \alpha) \cdot Q K_{\mathrm{res}}.
\end{align*}
The $\chi^2$-divergence between these mixtures satisfies:
\begin{align*}
\chi^2(PK \| QK)
&= \sum_{z'} \frac{((1-\alpha)(PK_{\mathrm{res}}(z') - QK_{\mathrm{res}}(z')))^2}{\alpha\,\nu(z') + (1-\alpha)\,QK_{\mathrm{res}}(z')} \\
&\leq (1-\alpha)^2 \sum_{z'} \frac{(PK_{\mathrm{res}}(z') - QK_{\mathrm{res}}(z'))^2}{(1-\alpha)\,QK_{\mathrm{res}}(z')} \\
&= (1-\alpha) \cdot \chi^2(PK_{\mathrm{res}} \| QK_{\mathrm{res}}) \\
&\leq (1-\alpha) \cdot \chi^2(P \| Q),
\end{align*}
where the first inequality drops the positive term $\alpha\,\nu(z')$ from the denominator, and the last inequality uses the standard data processing inequality for $K_{\mathrm{res}}$. Thus:
\[\eta_{\chi^2}(K) \leq 1 - \alpha = 1 - {|\mathcal{Z}|} \cdot p_{\min}.\]
\Halmos
\end{proof}

\begin{proof}{Proof of Proposition~\ref{prop:dobrushin}.}
Let $\eta_{\mathrm{TV}}(K)$ denote the total-variation (Dobrushin) contraction coefficient:
\[
\eta_{\mathrm{TV}}(K):=\sup_{P\neq Q}\frac{d_{\mathrm{TV}}(PK,QK)}{d_{\mathrm{TV}}(P,Q)}
=\max_{z,z'} d_{\mathrm{TV}}\bigl(K(\cdot\mid z),K(\cdot\mid z')\bigr).
\]
For probability vectors $p,q$ on a finite alphabet,
$d_{\mathrm{TV}}(p,q)=1-\sum_{y}\min\{p(y),q(y)\}$, hence $\eta_{\mathrm{TV}}(K)=1-\alpha(K)$ by the definition of $\alpha(K)$.
Finally, for every $f$-divergence (and in particular for $\chi^2$),
the corresponding contraction coefficient is upper bounded by $\eta_{\mathrm{TV}}(K)$ \citep[Theorem~1]{polyanskiy2017strong}.
Therefore,
\(
\eta_{\chi^2}(K)\le \eta_{\mathrm{TV}}(K)=1-\alpha(K),
\)
as claimed.
\Halmos
\end{proof}

\begin{proof}{Proof of Theorem~\ref{thm:signal_decay}.}
\textbf{Part 1: Divergence decay.}

Let $P^{(i)}_{Z_u}$ denote the distribution of $Z_u$ under hypothesis $H_i$ for $i \in \{0, 1\}$. Since the transition kernels are identical under both hypotheses, we have $P^{(i)}_{Z_{u+1}} = P^{(i)}_{Z_u} K_u$ for each $u \geq t$.

By the $\chi^2$ Strong Data Processing Inequality (Lemma~\ref{lem:sdpi}) and Assumption~\ref{ass:contractive}:
\[{
    \chi^2(P^{(0)}_{Z_{u+1}} \| P^{(1)}_{Z_{u+1}}) = \chi^2(P^{(0)}_{Z_u} K_u \| P^{(1)}_{Z_u} K_u) \leq \eta \cdot \chi^2(P^{(0)}_{Z_u} \| P^{(1)}_{Z_u}).
}\]

Iterating from step $t$ through steps $t+1, \ldots, H$ (applying kernels $K_t, K_{t+1}, \ldots, K_{H-1}$):
\[{
    \chi^2(P^{(0)}_{Z_H} \| P^{(1)}_{Z_H}) \leq \eta^{H-t} \cdot \chi^2(P^{(0)}_{Z_t} \| P^{(1)}_{Z_t}) = \eta^{H-t} \cdot \Delta^2.
}\]

Since $R = g(Z_H)$ is a deterministic function of the terminal state, the standard data processing inequality gives $\chi^2(P^{(0)}_R \| P^{(1)}_R) \leq \chi^2(P^{(0)}_{Z_H} \| P^{(1)}_{Z_H})$.

\textbf{Part 2: Sample complexity lower bound.}

Let $\delta_R := \chi^2(P^{(0)}_R \| P^{(1)}_R) \leq \eta^{H-t} \Delta^2$. For $n$ i.i.d.\ outcome observations, the $\chi^2$-divergence tensorizes:
\[{
    \chi^2(P^{(0)\otimes n}_R \| P^{(1)\otimes n}_R) = (1 + \delta_R)^n - 1.
}\]

We work in the regime $\delta_R \leq 1$. Using $(1+x)^n - 1 \leq e^{nx} - 1 \leq 2nx$ for $nx \leq 1$:
\[{
    \chi^2(P^{(0)\otimes n}_R \| P^{(1)\otimes n}_R) \leq 2n\delta_R \quad \text{when } n\delta_R \leq 1.
}\]

The total variation distance satisfies $d_{\mathrm{TV}}(P, Q) \leq \sqrt{\chi^2(P \| Q) / 2}$, so
\[{
    d_{\mathrm{TV}}(P^{(0)\otimes n}_R, P^{(1)\otimes n}_R) \leq \sqrt{n\delta_R}.
}\]

By the identity $\mathrm{err}^* = 1 - d_{\mathrm{TV}}$, if $n\delta_R < (1-\epsilon)^2$ then $d_{\mathrm{TV}} < 1-\epsilon$ and hence $\mathrm{err}^* > \epsilon$. By contrapositive, achieving total testing error at most $\epsilon$ requires $n\delta_R \geq (1-\epsilon)^2$, hence:
\[{
    n \geq \frac{(1-\epsilon)^2}{\delta_R} \geq \frac{(1-\epsilon)^2}{\eta^{H-t} \Delta^2}.
}\]

\textbf{Part 3: Exponential growth.}

When $\Delta^2 = \Theta(1)$, the bound becomes $n = \Omega(1/\eta^{H-t}) = \Omega((1/\eta)^{H-t})$. Since $\eta < 1$, we have $1/\eta > 1$, so sample complexity grows exponentially in $H - t$.
\Halmos
\end{proof}

\begin{remark}[Extension to Noisy Terminal Observations]
\label{rem:noisy-outcome}
Theorem~\ref{thm:signal_decay} assumes a deterministic terminal map $R = g(Z_H)$.
The result extends directly to noisy observations $R \sim G(\cdot \mid Z_H)$ for any Markov kernel $G: \mathcal{Z} \to \Delta(\{0,1\})$.
The standard data processing inequality gives $\chi^2(P^{(0)}_R \| P^{(1)}_R) \leq \eta_{\chi^2}(G) \cdot \chi^2(P^{(0)}_{Z_H} \| P^{(1)}_{Z_H})$, so the effective contraction product becomes $\eta_{\chi^2}(G) \cdot \prod_{u=t}^{H-1} \eta_u$, and the critical horizon shortens by $\ln(1/\eta_{\chi^2}(G)) / \ln(1/\eta)$ steps.
The deterministic case $R = g(Z_H)$ corresponds to $\eta_{\chi^2}(G) = 1$ when $g$ is injective.
\end{remark}

\begin{proof}{Proof of Corollary~\ref{cor:horizon}.}
The sample complexity lower bound in Theorem~\ref{thm:signal_decay} states that achieving error at most $\epsilon$ requires
\[
n \geq \frac{(1-\epsilon)^2}{\eta^{H-t}\Delta^2}.
\]
Rearranging for $\eta^{H-t}$:
\[
\eta^{H-t} \geq \frac{(1-\epsilon)^2}{n\Delta^2}.
\]
Taking logarithms of both sides (noting $\ln \eta < 0$ since $\eta < 1$, which reverses the inequality):
\[
(H-t) \ln \eta \geq \ln\left(\frac{(1-\epsilon)^2}{n\Delta^2}\right)
\quad \Longrightarrow \quad
H - t \leq \frac{\ln(n\Delta^2) - 2\ln(1-\epsilon)}{\ln(1/\eta)},
\]
which is the claimed bound.
\Halmos
\end{proof}

\begin{proof}{Proof of Proposition~\ref{prop:achievability}.}
For Bernoulli distributions with parameters $p_0$ and $p_1 = p_0 + \delta$, the $\chi^2$-divergence is:
\[
\chi^2(P^{(0)}_R \| P^{(1)}_R) = \frac{(p_0 - p_1)^2}{p_1} + \frac{((1-p_0) - (1-p_1))^2}{1-p_1} = \delta^2 \left(\frac{1}{p_1} + \frac{1}{1-p_1}\right) = \Theta(\delta^2) = \Theta(\eta^{H-t}\Delta^2),
\]
where the last step uses $\delta = \Theta(\eta^{(H-t)/2}\Delta)$. We assume $p_1 \in [\epsilon_0, 1-\epsilon_0]$ for some constant $\epsilon_0 > 0$; this mild regularity condition ensures the outcome is not deterministic under either hypothesis. The likelihood ratio test is known to achieve the minimax optimal rate, requiring $n = \Theta(1/\chi^2(P^{(0)}_R \| P^{(1)}_R)) = \Theta(1/(\eta^{H-t}\Delta^2))$ samples to distinguish the hypotheses with constant error probability.
\Halmos
\end{proof}

\begin{proof}{Proof of Corollary~\ref{cor:collapse}.}
Let $R^{1:n}$ denote $n$ i.i.d.\ outcome samples under hypothesis $H_i$, so $R^{1:n}\sim (P^{(i)}_R)^{\otimes n}$.
By Theorem~\ref{thm:signal_decay}, $\chi^2(P^{(0)}_R\|P^{(1)}_R)\le \eta^{H-t}\Delta^2$. Using $\chi^2$ tensorization for product measures,
\[
\chi^2\!\left((P^{(0)}_R)^{\otimes n}\,\middle\|\,(P^{(1)}_R)^{\otimes n}\right)
= \left(1+\chi^2(P^{(0)}_R\|P^{(1)}_R)\right)^n-1
\le (1+\eta^{H-t}\Delta^2)^n-1.
\]
By the standard bound $d_{\mathrm{TV}}(P,Q)\le \sqrt{\chi^2(P\|Q)/2}$,
\[
d_{\mathrm{TV}}\!\left((P^{(0)}_R)^{\otimes n},(P^{(1)}_R)^{\otimes n}\right)
\le \sqrt{\frac{(1+\eta^{H-t}\Delta^2)^n-1}{2}}.
\]
Finally, for any test $\psi$, Le Cam's bound yields
\[
\max_{i\in\{0,1\}}\Prob_i(\psi(R^{1:n})\neq i)\;\ge\;\tfrac12\left(1-d_{\mathrm{TV}}\!\left((P^{(0)}_R)^{\otimes n},(P^{(1)}_R)^{\otimes n}\right)\right),
\]
which yields the stated bound.
For the ``in particular'' statement, it suffices that
$\sqrt{((1+\eta^{H-t}\Delta^2)^n-1)/2}\le 1-2\epsilon$, i.e.,
$(1+\eta^{H-t}\Delta^2)^n-1\le 2(1-2\epsilon)^2$.
Using $\ln(1+x)\le x$ implies $(1+\eta^{H-t}\Delta^2)^n\le \exp(n\eta^{H-t}\Delta^2)$, so it is enough that
$n\eta^{H-t}\Delta^2\le \ln(1+2(1-2\epsilon)^2)$.
\Halmos
\end{proof}

\begin{example}[Markov-Chain Instance Achieving the Signal Decay Rate]
\label{ex:tight-chain}
We construct a Markov chain satisfying Definition~\ref{def:abstraction} and Assumption~\ref{ass:contractive} for which the Signal Decay Bound is achieved up to constants.
Let $|\mathcal{Z}|=2$ and define the abstract transition kernel $K = \bigl(\begin{smallmatrix} 1-q & q \\ q & 1-q \end{smallmatrix}\bigr)$ with $q \in (0,1/2)$, so $\eta_{\chi^2}(K) = (1-2q)^2$.
Set the terminal map $g(z) = z$ (state~$1$ is success, state~$0$ is failure).
Under $H_0$, let $Z_t = 0$ with probability~$1$; under $H_1$, let $Z_t = 1$ with probability $\delta$ and $0$ with probability $1-\delta$, giving $\chi^2(P^{(0)}_{Z_t} \| P^{(1)}_{Z_t}) = \delta^2/(1-\delta) = \Theta(\delta^2)$ for small~$\delta$.

After $H-t$ applications of~$K$, the outcome distributions satisfy
\[
\chi^2(P^{(0)}_R \| P^{(1)}_R) = \eta^{H-t} \cdot \chi^2(P^{(0)}_{Z_t} \| P^{(1)}_{Z_t}),
\]
with equality because the binary symmetric kernel with uniform reference measure achieves the $\chi^2$-SDPI bound \citep[Example~3]{polyanskiy2017strong}, and $g$ is bijective.  The likelihood ratio test then achieves the matching upper bound $n = \Theta(1/(\eta^{H-t}\Delta^2))$.
\end{example}

\begin{proposition}[Approximate Lumpability]
\label{prop:approx-lumpability}
Suppose the per-step discrepancy between true and lumpable dynamics is bounded by $\delta$ in total variation: $d_{\mathrm{TV}}(K_u(\cdot\mid z), \tilde K_u(\cdot\mid z)) \leq \delta$ for all $z \in \mathcal{Z}$ and $u = t, \ldots, H-1$. Assume the lumpable chain satisfies Assumption~\ref{ass:contractive} with contraction coefficient $\eta$. Then:
\[{
d_{\mathrm{TV}}(P^{(0)}_{R}, P^{(1)}_{R}) \leq \sqrt{\eta^{H-t}\Delta^2/2} + 2(H-t)\delta,
}\]

and, letting $\tau := \sqrt{\eta^{H-t}\Delta^2/2} + 2(H-t)\delta$, any test based on $n$ i.i.d.\ outcome samples satisfies
\[
\inf_{\psi}\max_{i\in\{0,1\}}\Prob_i(\psi(R^{1:n})\neq i)\;\ge\;\frac12\left(1-\min\{1,n\tau\}\right),
\]
so in particular achieving minimax error at most $\epsilon$ requires $n \ge (1-2\epsilon)/\tau$.
\end{proposition}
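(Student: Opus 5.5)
Under each hypothesis $H_i$ I would compare the true chain, driven by the kernels $K_u$, with the lumpable idealization, driven by $\tilde K_u$. Let $\tilde P^{(i)}_{Z_u}$ denote the idealized marginals for $u\ge t$. Both chains start from the same $P^{(i)}_{Z_t}$, so $\chi^2(\tilde P^{(0)}_{Z_t}\|\tilde P^{(1)}_{Z_t})=\Delta^2$. The plan is the triangle inequality in total variation:
\[
d_{\mathrm{TV}}(P^{(0)}_R,P^{(1)}_R)\le d_{\mathrm{TV}}(\tilde P^{(0)}_R,\tilde P^{(1)}_R)+\sum_{i\in\{0,1\}} d_{\mathrm{TV}}(P^{(i)}_R,\tilde P^{(i)}_R).
\]
The first term is the signal-decay term. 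Part~1 of Theorem~\ref{thm:signal_decay}, applied to the idealized chain (which satisfies Assumption~\ref{ass:contractive}), gives $\chi^2(\tilde P^{(0)}_R\|\tilde P^{(1)}_R)\le\eta^{H-t}\Delta^2$. Then $d_{\mathrm{TV}}\le\sqrt{\chi^2/2}$ yields $\sqrt{\eta^{H-t}\Delta^2/2}$.

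Each perturbation term I would bound by a standard telescoping (hybrid) argument. Define hybrid marginals that use $K$ for the first $j$ transitions after $t$ and $\tilde K$ for the rest. Consecutive hybrids differ in a single transition. For any common input law $\mu$,
\[
d_{\mathrm{TV}}(\mu K_u,\mu\tilde K_u)\le \sum_z \mu(z)\,d_{\mathrm{TV}}\bigl(K_u(\cdot\mid z),\tilde K_u(\cdot\mid z)\bigr)\le\delta.
\]
The later (shared) kernels and the map $g$ cannot increase total variation, by the ordinary data processing inequality. Summing over the $H-t$ transitions gives $d_{\mathrm{TV}}(P^{(i)}_R,\tilde P^{(i)}_R)\le (H-t)\delta$ for each $i$. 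The two hypotheses together contribute $2(H-t)\delta$, which is the claimed bound on the one-sample distance, $\tau$.

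For the testing statement I would use subadditivity of total variation over product measures, $d_{\mathrm{TV}}(P^{\otimes n},Q^{\otimes n})\le n\,d_{\mathrm{TV}}(P,Q)\le n\tau$, and also the trivial bound $d_{\mathrm{TV}}\le 1$. Le Cam's bound, in the form used in the proof of Corollary~\ref{cor:collapse}, then gives minimax error at least $\tfrac12(1-\min\{1,n\tau\})$. If the error is at most $\epsilon$, then $1-2\epsilon\le\min\{1,n\tau\}\le n\tau$, so $n\ge(1-2\epsilon)/\tau$.

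The main obstacle is bookkeeping rather than depth. Three points need care. First, the per-step discrepancy must be controlled at every state actually reached, which the uniform-in-$z$ assumption supplies. Second, the hybrid argument needs the later kernels to be Markov so that data processing applies; this holds whichever family they come from. Third, the idealized chain must start from exactly the same $Z_t$ law, so that its initial divergence is $\Delta^2$. I would state this last point explicitly as the coupling convention at time $t$.
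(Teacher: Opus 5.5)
Your proposal is correct and follows the paper's proof. You use the same three-term triangle inequality, the signal-decay bound with $d_{\mathrm{TV}}\le\sqrt{\chi^2/2}$ for the idealized middle term, and a per-hypothesis $(H-t)\delta$ perturbation bound; the paper unrolls the one-step recursion $d_{\mathrm{TV}}(P_{u+1},\tilde P_{u+1})\le\delta+d_{\mathrm{TV}}(P_u,\tilde P_u)$, which is exactly your hybrid telescoping. You then finish, as the paper does, with subadditivity of total variation over products and Le Cam's two-point bound.
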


\begin{proof}{Proof of Proposition~\ref{prop:approx-lumpability}.}
Fix a hypothesis $i\in\{0,1\}$ and write $P_u$ and $\tilde P_u$ for the laws of $Z_u$ under the true and lumpable kernels, respectively.
By the triangle inequality and the fact that Markov kernels are non-expansive in total variation,
\[
d_{\mathrm{TV}}(P_{u+1},\tilde P_{u+1})
= d_{\mathrm{TV}}(P_u K_u,\tilde P_u \tilde K_u)
\le d_{\mathrm{TV}}(P_u K_u, P_u \tilde K_u) + d_{\mathrm{TV}}(P_u\tilde K_u,\tilde P_u\tilde K_u)
\le \delta + d_{\mathrm{TV}}(P_u,\tilde P_u),
\]
where the first term uses $d_{\mathrm{TV}}(\mu K, \mu \tilde K) \leq \sup_z d_{\mathrm{TV}}(K(\cdot \mid z), \tilde K(\cdot \mid z)) \leq \delta$ for any distribution $\mu$, and the second uses non-expansiveness of $\tilde K_u$.
Iterating from $u=t$ (where $P_t=\tilde P_t$) yields
$d_{\mathrm{TV}}(P_{Z_H}^{(i)},\tilde P_{Z_H}^{(i)})\le (H-t)\delta$.
The same bound transfers to any terminal observation $R$ by data processing.

For the main inequality, apply the triangle inequality:
\[
d_{\mathrm{TV}}(P^{(0)}_{R},P^{(1)}_{R})
\le d_{\mathrm{TV}}(P^{(0)}_{R},\tilde P^{(0)}_{R})
+ d_{\mathrm{TV}}(\tilde P^{(0)}_{R},\tilde P^{(1)}_{R})
+ d_{\mathrm{TV}}(\tilde P^{(1)}_{R},P^{(1)}_{R}).
\]
The first and third terms are bounded by $(H-t)\delta$. For the middle term, the inequality $d_{\mathrm{TV}}(P,Q)\le \sqrt{\chi^2(P\|Q)/2}$ combined with Theorem~\ref{thm:signal_decay} for the lumpable chain yields $d_{\mathrm{TV}}(\tilde P^{(0)}_{R},\tilde P^{(1)}_{R})\le \sqrt{\eta^{H-t}\Delta^2/2}$.

For $n$ i.i.d.\ outcome samples, write $P_n := (P^{(0)}_R)^{\otimes n}$ and $Q_n := (P^{(1)}_R)^{\otimes n}$.
A standard telescoping argument gives the subadditivity bound
\[
d_{\mathrm{TV}}(P_n,Q_n)\le n\,d_{\mathrm{TV}}(P^{(0)}_R,P^{(1)}_R)\le n\tau,
\]
and therefore $d_{\mathrm{TV}}(P_n,Q_n)\le \min\{1,n\tau\}$.
By Le Cam's two-point method, for any test $\psi$,
\[
\max_{i\in\{0,1\}}\Prob_i(\psi(R^{1:n})\neq i)\;\ge\;\tfrac12\left(1-d_{\mathrm{TV}}(P_n,Q_n)\right).
\]
Taking $\inf_{\psi}$ on the left-hand side yields
\[
\inf_{\psi}\max_{i\in\{0,1\}}\Prob_i(\psi(R^{1:n})\neq i)\;\ge\;\tfrac12\left(1-d_{\mathrm{TV}}(P_n,Q_n)\right)
\;\ge\;\tfrac12\left(1-\min\{1,n\tau\}\right),
\]
which yields the claimed lower bound. In particular, if $n\tau < 1-2\epsilon$, then every test has minimax error $>\epsilon$.
\Halmos
\end{proof}

\section{Proofs for Section~\ref{sec:width} (Width Limits)}\label{app:width}

\begin{proof}{Proof of Proposition~\ref{prop:width}.}
\textbf{Part 1:} Each $R_j$ is a Bernoulli random variable with $\mathbb{E}[R_j \mid s] = V^\pi(s)$. By linearity of expectation:
\[{
    \mathbb{E}[\hat{V}_W(s) \mid s] = \frac{1}{W} \sum_{j=1}^W \mathbb{E}[R_j \mid s] = V^\pi(s).
}\]

\textbf{Part 2:} Since the $R_j$ are conditionally independent given $s$, each with variance $\mathrm{Var}(R_j \mid s) = V^\pi(s)(1 - V^\pi(s))$:
\[{
    \mathrm{Var}(\hat{V}_W(s) \mid s) = \frac{1}{W^2} \sum_{j=1}^W \mathrm{Var}(R_j \mid s) = \frac{V^\pi(s)(1 - V^\pi(s))}{W}.
}\]

\textbf{Part 3:} By Hoeffding's inequality applied to $W$ independent random variables bounded in $[0, 1]$:
\[{
    \Prob(|\hat{V}_W(s) - V^\pi(s)| \geq \epsilon \mid s) \leq 2\exp(-2W\epsilon^2).
}\]

Setting $\delta = 2\exp(-2W\epsilon^2)$ and solving for $\epsilon$ yields the stated bound.
\Halmos
\end{proof}

\begin{proof}{Proof of Theorem~\ref{thm:correlated}.}
The variance of the sum of correlated random variables is:
\[{
    \mathrm{Var}\left(\sum_{j=1}^W R_j \mid s\right) = \sum_{j=1}^W \mathrm{Var}(R_j \mid s) + 2\sum_{i < j} \mathrm{Cov}(R_i, R_j \mid s).
}\]

With $\mathrm{Var}(R_j \mid s) = \sigma^2(s)$ and $\mathrm{Cov}(R_i, R_j \mid s) = \rho \cdot \sigma^2(s)$ for $i \neq j$:
\[{
    \mathrm{Var}\left(\sum_{j=1}^W R_j \mid s\right) = W \sigma^2(s) + W(W-1) \rho \sigma^2(s) = \sigma^2(s) \cdot W(1 + (W-1)\rho).
}\]

Dividing by $W^2$ gives the stated variance formula. The effective width satisfies $\sigma^2(s)/W_{\mathrm{eff}} = \mathrm{Var}(\hat{V}_W(s) \mid s)$, yielding $W_{\mathrm{eff}} = W/(1 + (W-1)\rho)$. When $\rho = 0$, this gives $W_{\mathrm{eff}} = W$. When $\rho > 0$, as $W \to \infty$ this approaches $1/\rho$.
\Halmos
\end{proof}

\begin{proof}{Proof of Corollary~\ref{cor:width-horizon}.}
Substitute $N_{\mathrm{eff}} = n \cdot W_{\mathrm{eff}}$ into Corollary~\ref{cor:horizon}.
\Halmos
\end{proof}

\section{Proofs for Section~\ref{sec:bellman} (Objective Mismatch)}\label{app:bellman}

\begin{proof}{Proof of Proposition~\ref{prop:curriculum}.}
By linearity, $\nabla_\theta J_{\lambda} = (1-\lambda)\nabla_\theta J_{\mathrm{add}} + \lambda \nabla_\theta J_{\mathrm{mult}}$. The non-negative inner product ensures $\|\nabla_\theta J_{\lambda}\| \geq (1-\lambda) \|\nabla_\theta J_{\mathrm{add}}\|$. Setting $(1-\lambda) \geq c$ yields the result.
\Halmos
\end{proof}

\section{Proofs for Section~\ref{sec:inspection-design} (Optimal Inspection Design)}\label{app:inspection}

\begin{proof}{Proof of Proposition~\ref{prop:inspection-decay}.}
By the $\chi^2$-SDPI (Lemma~\ref{lem:sdpi}), each transition contracts divergence: $\chi^2(P^{(0)}_{Z_{j+1}} \| P^{(1)}_{Z_{j+1}}) \leq \eta_j \cdot \chi^2(P^{(0)}_{Z_j} \| P^{(1)}_{Z_j})$. Iterating from $j = t$ to $u-1$ yields $\chi^2(P^{(0)}_{Z_u} \| P^{(1)}_{Z_u}) \leq \Attn(t \to u) \Delta^2$. Since $Y_u = g_u(Z_u)$ is a function of $Z_u$, the data processing inequality gives $\chi^2(P^{(0)}_{Y_u} \| P^{(1)}_{Y_u}) \leq \chi^2(P^{(0)}_{Z_u} \| P^{(1)}_{Z_u})$.
\Halmos
\end{proof}

\begin{remark}[Imperfect Inspections]
\label{rem:imperfect-inspection}
Definition~\ref{def:inspection-observation} models inspections as noiseless functions of the abstract state.
In practice, inspections may have false positives and negatives: the observation becomes $Y_u \sim G_u(\cdot \mid Z_u)$ for a noisy channel $G_u$ with $\chi^2$-contraction coefficient $\eta_{\chi^2}(G_u) \leq 1$.
Proposition~\ref{prop:inspection-decay} then generalizes to $\chi^2(P^{(0)}_{Y_u} \| P^{(1)}_{Y_u}) \leq \eta_{\chi^2}(G_u) \cdot \Attn(t \to u) \cdot \Delta^2$.
In the information-distance framework of Section~\ref{subsec:heterogeneous-opt}, this adds a penalty $\ln(1/\eta_{\chi^2}(G_u))$ at each inspection, effectively shortening the feasible segment length.
Higher-fidelity inspections (larger $\eta_{\chi^2}(G_u)$, closer to~$1$) incur a smaller penalty.
\end{remark}

\begin{proof}{Proof of Corollary~\ref{cor:inspection-sample-lb}.}
Apply the testing lower bound from Theorem~\ref{thm:signal_decay} with the effective horizon replaced by $u - t$.
\Halmos
\end{proof}

\begin{proof}{Proof of Proposition~\ref{prop:largest-gap-lb}.}
Let $t^\star$ be the first step in the largest segment: $t^\star = t_{i^\star}$ where $i^\star \in \arg\max_i (t_{i+1} - t_i)$. The next inspection occurs at $u^\star = t_{i^\star+1}$, so $u^\star - t^\star = L(\mathcal{T})$. Applying Corollary~\ref{cor:inspection-sample-lb} with $u = u^\star$ yields the bound.
\Halmos
\end{proof}

\begin{proof}{Proof of Theorem~\ref{thm:uniform-optimal}.}
Any schedule with $m$ intermediate inspections partitions the horizon into $m+1$ segments with integer lengths $\ell_0,\ldots,\ell_m$ satisfying $\sum_{i=0}^{m}\ell_i = H$. Hence $\max_i \ell_i \ge \lceil H/(m+1)\rceil$. For the near-uniform schedule $t_i=\lfloor iH/(m+1)\rfloor$, each segment length $t_{i+1}-t_i$ is either $\lfloor H/(m+1)\rfloor$ or $\lceil H/(m+1)\rceil$, so $\max_i (t_{i+1}-t_i)=\lceil H/(m+1)\rceil$, achieving the lower bound.
\Halmos
\end{proof}

\begin{proof}{Proof of Corollary~\ref{cor:minimax-lb-m}.}
Under minimax-optimal placement with $m$ inspections, Theorem~\ref{thm:uniform-optimal} gives $L(\mathcal{T})=\lceil H/(m+1)\rceil$. Substituting this into Proposition~\ref{prop:largest-gap-lb} yields the stated bound.
\Halmos
\end{proof}

\begin{proof}{Proof of Corollary~\ref{cor:min-inspections}.}
Feasibility requires $L(\mathcal{T}) \leq H_{\mathrm{crit}}(\epsilon)$. With optimal spacing, $L(\mathcal{T}) = \lceil H/(m+1) \rceil$, so $(m+1) \geq H / H_{\mathrm{crit}}(\epsilon)$.
\Halmos
\end{proof}

\begin{proof}{Proof of Lemma~\ref{lem:segment-feasible}.}
Corollary~\ref{cor:inspection-sample-lb} implies that step $t_i$ is $(n,\epsilon)$-distinguishable using the inspection at $t_{i+1}$ only if
\[
n \geq \frac{(1-\epsilon)^2}{\Attn(t_i \to t_{i+1}) \Delta^2},
\]
equivalently $W([t_i, t_{i+1})) \leq \ln(n\Delta^2) - 2\ln(1-\epsilon)$. If the inequality holds, then for any $t\in\{t_i,\ldots,t_{i+1}-1\}$ we have $\Attn(t \to t_{i+1}) \geq \Attn(t_i \to t_{i+1})$ (the product runs over fewer factors in $(0,1]$), so the required sample size for step $t$ is no larger than for step $t_i$. Hence every step in the segment is $(n,\epsilon)$-distinguishable. Conversely, if step $t_i$ is $(n,\epsilon)$-distinguishable using the inspection at $t_{i+1}$, then the displayed inequality must hold by Corollary~\ref{cor:inspection-sample-lb}.
\Halmos
\end{proof}

\begin{proof}{Proof of Theorem~\ref{thm:greedy-opt}.}
Let the greedy schedule start at $t_0^g=0$ and choose each $t_{i+1}^g$ as the largest index such that $W([t_i^g,t_{i+1}^g))\le \Gamma$. Consider any feasible schedule with inspection times $0=t_0<t_1<\cdots$. Because it is feasible, it must satisfy $W([0,t_1))\le \Gamma$, hence by maximality of $t_1^g$ we have $t_1\le t_1^g$. Now apply the same argument to the residual instance starting at $t_1^g$: any feasible continuation from $t_1$ cannot place its next inspection beyond the greedy choice from $t_1^g$ without violating the constraint $W(\cdot)\le \Gamma$. Formally, if $t_j \leq t_j^g$ for some $j$, then any feasible $t_{j+1}$ from $t_j$ satisfies $W([t_j, t_{j+1})) \leq \Gamma$, hence $W([t_j^g, t_{j+1})) \leq W([t_j, t_{j+1})) \leq \Gamma$ (since removing leading terms with $w_t \geq 0$ cannot increase the sum), so $t_{j+1} \leq t_{j+1}^g$ by maximality of $t_{j+1}^g$. Iterating, after $k$ greedy inspections the greedy schedule covers at least as much of the horizon as any feasible schedule with $k$ inspections. Therefore no feasible schedule can reach $H$ with fewer inspections than greedy, so greedy minimizes the number of inspections.
\Halmos
\end{proof}

\begin{proof}{Proof of Proposition~\ref{prop:budget-lb}.}
Corollary~\ref{cor:minimax-lb-m} requires $n \geq (1-\epsilon)^2 / (\eta^{\lceil H/(m+1)\rceil}\Delta^2)$. Multiplying by the per-trajectory cost gives the result.
\Halmos
\end{proof}

\begin{proof}{Proof of Corollary~\ref{cor:poly-density}.}
Feasibility with $n \leq H^p$ requires $H^p \gtrsim \eta^{-\lceil H/(m+1)\rceil}$. Taking logarithms: $p \ln H \gtrsim \lceil H/(m+1)\rceil \ln(1/\eta)$, so $m + 1 \gtrsim (H / \ln H) \cdot (\ln(1/\eta) / p) = \Omega(H / \log H)$.
\Halmos
\end{proof}

\section{Practical Examples}
\label{app:inspection-examples}

\paragraph{Interpreting the initial divergence $\Delta^2$.}
The parameter $\Delta^2 = \chi^2(P^{(0)}_{Z_t} \| P^{(1)}_{Z_t})$ measures how different the abstract-state distributions are under ``step correct'' ($H_0$) versus ``step erroneous'' ($H_1$).
In manufacturing, $\Delta^2$ reflects the separation between the grade distribution produced by a correctly operating stage versus a malfunctioning one; it can be estimated from pilot audits by running the stage under controlled conditions (normal vs.\ degraded), binning outputs into quality grades, and computing the empirical $\chi^2$-divergence.
Larger $\Delta^2$ means defects are easier to detect locally, extending the critical horizon; small $\Delta^2$ (subtle defects) shortens it.
When $\Delta^2$ is unknown, the critical horizon formula can be evaluated at a conservative lower bound to yield a worst-case inspection schedule.

\begin{example}[Manufacturing Grade Transitions]
\label{ex:manufacturing-kernel}
Consider a three-grade abstraction $Z_t \in \{\mathrm{A},\mathrm{B},\mathrm{C}\}$ where $\mathrm{A}$ denotes ``within specification,'' $\mathrm{B}$ denotes ``marginal/reworkable,'' and $\mathrm{C}$ denotes ``defective,'' in a setting with rework and defect masking. A downstream step such as polishing followed by retest can partially repair marginal units while occasionally degrading good ones. A representative grade-to-grade kernel is
\[
K =
\begin{pmatrix}
0.85 & 0.14 & 0.01 \\
0.55 & 0.35 & 0.10 \\
0.20 & 0.30 & 0.50
\end{pmatrix},
\]
where rows index the incoming grade and columns the outgoing grade.
The minimum pairwise row overlap is achieved by the $\mathrm{A}$ and $\mathrm{C}$ rows:
\[
\begin{aligned}
\sum_{z'}\min\bigl\{K(z'\mid \mathrm{A}),\,K(z'\mid \mathrm{C})\bigr\}
&= \min\{0.85,0.20\}+\min\{0.14,0.30\}+\min\{0.01,0.50\}\\
&= 0.20+0.14+0.01 = 0.35,
\end{aligned}
\]
so $\alpha(K)=0.35$ and Proposition~\ref{prop:dobrushin} gives $\eta_{\chi^2}(K)\le 1-\alpha(K)=0.65$. Even when the underlying physical transformation is repeatable, coarse grading and partial defect masking produce strict per-stage contraction on the abstract state space.
\end{example}

\begin{example}[Semiconductor Fabrication]
\label{ex:semiconductor}
A semiconductor process has $H = 50$ stages with $\eta = 0.85$. Terminal testing costs \$10/wafer; intermediate metrology costs \$50/inspection. With $n = 10{,}000$ wafers, $\Delta^2 = 0.2$, and target error $\epsilon=0.1$:
\[
 H_{\mathrm{crit}}(\epsilon) = \frac{\ln(2000) - 2\ln(0.9)}{\ln(1/0.85)} \approx 48.1 < 50.
\]
Since $H > H_{\mathrm{crit}}(\epsilon)$, outcome-only evaluation is insufficient. Corollary~\ref{cor:min-inspections} gives $m \geq \lceil 50/48.1 \rceil - 1 = 1$. One inspection at the midpoint ($t_1 = 25$) reduces the maximal gap to 25, bringing all stages within the critical horizon. The inspection adds \$50/wafer but makes the study feasible; without it, the required sample size would exceed the budget.
\end{example}

\begin{example}[Service Operations]
\label{ex:service}
A customer journey has $H = 50$ touchpoints. Early touchpoints (awareness, consideration) have high variability ($\eta_t = 0.6$ for $t \leq 10$), while later touchpoints are more deterministic ($\eta_t = 0.95$ for $t > 10$). With $n = 1{,}000$ customers, $\Delta^2 = 0.3$, and $\epsilon = 0.1$, the feasibility threshold is $\Gamma = \ln(300) - 2\ln(0.9) \approx 5.9$.

The 11 high-contraction steps contribute $W([0, 11)) = 11 \times \ln(1/0.6) \approx 5.6$, nearly exhausting the budget. The greedy algorithm extends through 5 additional low-contraction steps before exceeding $\Gamma$, placing an inspection at $t_1 = 16$ (since $W([0, 16)) \approx 5.88 \leq \Gamma \approx 5.91$ but $W([0, 17)) \approx 5.93 > \Gamma$). The remaining 34 low-contraction steps contribute only $W([16, 50)) = 34 \times \ln(1/0.95) \approx 1.7 < \Gamma$, fitting in one segment. The optimal schedule has one inspection, placed where contraction is strongest, demonstrating that non-uniform placement arises naturally when contraction varies.
\end{example}

\section{Budgeting Tradeoffs and Inspection Design Procedure}
\label{app:budgeting}

Section~\ref{sec:inspection-design} characterizes inspection placement in information units. This appendix addresses two operational questions: how to trade off sample size against inspection frequency, and when to prefer additional inspections over additional rollouts.

\paragraph{Sample--inspection tradeoff.}
Let $c_{\mathrm{out}}$ denote the per-trajectory cost of terminal evaluation and $c_{\mathrm{insp}}$ the per-trajectory cost of each intermediate inspection. Collecting $n$ trajectories with $m$ inspections each costs $B = n(c_{\mathrm{out}} + m \cdot c_{\mathrm{insp}})$. Proposition~\ref{prop:budget-lb} reveals that increasing $m$ reduces the sample requirement exponentially but raises per-trajectory cost linearly. A simple rule is to increase $m$ until the maximal gap $\lceil H/(m+1)\rceil$ falls below $H_{\mathrm{crit}}(\epsilon)$, at which point further inspections yield negligible sample savings. Corollary~\ref{cor:poly-density} implies that inspection effort must grow nearly linearly with process depth to avoid exponential sample requirements.

\paragraph{Width versus inspection.}
A practitioner choosing between increasing width $W$ (more rollouts or replicates) and adding an inspection faces the following comparison.
By Corollary~\ref{cor:width-horizon}, doubling $W_{\mathrm{eff}}$ extends the critical horizon by $\ln 2 / \ln(1/\eta)$ steps, but correlation caps $W_{\mathrm{eff}}$ at $1/\rho$, so no amount of additional rollouts can extend the horizon beyond $\ln(n \Delta^2 / \rho) / \ln(1/\eta)$.
By contrast, a single well-placed inspection halves the maximal gap (Theorem~\ref{thm:uniform-optimal}), extending the effective horizon by a factor of~$2$.
For processes where $H > \ln(n \Delta^2 / \rho) / \ln(1/\eta)$, width alone is provably insufficient and intermediate inspection is the only path to feasibility.
In practice, the marginal value of one additional inspection exceeds that of doubling $W$ whenever the current maximal gap $L(\mathcal{T})$ exceeds $H_{\mathrm{crit}}(\epsilon)$.

\paragraph{Summary procedure.}
The following procedure synthesizes the results of Section~\ref{sec:inspection-design} and the tradeoffs above.
\vspace{12pt}
\begin{center}
\fbox{\parbox{0.92\textwidth}{
\vspace{6pt}
\textbf{Inspection Design Procedure}
\begin{enumerate}[leftmargin=*,itemsep=2pt,topsep=4pt]
\item \emph{Estimate contraction:} Bound the contraction coefficient $\eta$ (or $\{\eta_t\}$) using A/B comparisons, cohort tracking, or variance decay measurements.
\item \emph{Compute feasibility threshold:} $\Gamma(n,\Delta^2,\epsilon)=\ln(n\Delta^2)-2\ln(1-\epsilon)$.
Equivalently, under homogeneous contraction compute $H_{\mathrm{crit}}(\epsilon)=\Gamma/\ln(1/\eta)$.
\item \emph{Choose inspections for feasibility:} Under homogeneous contraction, choose the smallest $m$ satisfying $m \geq \lceil H / H_{\mathrm{crit}}(\epsilon)\rceil - 1$.
Under heterogeneous contraction, choose the smallest $m$ produced by the greedy rule in Theorem~\ref{thm:greedy-opt}.
\item \emph{Place inspections:} Uniformly at $t_i = \lfloor iH/(m+1) \rfloor$ under homogeneous contraction (Theorem~\ref{thm:uniform-optimal}), or via greedy information-distance placement under heterogeneous contraction (Theorem~\ref{thm:greedy-opt}).
\item \emph{Budget the study:} Set $n$ and $m$ to satisfy $B=n(c_{\mathrm{out}}+m c_{\mathrm{insp}})$ and compare against the lower bound in Proposition~\ref{prop:budget-lb}.
\end{enumerate}
\vspace{6pt}
}}
\end{center}
\vspace{12pt}
\paragraph{Practical implications.}
Three insights emerge. First, intermediate inspection is often an investment in \emph{feasibility}, not merely precision: a single well-placed checkpoint can make an otherwise impossible attribution problem tractable (Example~\ref{ex:semiconductor}). Second, uniform spacing provides a robust default when contraction rates are uncertain (Theorem~\ref{thm:uniform-optimal}); when stage-specific estimates are available, the greedy algorithm (Theorem~\ref{thm:greedy-opt}) concentrates inspections where contraction is strongest. Third, inspection effort must scale with process depth: organizations extending their value chains cannot hold inspection budgets constant (Corollary~\ref{cor:poly-density}).

\section{Numerical Validation: Full Experimental Details}
\label{app:numerical-details}

We validate five core theoretical predictions using synthetic Markov chains with controlled parameters. For effective width and objective mismatch, we additionally demonstrate the predicted phenomena on LLM reasoning chains from GSM8K. Synthetic experiments allow exact control of the contraction coefficient $\eta$, providing clean verification of the bounds; LLM experiments confirm that the phenomena occur in practice. The experiments cover signal decay (Section~\ref{subsec:num-signal}), effective width under correlation (Section~\ref{subsec:num-width}), optimal inspection design (Section~\ref{subsec:num-inspection}), the critical horizon phase transition (Section~\ref{subsec:num-horizon}), and the objective mismatch between additive and multiplicative rewards (Section~\ref{subsec:num-mismatch}). The code and data to support the numerical experiments in this paper will be made publicly available upon acceptance.

\subsection{Signal Decay Bound}
\label{subsec:num-signal}

Theorem~\ref{thm:signal_decay} establishes that $\chi^2$-divergence between outcome distributions decays exponentially with distance from outcome:
\[
\chi^2(P_{S_H} \| Q_{S_H}) \;\leq\; \eta^{H-t} \cdot \chi^2(P_{S_t} \| Q_{S_t}).
\]
To test this bound, we construct synthetic Markov chains with 10 abstract states and exact contraction coefficient $\eta \in \{0.7, 0.8, 0.9, 0.95\}$, using the kernel $K = \sqrt{\eta}\, I + (1-\sqrt{\eta})\,(1/10)\,\mathbf{1}\mathbf{1}^\top$, a convex combination of the identity and the uniform-mixing matrix whose $\chi^2$-contraction coefficient is exactly $\eta$. For each step $t$, we set $P_t = \delta_0$ (point mass on state~0) and $Q_t = \pi$ (the uniform stationary distribution), giving $\chi^2(P_t \| Q_t) = 9$. We propagate both distributions to the terminal state through the contractive kernel and measure the resulting $\chi^2$-divergence at the state distribution level.

Figure~\ref{fig:signal_decay} plots the measured divergence against distance to outcome $(H-t)$ on a logarithmic scale for $H = 40$, showing all four values of~$\eta$. The dashed lines show the theoretical prediction $\eta^{H-t} \cdot \Delta^2$ with $\Delta^2 = 9$. Because the reference distribution $Q_t$ equals the stationary distribution~$\pi$, the bound is achieved with equality: measured divergences lie exactly on the theoretical lines. The measurements follow perfect linear trends on the log scale with distinct slopes for each~$\eta$, confirming exponential decay at the predicted rates. Results for other horizons ($H \in \{10, 20\}$) are identical in pattern.

\begin{figure}
\FIGURE
{\includegraphics[width=0.57\textwidth]{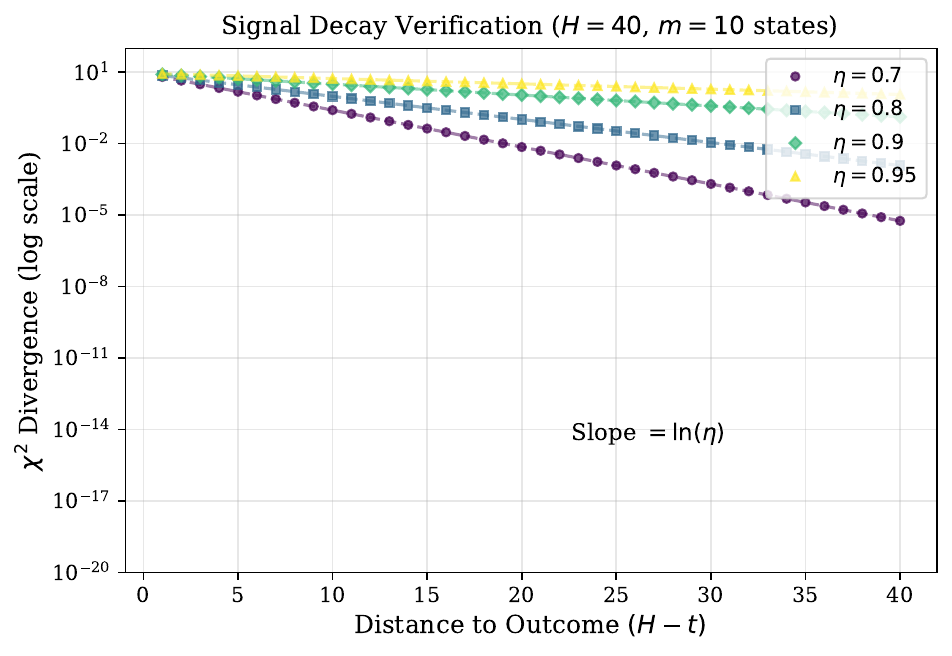}}
{Validation of the Signal Decay Bound (Theorem~\ref{thm:signal_decay}). Measured $\chi^2$-divergences (dots) match the theoretical prediction $\eta^{H-t} \cdot \Delta^2$ (dashed lines) exactly for all four contraction rates, confirming exponential decay with rate~$\eta$. Setup: 10-state Markov chains with $H = 40$ and $\Delta^2 = 9$.\label{fig:signal_decay}}
{}
\end{figure}

\subsection{Effective Width Under Correlation}
\label{subsec:num-width}

Theorem~\ref{thm:correlated} shows that positively correlated rollouts yield diminished variance reduction:
\[
W_{\mathrm{eff}} \;=\; \frac{W}{1 + (W-1)\rho},
\]
where $\rho$ is the pairwise correlation between rollout outcomes. As $W \to \infty$, effective width saturates at $1/\rho$, meaning that beyond a certain point, additional rollouts provide negligible benefit.

We test this prediction in two settings. In synthetic experiments, we induce correlation $\rho = 0.15$ by adding a shared noise component across rollouts, then measure variance reduction at $W \in \{1, 4, 16, 64, 256\}$. In LLM experiments, we generate $W \in \{1, 4, 16\}$ rollouts per GSM8K problem and measure the empirical pairwise correlation ($\bar\rho \approx 0.49$, averaged across $W \in \{4,16\}$) and variance reduction.

Figure~\ref{fig:width} shows the results. In both panels, measured effective width (red dots) closely tracks the theoretical prediction (blue curve). The synthetic experiment confirms saturation: at $W = 256$ rollouts, effective width reaches only $6.6$, compared to the theoretical maximum of $1/\rho = 6.7$. The LLM experiment shows even stronger saturation due to higher correlation: $W = 16$ nominal rollouts yield effective width of only $2.2$.

\begin{figure}
\FIGURE
{\includegraphics[width=0.95\textwidth]{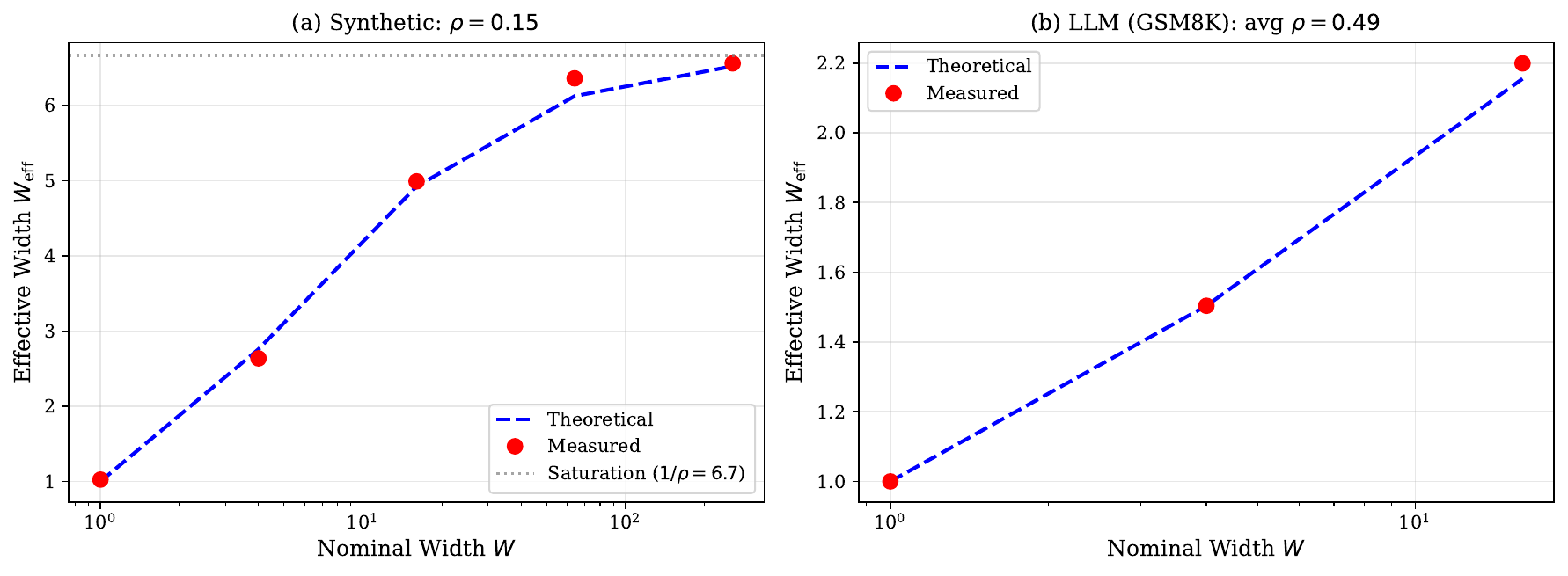}}
{Validation of the Effective Width Formula (Theorem~\ref{thm:correlated}). Measured effective width (dots) matches the theoretical prediction $W/(1+(W-1)\rho)$ (curve) and saturates at $1/\rho$. (a) Synthetic with $\rho = 0.15$. (b) LLM with $\rho \approx 0.49$.\label{fig:width}}
{}
\end{figure}

\subsection{Optimal Inspection Design}
\label{subsec:num-inspection}

Theorem~\ref{thm:uniform-optimal} establishes that under homogeneous contraction, uniformly spaced checkpoints minimize the worst-case attribution error across all steps. The intuition is that any gap between checkpoints allows signal to decay; uniform spacing ensures no gap is larger than necessary.

We test this with $H = 20$ steps, contraction coefficient $\eta = 0.9$, and $m = 3$ intermediate checkpoints, comparing four schedules: uniform placement at $\{5, 10, 15\}$, front-loaded at $\{2, 4, 6\}$, back-loaded at $\{14, 16, 18\}$, and random at $\{2, 13, 14\}$. For each schedule, we compute the worst-case attribution error across all 20 steps.

Figure~\ref{fig:inspection} shows the results. Uniform spacing achieves worst-case error of 0.41, while the alternatives achieve 0.69 (random) to 0.77 (front- and back-loaded). This represents a 40--47\% reduction in worst-case error, confirming that uniform spacing is not merely a reasonable heuristic but the optimal choice under the conditions of the theorem.

\begin{figure}
\FIGURE
{\includegraphics[width=0.6175\textwidth]{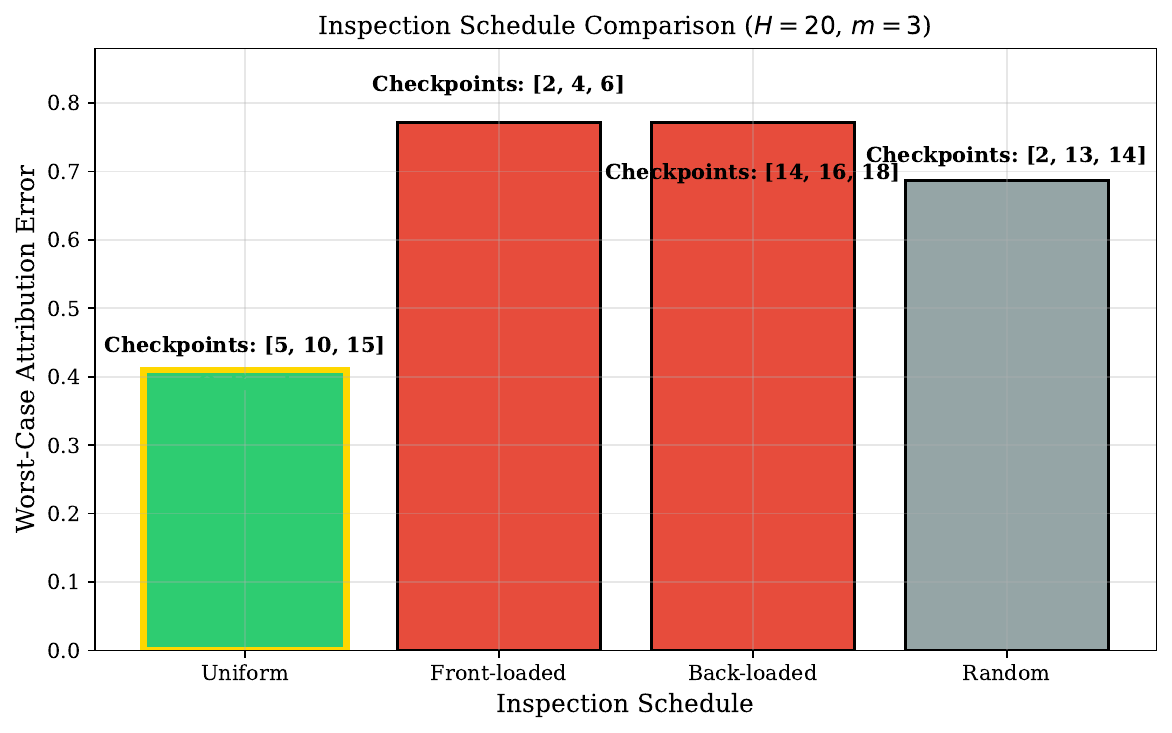}}
{Validation of Optimal Inspection Design (Theorem~\ref{thm:uniform-optimal}). Uniform checkpoint spacing (green) achieves 40--47\% lower worst-case error than front-loaded, back-loaded, or random schedules. Checkpoint positions shown above each bar.\label{fig:inspection}}
{}
\end{figure}

\subsection{Critical Horizon Phase Transition}
\label{subsec:num-horizon}

Corollary~\ref{cor:horizon} predicts a sharp phase transition: steps within $H_{\mathrm{crit}} \approx \ln(n\Delta^2)/\ln(1/\eta)$ of the outcome can be reliably attributed, while steps beyond this horizon cannot. We test this prediction by measuring attribution accuracy as a function of distance from outcome.

We use $H = 40$ steps, 10 abstract states, $n = 1{,}000$ samples per hypothesis, and $\Delta^2 = 9$. We test two contraction rates: $\eta = 0.7$ (predicted $H_{\mathrm{crit}} \approx 26$) and $\eta = 0.8$ (predicted $H_{\mathrm{crit}} \approx 41$). For each step position, we generate outcome samples under two initial distributions and test whether the step's identity can be recovered.

Figure~\ref{fig:critical_horizon} shows the results. Attribution accuracy for $\eta = 0.7$ drops from 89\% at distance~1 to chance level (50\%) around distance~23, slightly before the theoretical $H_{\mathrm{crit}} \approx 26$. For $\eta = 0.8$, the slower decay rate places $H_{\mathrm{crit}} \approx 41$ beyond the maximum horizon $H = 40$, so accuracy remains above chance throughout but declines steadily. The critical horizon formula provides a tight upper bound on the reach of outcome-based credit assignment: accuracy degrades continuously with distance and reaches chance level at or before $H_{\mathrm{crit}}$, confirming that steps beyond the critical horizon cannot be reliably evaluated. The gap between the empirical transition and the theoretical bound reflects the difference between the simple classifier used here and the information-theoretically optimal test.

\begin{figure}
\FIGURE
{\includegraphics[width=0.6175\textwidth]{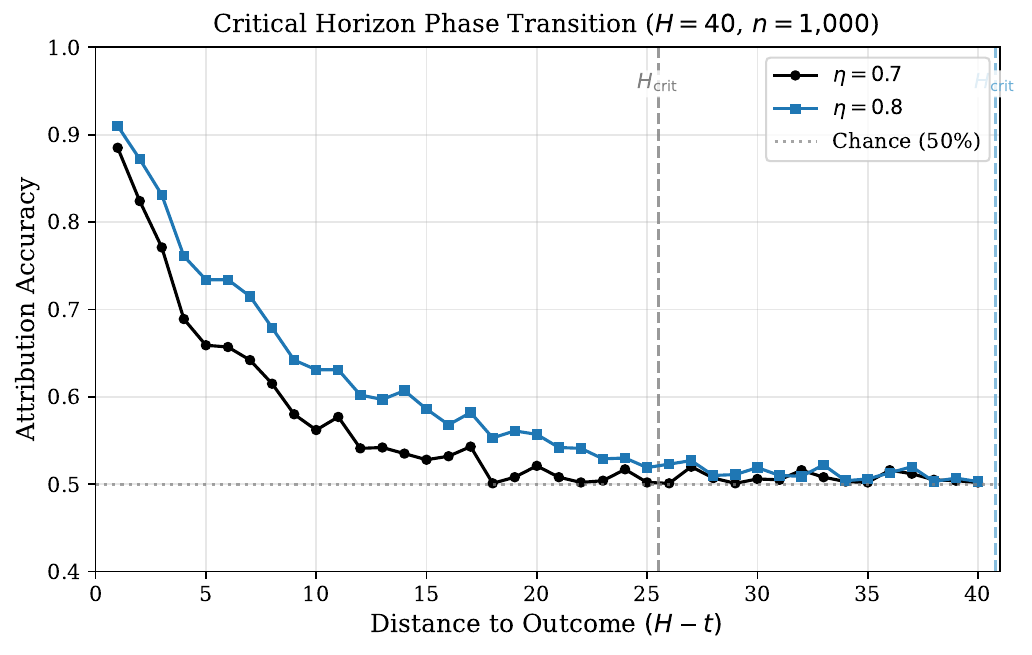}}
{Validation of the Critical Horizon (Corollary~\ref{cor:horizon}). Attribution accuracy drops to chance (50\%) at or before the predicted $H_{\mathrm{crit}}$ (dashed lines) for both contraction rates. For $\eta = 0.7$, $H_{\mathrm{crit}} \approx 26$; for $\eta = 0.8$, $H_{\mathrm{crit}} \approx 41$, which exceeds the horizon $H = 40$, so accuracy remains above chance throughout.\label{fig:critical_horizon}}
{}
\end{figure}

\subsection{Objective Mismatch}
\label{subsec:num-mismatch}

Definition~\ref{def:two_objectives} identifies a fundamental gap between step-quality ($J_{\mathrm{add}}$) and validity ($J_{\mathrm{mult}}$) objectives: a chain can have most steps correct yet still produce a wrong answer. We test whether this ``mostly correct fallacy'' occurs in practice by evaluating 50 GSM8K reasoning chains. We score each step using an automated heuristic (numerical consistency of intermediate calculations); because this heuristic is imperfect, the results below are \emph{illustrative} of the mismatch phenomenon rather than a definitive measurement of its magnitude. We compare additive rewards (fraction of steps rated correct) against a binary validity indicator (1 if the final answer is correct, 0 otherwise).

Figure~\ref{fig:objective_mismatch}(a) plots each chain's additive reward (x-axis) against its validity indicator (y-axis: 1 for correct, 0 for wrong). Dots along the top are chains that produce a correct final answer; dots along the bottom are chains with a wrong final answer. The shaded region marks the ``mostly correct but wrong'' zone: additive reward $\geq 0.80$ yet validity~$= 0$. Eight chains (16\%) fall in this zone. Six of these have \emph{every} step rated correct (additive reward~$= 1.0$) but still produce a wrong final answer, because the step-level heuristic evaluates numerical consistency of intermediate calculations but cannot catch errors in problem setup or final answer extraction. Panel~(b) shows the distribution of additive rewards separately for correct-answer and wrong-answer chains; the dashed line marks the $0.80$ threshold. Both correct-answer and wrong-answer chains concentrate at high additive reward, confirming that step-level scores do not reliably separate valid from invalid reasoning. These chains span 23--48 reasoning steps, so the phenomenon is not an artifact of short chains. An additive objective would assign near-maximal reward to all eight, reinforcing the reasoning patterns that produce wrong answers; the validity indicator correctly identifies them as failures.

\begin{figure}
\FIGURE
{\includegraphics[width=0.95\textwidth]{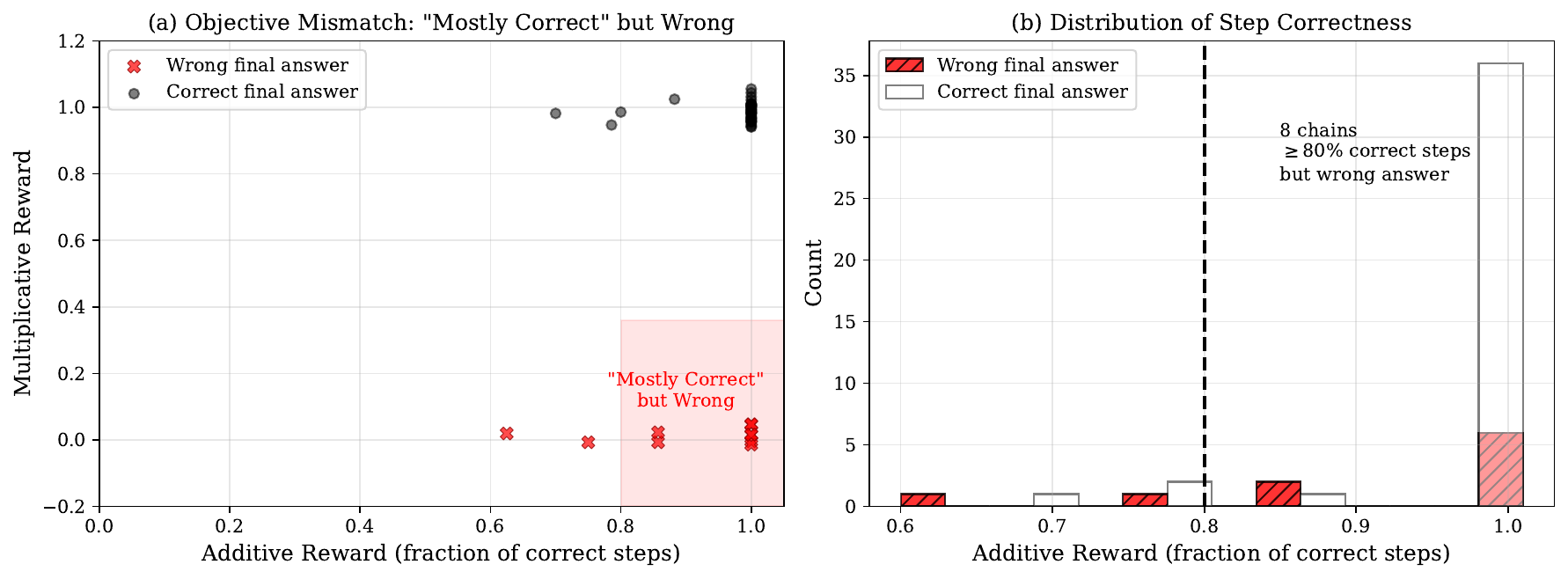}}
{Validation of the Objective Mismatch (Definition~\ref{def:two_objectives}). (a)~Additive vs.\ multiplicative reward for 50 GSM8K chains. The shaded region highlights 8 chains (16\%) that are ``mostly correct'' (high additive reward) but wrong (multiplicative reward~$= 0$). (b)~Distribution of step correctness, showing that wrong-answer chains can have nearly all steps correct.\label{fig:objective_mismatch}}
{}
\end{figure}

\section{Case Study: Supervision Design for Language Model Reasoning}
\label{sec:llm-application}

We now apply the general framework to a domain of significant current interest: training large language models to perform multi-step reasoning. This case study shows how the information-theoretic limits translate into concrete supervision design guidance for AI practitioners and clarifies the connection to intermediate inspection principles (Section~\ref{sec:inspection-design}).

Large language models present a puzzling credit assignment problem: token-level dynamics are deterministic (appending a token to a prefix yields a unique result), yet outcome supervision fails empirically. Value networks trained on terminal rewards produce near-constant outputs for early reasoning steps \citep{vineppo2024}. Our framework resolves this paradox through the state abstraction of Section~\ref{subsec:abstraction}.

Supervision targets semantic content, not surface representation: many distinct token sequences can encode the same meaning, so the abstraction map $\phi$ collapses token-level trajectories into a much smaller semantic state space. Writing $Z_t=\phi(S_t)$ for the semantic content of a partial reasoning chain, different token-level choices can lead to the same $Z_{t+1}$, inducing mixing in the \emph{abstract} Markov kernel even when the underlying token dynamics are deterministic. This many-to-one abstraction is the source of contraction, and it is precisely why outcome-only supervision can fail even when next-token prediction is deterministic. The lumpability condition (Definition~\ref{def:abstraction}) becomes the requirement that semantic transitions depend only on current meaning, not on which specific token sequence expresses it.

\begin{example}[Proof State Abstraction]
\label{ex:proof-llm}
The observable state $S_t$ is the complete text of a partial proof. The semantic state $Z_t=\phi(S_t)$ captures which statements have been established, what remains to prove, and the current proof strategy. Different phrasings of equivalent proofs map to the same semantic state. The semantic space $\mathcal{Z}$ is much smaller than the space of all token sequences, but it is the relevant space for evaluating correctness.
\end{example}

The supervision design question in this section is: \emph{given a target reasoning depth, what feedback should be provided (and where) to keep early-step attribution within the critical horizon?} We develop the answer in a logical sequence: (i) relate exploration mechanisms (temperature) to contraction and hence critical horizon; (ii) derive the prescription that local feedback (process supervision / verification) plays the same role as inspections in Section~\ref{sec:inspection-design}; and (iii) interpret width-based methods through the lens of variance reduction and correlation limits.

\subsection{Temperature Controls Contraction and the Critical Horizon}
\label{subsec:llm-temperature}

For language models, a primary control knob for contraction is sampling temperature. A softmax policy with temperature $\tau$ selects actions according to $\pi_\tau(a \mid z) \propto \exp(\ell(z,a)/\tau)$, where $\ell(z,a)$ denotes the logits. Low temperature concentrates probability on high-logit actions; high temperature spreads probability toward uniform. The policy induces a semantic transition kernel
\[{
K_\tau(z' \mid z) \;=\; \sum_{a \in \mathcal{A}} P(z' \mid z,a)\,\pi_\tau(a\mid z),
}\]

where $P(z'\mid z,a)$ is the probability of reaching semantic state $z'$ when taking token-level action $a$ from semantic state $z$. Even if token-level transitions are deterministic, the semantic transition is stochastic because $\phi$ collapses many token sequences into the same semantic state.

\begin{proposition}[Temperature Extremes and $\chi^2$-Contraction]
\label{prop:softmax}
Let $\{K_\tau\}_{\tau>0}$ be the semantic transition kernels induced by softmax sampling at temperature $\tau$.
\begin{enumerate}
\item (\emph{High-temperature mixing.}) If, as $\tau\to\infty$, $K_\tau(\cdot\mid z)\to \nu(\cdot)$ for all $z$ for some distribution $\nu$ independent of $z$, then $\eta_{\chi^2}(K_\infty)=0$.
\item (\emph{Low-temperature determinism.}) If, as $\tau\to 0$, $K_\tau(z'\mid z)\to \mathbf{1}[z'=f(z)]$ for a bijection $f:\mathcal{Z}\to\mathcal{Z}$, then $\eta_{\chi^2}(K_0)=1$.
\end{enumerate}
\end{proposition}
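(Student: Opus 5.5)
The plan is to interpret $K_\infty$ and $K_0$ as the limiting kernels, $K_\infty(\cdot\mid z):=\nu$ and $K_0(z'\mid z):=\mathbf{1}[z'=f(z)]$, and compute $\eta_{\chi^2}$ directly from Definition~\ref{def:contraction} in each case. This mirrors the first two items of Example~\ref{ex:contraction-extremes}. Convergence enters only in identifying the limit kernel. If one also wants continuity statements about $\eta_{\chi^2}(K_\tau)$ itself, I would treat that as a separate, optional remark. On a finite $\mathcal{Z}$, entrywise convergence of $K_\tau$ gives convergence of $PK_\tau$ for every $P$, but $\eta_{\chi^2}$ is only lower semicontinuous in general. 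So I will state the result for the limit kernels, as written.

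For part~1, take any $P\ll Q$ with $0<\chi^2(P\|Q)<\infty$. Then $PK_\infty(y)=\sum_z P(z)\nu(y)=\nu(y)$, and likewise $QK_\infty=\nu$. Hence $\chi^2(PK_\infty\|QK_\infty)=\chi^2(\nu\|\nu)=0$. Every ratio in the supremum is therefore zero, so $\eta_{\chi^2}(K_\infty)=0$. There is no obstacle here beyond noting that $\nu$ does not depend on $z$.

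For part~2, $K_0$ is the deterministic map $z\mapsto f(z)$, so $PK_0=P\circ f^{-1}$ is the pushforward. Because $f$ is a bijection, $PK_0(y)=P(f^{-1}(y))$ and $QK_0(y)=Q(f^{-1}(y))$. Absolute continuity is preserved, and reindexing the sum by $y=f(z)$ gives
\[
\chi^2(PK_0\|QK_0)=\sum_y \frac{\bigl(P(f^{-1}(y))-Q(f^{-1}(y))\bigr)^2}{Q(f^{-1}(y))}=\chi^2(P\|Q).
\]
Every admissible ratio thus equals $1$. Since the supremum is over a nonempty set (for example, two distinct distributions with full support when $|\mathcal{Z}|\ge 2$), $\eta_{\chi^2}(K_0)=1$. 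The upper bound $\eta\le 1$ also follows independently from the ordinary data processing inequality.

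The only step needing care is the interpretive one, namely that the statement concerns the limit kernels rather than $\lim_\tau \eta_{\chi^2}(K_\tau)$. I would make this explicit at the start. I would also note the edge case $|\mathcal{Z}|=1$, where the supremum is over an empty set; in that case I would assume $|\mathcal{Z}|\ge 2$ for part~2.
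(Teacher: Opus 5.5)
Your proposal is correct and follows essentially the same route as the paper: take the limit kernels, note that $K_\infty$ sends every input distribution to $\nu$ (so the output $\chi^2$-divergence vanishes), and note that $K_0$ is a permutation kernel (so $\chi^2$ is preserved exactly). Your added care is sound and is left implicit in the paper: you read the statement as concerning the limit kernels rather than $\lim_\tau \eta_{\chi^2}(K_\tau)$, you check that the supremum is over a nonempty set, and you flag the degenerate case $|\mathcal{Z}|=1$.
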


\begin{proof}{Proof.}
For Item~(1), if $K_\tau(\cdot\mid z)\to \nu(\cdot)$ for all $z$, then $PK_\infty=QK_\infty=\nu$ for any input distributions $P,Q$, hence $\chi^2(PK_\infty\|QK_\infty)=0$ and $\eta_{\chi^2}(K_\infty)=0$.
For Item~(2), if $K_0(z'\mid z)=\mathbf{1}[z'=f(z)]$ for a bijection $f$, then $K_0$ is a permutation kernel, so $\chi^2(PK_0\|QK_0)=\chi^2(P\|Q)$ and $\eta_{\chi^2}(K_0)=1$.
\Halmos
\end{proof}

Proposition~\ref{prop:softmax} clarifies the two extreme regimes. High temperature drives semantic transitions toward state-independent mixing ($\eta_{\chi^2} = 0$, maximal information loss), while low temperature drives them toward a deterministic permutation ($\eta_{\chi^2} = 1$, no information loss). Between these extremes, higher temperature increases action diversity and can raise the minimum transition probability $p_{\min}(\tau) := \min_{z,z'} K_\tau(z' \mid z)$. When this occurs, Proposition~\ref{prop:diversity} yields the tighter bound $\eta_{\chi^2}(K_\tau) \leq 1 - |\mathcal{Z}| \cdot p_{\min}(\tau)$, so contraction strengthens and signal decay accelerates with temperature.

The implication is not that one should always lower temperature. Rather, when exploration is necessary and contraction is non-negligible, outcome-only supervision cannot reliably attribute errors to early reasoning steps beyond the critical horizon. In that regime, supervision must be intermediate: provide process labels or verification signals at internal points of the chain so that each early step lies within the attenuation distance of an informative checkpoint, exactly as in the inspection design problem of Section~\ref{sec:inspection-design}.

\begin{example}[Contraction in a Reasoning Model]
\label{ex:eta_computation_llm}
Consider a semantic space with three states representing reasoning validity: ``on track,'' ``minor error,'' and ``fatal error.'' Suppose the policy induces the transition kernel
\[
K = \begin{pmatrix}
0.7 & 0.2 & 0.1 \\
0.3 & 0.4 & 0.3 \\
0.1 & 0.2 & 0.7
\end{pmatrix}.
\]
The minimum entry is $p_{\min}=0.1$, so Proposition~\ref{prop:diversity} gives $\eta \le 1 - |\mathcal{Z}|\times 0.1 = 1 - 3\times 0.1 = 0.7$. With $\eta=0.7$ and $n=10^6$ samples at $\Delta^2=0.1$, the critical horizon is approximately 32 steps.
\end{example}

When the semantic kernel is unknown, practitioners can estimate $\eta$ by sampling continuations from varied prefixes and measuring distributional overlap (yielding an upper bound via Proposition~\ref{prop:diversity}), or by computing the empirical correlation between value estimates at consecutive steps as a proxy for contraction strength.

\subsection{Process Supervision and Verification as Intermediate Inspections}
\label{subsec:llm-prm}

The inspection design framework of Section~\ref{sec:inspection-design} translates directly into supervision design for LLM reasoning. A process reward model that evaluates the semantic correctness of reasoning step $u$ produces an observation $Y_u = g_u(Z_u)$, exactly the inspection output of Definition~\ref{def:inspection-observation}: the evaluator $g_u$ is the human annotator or automated verifier applied to the semantic state. Human-annotated labels \citep{lightman2023lets} and automated verifiers (code execution, proof checkers, tool use) differ in cost but play the same structural role: each resets the signal decay clock so that nearby steps lie within the critical horizon of an informative checkpoint.

With this identification, the per-segment feasibility condition (Lemma~\ref{lem:segment-feasible}) governs how densely process labels must be spaced: consecutive labels at reasoning steps $t_i$ and $t_{i+1}$ support reliable credit assignment for all intervening steps if and only if the cumulative semantic contraction satisfies $W([t_i, t_{i+1})) \leq \Gamma(n, \Delta^2, \epsilon)$. Steps whose distance to the nearest label exceeds $H_{\mathrm{crit}}(\epsilon)$ require exponentially many samples to evaluate reliably; for such steps, process labels are not merely helpful but necessary.

This mapping also clarifies the cost structure. Human annotation provides high-quality labels but at high per-label cost $c_{\mathrm{insp}}$. Systems such as Math-Shepherd \citep{wang2024math} reduce this cost by generating process labels via Monte Carlo rollouts from intermediate states, substituting computation for human effort. The resulting tradeoff is precisely the joint sample-inspection budget problem of Proposition~\ref{prop:budget-lb}: given a total budget $B = n(c_{\mathrm{out}} + m \cdot c_{\mathrm{insp}})$, the designer chooses the number of labeled checkpoints $m$ and the number of training trajectories $n$ to minimize worst-case attribution error.

\subsection{Width-Based Methods as Variance Reduction Within the Horizon}
\label{subsec:llm-width}

Several prominent LLM training methods rely on width: GRPO (Group Relative Policy Optimization; \citealp{deepseek2024}) samples $W$ rollouts per prompt and compares outcomes to compute policy gradients; best-of-$W$ strategies generate $W$ completions and select the highest-scoring one. In the language of Section~\ref{sec:width}, the prompt serves as the starting state $s$, each completion is a rollout with binary outcome $R_j$, and the empirical success rate $\hat{V}_W(s) = W^{-1}\sum_j R_j$ is the multi-rollout estimator of Definition~\ref{def:multirollout}. These methods therefore inherit both the variance reduction of Proposition~\ref{prop:width} and the correlation cap of Theorem~\ref{thm:correlated}: effective width saturates at $W_{\mathrm{eff}} \leq 1/\rho$, where $\rho$ reflects shared reasoning patterns activated by the prompt and encoded in the pretrained weights.

Applying Corollary~\ref{cor:width-horizon} with LLM parameters reveals a tension with the temperature analysis of Section~\ref{subsec:llm-temperature}. Higher temperature lowers $\rho$ by diversifying continuations, which increases $W_{\mathrm{eff}}$ and extends the critical horizon $H_{\mathrm{crit}} = \ln(n \cdot W_{\mathrm{eff}} \cdot \Delta^2)/\ln(1/\eta)$. But Section~\ref{subsec:llm-temperature} showed that higher temperature also strengthens semantic contraction (smaller $\eta$), which increases $\ln(1/\eta)$ in the denominator and \emph{shrinks} $H_{\mathrm{crit}}$. Width-based methods therefore face diminishing returns at both extremes of the temperature dial: low temperature yields high correlation ($W_{\mathrm{eff}} \approx 1$), while high temperature yields strong contraction. The practical operating regime is an intermediate temperature where both effects are moderate.

The preceding analysis assumes that errors are diverse: different rollouts make different mistakes, so averaging reduces noise. When rollouts share systematic error patterns, the effective correlation $\rho$ increases and width-based methods become less effective, reinforcing the need for process supervision in deep reasoning chains.

\end{APPENDICES}

\end{document}